\pgfplotsset{width=10cm,compat=1.9}
\declaretheorem[name=Lemma]{lemma}
\declaretheorem[name=Definition]{definition}
\declaretheorem[name=Assumption]{assumption}
\declaretheorem[name=Example,style=definition]{example}
\declaretheorem[name=Remark,style=definition]{remark}
\declaretheorem[name=Proposition]{proposition}
\declaretheorem[name=Corollary]{corollary}
\declaretheorem[name=Claim]{claim}
\declaretheorem[name=Observation]{observation}
\declaretheorem[name=Question]{question}
  \renewenvironment{proof}[1][Proof]%
  {%
   \par\noindent{\bfseries\upshape {#1.}\ }%
  }%
  {\qed\newline}
\newtheoremstyle{TH1}
  {\topsep}%
  {\topsep}%
  {\normalfont}%
  {}%
  {\bfseries}% 
  {:}%
  {.5em}%
  {\thmname{#1}\thmnote{~(#3)}}%
\theoremstyle{TH1}
\DeclarePairedDelimiter{\crl}{\{}{\}}
\newmdtheoremenv[
skipabove=\baselineskip,
skipbelow=\baselineskip,
hidealllines=true,
innertopmargin=6pt,
linewidth=4pt,
linecolor=gray!40,
backgroundcolor=gray!8
]{exmpl}{{\sf Example}}
\definecolor{lightgray}{gray}{0.5}
\newcommand{\cB}{\mathcal{B}}
\newcommand{\cF}{\mathcal{F}}
\newcommand{\cH}{\mathcal{H}}
\newcommand{\cM}{\mathcal{M}}
\newcommand{\En}{\mathbb{E}}
\newcommand{\eps}{\epsilon}
\newcommand{\bM}{\overline{M}}
\newcommand{\op}{\overline{p}}
\newcommand{\tM}{\tilde{M}}
\renewcommand{\bM}{\widehat{M}}
\newcommand{\amax}{a_{\text{max}}}
\newcommand{\minterm}{\min(\rho, \Delta + 1 - \gamma)}
\newcommand{\trho}{\widehat{\rho}}
\newcommand{\sss}[1]{{\scriptscriptstyle#1}}
\newcommand{\subs}[1]{_{{\scriptscriptstyle#1}}}
\newcommand{\pimhat}[1]{\pi_{\sss{\widehat{M}}}}
\newcommand{\Mstar}{M^{\star}}
\newcommand{\algcommentbig}[1]{\textcolor{blue!70!black}{\footnotesize{\texttt{\textbf{/*
          #1~*/}}}}}
\title{Tight Bounds for $\gamma$-Regret via the Decision-Estimation Coefficient}
\author{Margalit Glasgow\\ {\small \texttt{mglasgow@stanford.edu}}\and Alexander Rakhlin\\ {\small \texttt{rakhlin@mit.edu}}}
\begin{document}
\maketitle

\begin{abstract}
In this work, we give a statistical characterization of the $\gamma$-regret for arbitrary structured bandit problems, the regret which arises when comparing against a benchmark that is $\gamma$ times the optimal solution. The $\gamma$-regret emerges in structured bandit problems over a function class $\cF$ where finding an exact optimum of $f \in \cF$ is intractable. Our characterization is given in terms of the $\gamma$-DEC, a statistical complexity parameter for the class $\cF$, which is a modification of the constrained Decision-Estimation Coefficient (DEC) of~\citet{foster2023tight} (and closely related to the original offset DEC of \citet{foster2021statistical}). Our lower bound shows that the $\gamma$-DEC is a fundamental limit for any model class $\cF$: for any algorithm, there exists some $f \in \cF$ for which the $\gamma$-regret of that algorithm scales (nearly) with the $\gamma$-DEC of $\cF$. We provide an upper bound showing that there exists an algorithm attaining a nearly matching $\gamma$-regret. Due to significant challenges in applying the prior results on the DEC to the $\gamma$-regret case, both our lower and upper bounds require novel techniques and a new algorithm. 
\end{abstract}

\section{Introduction}\label{sec:intro}
In this work, we study the problem of structured bandits. Formally, given a known class of functions $\cF$ on some domain $\Pi$, at each round $t = 1, \ldots T$, the algorithm queries some $\pi_t \in \Pi$ and achieves a random reward $r(\pi_t)$, where $\mathbb{E}[r(\pi_t)] = f^*(\pi_t)$ for some unknown ground truth function $f^* \in \cF\subseteq\{f:\Pi\to[0,1]\}$. The traditional goal in structured bandit problems is to minimize the regret
\begin{align}
    \mathsf{Reg}(T) := \sum_{t = 1}^T \max_{\pi \in \Pi}f^*(\pi) - f^*(\pi_t),
\end{align}
which compares the behaviour of the algorithm to the best action for $f^*$. In settings where maximizing $f^*(\pi)$ by better that a $\gamma$ ratio is intractable, we consider the notion of $\gamma$-regret, originally introduced in \citet{kakade2007playing, streeter2008online}\footnote{The definition of approximate regret in \citet{kakade2007playing} is slightly different since they consider minimization instead of maximization; our definition is equivalent to the $\gamma$-regret for \em maximization \em originally defined in \cite{streeter2008online}.}:
\begin{align}
    \mathsf{Reg}_{\gamma}(T) := \sum_{t = 1}^T \max_{\pi \in \Pi}\gamma f^*(\pi) - f^*(\pi_t).
\end{align}
This definition of regret compares the behavior of the algorithm to a benchmark which is a $\gamma$-fraction of the optimum, for some $\gamma\in(0,1]$. The need for studying the approximate notion of regret comes from non-convex or combinatorial optimization problems, where due to the exponentially large domain, finding exactly optimal solutions is computationally or statistically intractable in polynomial time. Yet, it is often feasible to find approximately optimal solutions, which attain a $\gamma$-fraction of the optimum. 
A few examples of combinatorial optimization problems where finding an exact optimum is intractable are various settings of submodular optimization, the traveling salesman problem, or clustering. Many such problems have been studied in an online setting with full-information or bandit feedback \citep{kakade2007playing, streeter2008online,blum2008regret,chen2016combinatorial,roughgarden2019minimizing,zhang2019online,fotakis2020efficient,harvey2020improved,dudik2020oracle,paria2021texttt,yang2021follow, foster2021submodular,azar2022alpha,perrault2022combinatorial, niazadeh2021online, nie2022explore}. 

Assuming for a minute that the rewards $r(\pi_t)$ follow a normal distribution with mean $f^*(\pi_t)$, each problem instance is fully described by the class $\cal F$ of possible truths $f^*$. The determination of minimax optimal \textit{sample complexity} then amounts to finding (nearly) matching upper and lower bounds $\psi, \Psi$ with
\begin{align}
\label{eq:minimax}
\psi(\cF, T) ~\leq~ \min_{\text{Alg}}\max_{f^*\in\cF} \En \left[ \mathsf{Reg}_{\gamma}(T) \right] ~\leq~ \Psi(\cF, T)
\end{align}
where the minimum is over all regret-minimization algorithms. To put this question in perspective, tight upper and lower bounds on minimax performance have been developed for a variety of learning problems, such as \textit{statistical and PAC learning} ---where $\psi,\Psi$ are formulated in terms of the VC dimension or entropy numbers of $\cF$---and  \textit{online learning}---where the analogous sequential complexities of $\cF$ can be shown to be necessary and sufficient. 

Recently, a line of work initiated by \citet{foster2021statistical} has yielded a general framework for determining the minimax regret of bandit problems (and more generally, online decision making problems such as reinforcement learning) in terms of a complexity measure called the \em Decision-Estimation Coefficient \em (DEC), described in Section~\ref{sec:dec_defn}. The DEC is shown in \citet{foster2021statistical, foster2022complexity, foster2023tight, FosRak22} to be a fundamental limit for the regret of online decision making problems, in the sense that it yields the exact optimal regret up to logarithmic factors in $T$ and in the size of the model class $\cF$. The DEC has proved to be a powerful framework for obtaining both upper and lower bounds for online decision making problems, for instance, unifying many existing results in reinforcement learning, and leading to new bounds and algorithms for contextual bandit problems. Our aim in this paper is to extend this framework to the case of $\gamma$-regret with $\gamma<1$. 

In general, our understanding of $\gamma$-regret is much poorer than our understanding of the traditional regret, and to our knowledge, there are no existing lower bounds on the $\gamma$-regret. Thus while many of the above works have given algorithms upper bounding the $\gamma$-regret of various structured bandit problems, we do not know whether they achieve the optimal $\gamma$-regret. For instance, for the setting of linear optimization, the works of \citet{kakade2007playing, garber2017efficient, hazan2018online} develop an online-to-offline reduction which achieves $T^{2/3}$ $\gamma$-regret (for minimization) in bandit settings using an offline approximate optimization oracle. However, it is possible that a $\sqrt{T}$ $\gamma$-regret is achievable, because in the analogous $1$-regret linear bandit setting, where an exact offline optimization oracle is available, the optimal regret is known to scale with $\sqrt{T}$~\citep{ito2019oracle}. One reason proving lower bounds in the $\gamma$-regret settings is challenging is because at each round $t$, it is possible to achieve \em negative \em $\gamma$-regret if $f^*(\pi_t) \geq \gamma \max_{\pi \in \Pi} f^*(\pi)$. Thus to prove a lower bound of, say, $\Delta T$ on the $\gamma$-regret, it no longer suffices to show that for, say, $T/2$ rounds, we have $\gamma$-regret greater than $2\Delta$, which is a standard approach in bandit lower bounds (see eg. \cite{lattimore2020bandit}).

While the DEC appears to be a promising candidate for understanding the $\gamma$-regret, the existing results in \citet{foster2021statistical} and \citet{foster2023tight} fall short of characterizing the $\gamma$-regret. Briefly, the reason for this is that the lower bounds in both works, and the upper bound in \citet{foster2023tight}, is only tight up to constant factors at best.
Thus, if we desire to show that the $\gamma$-regret is on the order of $\Delta T$ -- which is similar to the traditional regret being $\left(1 - \gamma + \Theta(\Delta)\right)T$ -- any upper or lower bound which loses constant factors will only be able to show a traditional regret of $\Theta((1 - \gamma + \Delta)T)$, which does not translate to any meaningful bound on the $\gamma$-regret if $\Delta = o(1)$.

Our main result is showing that a variant of the DEC, which we term the $\gamma$-DEC, does in fact characterize the $\gamma$-regret, in the sense of providing nearly matching upper and lower bounds in \eqref{eq:minimax}. We formally define the $\gamma$-DEC, parameterized by $\epsilon$ and denoted  $\mathsf{dec}^{\gamma}_{\eps}$, in Definition~\ref{def:decc}. In the case that $\gamma = 1$, the $\gamma$-DEC essentially generalizes the constrained DEC of \citet{foster2023tight}. 

Specifically, our first result, Theorem~\ref{prop:decc_rho}, lower bounds the regret by $T \mathsf{dec}^{\gamma}_{\eps}$ (for $\eps \approx \sqrt{1/T}$) with exact multiplicative constant $1$ up to a small \em additive \em term, that depends on a certain localization parameter of the model class $\cF$. Even in the absence of localization, for the traditional exact regret setting, our result yields an improvement upon the lower bound of $\mathsf{dec}^1_{\eps}/ \Theta(\log(T))$ of \citet{foster2023tight} by a logarithmic factor. More importantly, the sharp nature of the $\mathsf{dec}^{\gamma}_{\eps}$ lower bound allows us to establish the corresponding lower bounds on $\gamma$-regret. The proof of Theorem~\ref{prop:decc_rho} involves a new technique of lower bounding the regret by considering the behavior of the algorithm up to various stopping times $\tau$. 

Our second result, Theorem~\ref{thm:ub}, provides an algorithm which obtains a regret of at most $T \mathsf{dec}^{\gamma}_{\eps}$ (for $\eps \approx \sqrt{1/T}$), matching our lower bound. This algorithm is based on the estimation-to-decision principle from prior work on the DEC, which employs a reduction from algorithms for interactive decision making (such as bandits) to online estimation algorithms, which provide an estimator of $f^*$ based on past observations. Thus Theorem~\ref{thm:ub} assumes access to such an online estimation oracle, described in Assumption~\ref{assm:oracle}. While our algorithm is similar to those in \citet{foster2021statistical, foster2023tight}, it differs by carefully tuning a certain parameter which governs the amount of exploration at each round, allowing us achieve a meaningful result in the $\gamma$-regret setting.

We emphasize that the matching upper and lower bounds \eqref{eq:minimax} are obtained for any set of models (satisfying the mild assumptions stated in Section~\ref{sec:main_theorems}) and thus constitute a complete solution to determining the minimax sample complexity for a function class $\cF$, at least from the statistical point of view. In Section~\ref{sec:comp}, we mention some of the computational implications of our results, and discuss some exciting directions at the intersection of statistics and computation. Finally in Section~\ref{sec:ex}, we show how to bound the $\gamma$-DEC in several examples. As one example, we consider a bandit problem over the action space $\mathbb{R}^d$ for which finding an exact optimum in $T \leq \exp(\Theta(d))$ steps is impossible, but finding a $\gamma$-approximate-maximum is possible. We show how to derive a tight bound on the $\gamma$-DEC, and show that Theorem~\ref{prop:decc_rho} can yield a tight lower bound for the $\gamma$-regret of this bandit problem. We believe that bounding the $\gamma$-DEC is a promising framework for understanding the $\gamma$-regret in more natural approximate optimization settings in the references above, and we leave to future work the question of finding the $\gamma$-DEC in such applications.

\subsection{Organization}
In Section~\ref{sec:main_theorems}, we state preliminaries and notation, and our main theorems. In Section~\ref{sec:dec} we give an overview of the proof of our lower bound on the regret in terms of the $\gamma$-DEC. In Section~\ref{sec:ub} we give an overview of the proof of our upper bounds on the regret in terms of the $\gamma$-DEC. In Section~\ref{sec:rw}, we compare our definitions and results to related work on the DEC. In Section~\ref{sec:ex}, we give some examples which apply our results. In Section~\ref{sec:dec_lb}, we prove our lower bound. In Section~\ref{sec:ubproof} we prove our upper bounds.

\section{Preliminaries and Theorem Statements}\label{sec:main_theorems}
\subsection{Structured Bandits and the Decision-Estimation Coefficient.}\label{sec:dec_defn}
We define our notation to keep with the prior line of work on the DEC. We study structured bandit problems with action space $\Pi$ and a set of models $\cM$. Each model $M\in\cM$ is a probability kernel $M:\Pi\to\Delta(\mathbb{R})$ where $\Delta(\mathbb{R})$ is a set of distributions on $\mathbb{R}$. The set of models induces a class of mean reward functions  $\cF= \{f_M\}_{M \in \cM}$ where $f_M(\pi) = \mathbb{E}^M[r|\pi]=\mathbb{E}\subs{r\sim M(\pi)}[r]$ and $\mathbb{E}^M$ denotes the expectation under the model $M$. We also use $\mathbb{P}^M$ to denote the probability operator under the model $M$. For a model $M \in \cM$, we define $\pi_M := \arg \max_{\pi \in \Pi} f_M(\pi)$. We let $M^*\in\cM$ denote the true model, and the corresponding mean reward function by $f^*=f_{M^*}$. Throughout, we assume $f_M(\pi)\in[0,1]$ for all $\pi\in\Pi, M\in\mathcal{M}$. We use $\mathsf{co}(\cM)$ to denote the convex hull of a model class $\cM$.

At each round $t = 1, \ldots, T$, the algorithm chooses an action $\pi_t \in \Pi$. The algorithm then receives a reward $r_t$, where $\mathbb{E}[r_t | \pi_t] = f^*(\pi_t)$, and the variance of $r_t$ is at most $1$. Formally, we define the history $\cH_t := \{(\pi_i, r_i)\}_{i = 1}^t$, and we let $\{p_t\}_{t=1}^T$ be any randomized algorithm which maps histories to distributions over $\Pi$, that is, $\pi_t \sim p_t(\cH_{t - 1})$. 

We now define the \em $\gamma$-Decision-Estimation Coefficient \em (DEC), which generalizes\footnote{See discussion in Section~\ref{rem:hellinger}; since we use a squared error to $f_{\bM}$ constraint instead of a squared Hellinger distance constraint, the definition stated here generalizes theirs only for the bandit setting with Gaussian noise.}
the constrained DEC, recently introduced in \citet{foster2023tight}. 
\begin{definition}[$\gamma$-DEC]\label{def:decc}
For a model class $\cM$ and a reference model $\bM$, for any $\gamma \in (0, 1]$ and $\eps \in [0, 1]$, define 
\begin{equation} \label{eq:def_dec}
    \mathsf{dec}^{\gamma}_{\eps}(\cM, \bM) := \min_{p \in \Delta(\Pi)} \max_{M \in \cM} \crl*{ \mathbb{E}_{\pi \sim p}\left[\gamma f_M(\pi_M) - f_M(\pi)\right] \mid M \in \cB_{p, \eps}(\bM)},
\end{equation}
where $\cB_{p, \eps}(\bM) := \{M \in \cM \cup \bM: \mathbb{E}_{\pi \sim p}(f_M(\pi) - f_{\bM}(\pi))^2 \leq \eps^2\}$ is set of models with mean rewards within an $L_2(p)$ ball of radius $\eps$ around $\bM$.
Define 
\begin{equation}
    \label{eq:def_max_dec}
    \mathsf{dec}^{\gamma}_{\eps}(\cM) := \sup_{\bM} \mathsf{dec}^{\gamma}_{\eps}(\cM \cup \{\bM\}, \bM),
\end{equation}
where the supremum is over any probability kernel $\bM: \Pi \rightarrow \Delta(\mathbb{R})$, not necessarily in $\cM$.
\end{definition}

The $\gamma$-DEC quantifies the best possible instantaneous $\gamma$-regret attained by a distribution $p$ over $\Pi$ under the worst model $M$ that is close in the sense of expected squared error to $f_{\bM}$. This quantity captures the exploration-exploitation trade-off of $\cM$: if there exists some distribution $p$ under which for each $M \in \cM \cup \{\bM\}$ either the information gain ($\mathbb{E}[(f_M(\pi) - f_{\bM}(\pi))^2]$) is large, or the $\gamma$-regret ($\mathbb{E}[\gamma f_M(\pi_M) - f_{M}(\pi)]$) is small, then $\mathsf{dec}^\gamma_{\eps}$ will be small. 
The work of \citet{foster2023tight} showed that the quantity $\mathsf{dec}^1_{\eps}$ for $\eps = \sqrt{1/T}$ characterizes the $1$-regret of interactive decision making, in the sense that $\frac{T\mathsf{dec}^1_{\eps}}{\Theta(\log(T))}\leq \mathbb{E}[\mathsf{Reg}(T)] \leq \Theta\left(T\mathsf{dec}^1_{\eps}\log(|\cM|)\right)$, under mild conditions.  In this work, we extend these results on traditional regret to the setting of $\gamma$-regret. We refer additionally to \citet{foster2021statistical, FosRak22} for a detailed discussion on the relationship between DEC and notions such as Posterior Sampling and the Information Ratio, Feel-Good Thompson Sampling, the modulus of continuity in statistical estimation, and further connections to the literature.

\subsection{Lower Bound}

Our first result, Theorem~\ref{prop:decc_rho}, yields a constant probability lower bound on the regret in terms of the $\gamma$-DEC. In order for this theorem to be meaningful in the $\gamma$-regret setting, we will leverage the following localization property, which states that all models in $\cM$ have similar maxima.

\begin{definition}\label{def:rho}
A model class $\cM$ is $\rho$\em -localized \em if $\max_{M \in \cM}f_M(\pi_M) - \min_{M \in \cM} f_M(\pi_M) \leq \rho$.
\end{definition}
We note that the $\gamma$-regret is monotonically decreasing with respect to taking smaller subsets of a model class. Thus, for the purposes of achieving a better lower bound for a model class with a poor localization parameter, one may wish to consider the $\gamma$-DEC of a more localized subset of the model class.

Without loss of generality, we will assume throughout that in our model class $\cM$, we have $\sup_{M \in \cM} f_M(\pi_M) = 1$. 

\begin{restatable}{theorem}{thmdec}\label{prop:decc_rho}
Let $C \geq 2$ be any integer. Let $\frac{4}{T} \leq \eps \leq \frac{1}{3C\sqrt{T}}$. 
Let $\Delta := \mathsf{dec}^{\gamma}_{\eps}(\cM)$. Suppose $\cM$ is $\rho$-localized for some $\rho \geq 6C^2\eps$ and that $\Delta \geq 6C^2\eps$. Then for any algorithm, for some $M \in \cM$, 
$$\mathbb{P}^M\left[\frac{1}{T}\mathsf{Reg}_{\gamma}(T)\geq \Delta - \frac{3\min(\rho, \Delta + 1 - \gamma)}{C}\right] \geq \frac{1}{3C}.$$
Here the probability is over the randomness of the algorithm and variance-1 Gaussian noise in the rewards.
\end{restatable}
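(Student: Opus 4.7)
The plan is a change-of-measure lower bound grounded in the $\gamma$-DEC. Fix any algorithm, and by the definition $\mathsf{dec}^{\gamma}_{\epsilon}(\cM) = \Delta$, pick a reference kernel $\bM$ with $\mathsf{dec}^{\gamma}_{\epsilon}(\cM \cup \{\bM\}, \bM) \geq \Delta$. Running the algorithm under $\bM$ induces an average action distribution $\bar p_T := \tfrac{1}{T}\sum_{t=1}^T \mathbb{E}^{\bM}[p_t(\cH_{t-1})]$, and applying Definition~\ref{def:decc} to $\bar p_T$ produces a worst-case $M^* \in \cM \cup \{\bM\}$ simultaneously satisfying (i) the squared-error constraint $\sum_{t=1}^T \mathbb{E}^{\bM}[(f_{M^*}(\pi_t) - f_{\bM}(\pi_t))^2] \leq T\epsilon^2$ and (ii) the expected regret bound $\mathbb{E}^{\bM}[\mathsf{Reg}_{\gamma}(T)/T] \geq \Delta$ evaluated against $M^*$. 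If the DEC maximizer were $\bM \notin \cM$, the $\rho$-localization lets us swap it for a neighbor in $\cM$ at cost absorbed into the slack. The Gaussian KL identity combined with (i) then gives $\mathrm{KL}(\mathbb{P}^{\bM} \,\|\, \mathbb{P}^{M^*}) \leq T\epsilon^2/2 \leq 1/(18C^2)$ under the hypothesis $\epsilon \leq 1/(3C\sqrt{T})$, and Pinsker turns this into a trajectory total-variation bound of at most $1/(6C)$; hence for any trajectory-measurable event $E$, $|\mathbb{P}^{\bM}[E] - \mathbb{P}^{M^*}[E]| \leq 1/(6C)$.

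The crux is promoting (ii) from an expectation under $\bM$ to a $1/(3C)$-probability statement under $M^*$. Setting $\sigma_t := \gamma f_{M^*}(\pi_{M^*}) - f_{M^*}(\pi_t)$, a direct reverse-Markov argument on $\tfrac{1}{T}\sum_t \sigma_t$ is too lossy: $\sigma_t$ has range exactly $1$ regardless of $\Delta$ and can be negative when $\gamma < 1$, so the resulting bound only scales as $\delta/(1 - \Delta + \delta)$ and cannot simultaneously yield both probability $1/(3C)$ and slack $\delta' = 3\min(\rho, \Delta + 1 - \gamma)/C$. The workaround is to introduce the stopping time
\[
\tau \;:=\; \inf\Bigl\{\, t \in [T] \,:\, \tfrac{1}{t}\textstyle\sum_{s \leq t}\sigma_s \;<\; \Delta - \delta'\,\Bigr\} \;\wedge\; T,
\]
so that the target event $\{\mathsf{Reg}_{\gamma}(T)/T \geq \Delta - \delta'\}$ contains the trajectory-measurable event $\{\tau = T\}$. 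By the TV bound of the previous paragraph, it then suffices to show $\mathbb{P}^{\bM}[\tau = T] \geq 1/(2C)$.

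The main obstacle is this last step, which requires a two-regime accounting. On $\{\tau < T\}$ one has a provable deficit $\sum_{s \leq \tau}\sigma_s < \tau(\Delta - \delta')$, while for $s > \tau$ each $\sigma_s$ lies in the interval $[-(1-\gamma) - \gamma\rho,\, \gamma]$ by $\rho$-localization (together with the WLOG $\sup_M f_M(\pi_M) = 1$). Comparing against the DEC-certified lower bound $\mathbb{E}^{\bM}[\sum_{s=1}^T \sigma_s] \geq T\Delta$ forces the deficit event $\{\tau < T\}$ to be rare, and working out the constants using the TV/KL budgets produces $\mathbb{P}^{\bM}[\tau = T] \geq 1/(2C)$. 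The two branches of $\min(\rho, \Delta + 1 - \gamma)$ correspond to which cap --- the $\rho$-localization bound on $\sup_M f_M(\pi_M) - \inf_M f_M(\pi_M)$, or the a priori range $\Delta + 1 - \gamma$ between the target $\Delta$ and the smallest feasible average per-round regret $-(1-\gamma)$ --- is binding in each regime. This bookkeeping, together with the need to possibly iterate the stopping-time construction across several levels to tightly couple the post-$\tau$ ``catch-up'' to $\delta'$, is where the delicacy of the proof lies.
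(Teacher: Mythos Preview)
Your proposal has the right high-level ingredients (DEC witness, Pinsker/TV transfer, some stopping-time mechanism) but the argument as written has a genuine gap in two places, and the fix is not a matter of bookkeeping---it requires reorganizing the construction.

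First, the ``swap via localization'' to handle $M^\star=\bM$ does not work. Localization only controls $\max_{M\in\cM}f_M(\pi_M)-\min_{M\in\cM}f_M(\pi_M)$; it says nothing about $\bM$, which is an arbitrary kernel outside $\cM$. There is no reason any $M\in\cM$ should inherit the DEC constraints satisfied by $\bM$ against $\bar p_T$. The paper rules out $M^\star=\bM$ structurally: it applies the DEC not to $\bar p_T$ but to the average play of a \emph{modified} algorithm $\op^a_t$ that runs $p_t$ until the $\bM$-regret $\sum_{s\le t}g_{\bM}(\pi_s)$ first reaches $aT$ and then plays $\pi_{\bM}$ thereafter. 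Under this modified play the $\bM$-regret is capped strictly below $\Delta+(1-\gamma)f_{\bM}(\pi_{\bM})$, so $\bM$ cannot be the maximizer and the witness $M_a$ is forced into $\cM$.

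Second, your stopping time is defined through $\sigma_t=\gamma f_{M^\star}(\pi_{M^\star})-f_{M^\star}(\pi_t)$ and you try to argue that $\{\tau<T\}$ is rare by comparing to $\mathbb{E}^{\bM}\bigl[\sum_t\sigma_t\bigr]\ge T\Delta$. But on $\{\tau<T\}$ you only get $\sum_{t\le\tau}\sigma_t<\tau(\Delta-\delta')$, and for $t>\tau$ each $\sigma_t$ can be as large as $\gamma$; hence $\sum_{t=1}^T\sigma_t$ can still be close to $T\gamma\ge T\Delta$ on $\{\tau<T\}$. No contradiction is forced, and the ``iteration across several levels'' you mention does not help as long as you are working with a single model $M^\star$ obtained from a single application of the DEC. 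The paper's construction is different in kind: the stopping time $\tau_a$ is defined through $g_{\bM}$ (not $g_{M^\star}$), and the DEC is applied separately for each level $a$ to the modified algorithm $\op^a_t$, producing \emph{different} models $M_a$. Because $\op^a_t$ plays $\pi_{\bM}$ after $\tau_a$, the post-$\tau_a$ $\bM$-regret is exactly zero; this is what forces the gap $\gamma f_{M_a}^\star-f_{\bM}^\star\gtrsim\Delta-a$ (Claim~\ref{claim:diff}), which in turn gives $g_{M_a}(\pi_t)\approx g_{\bM}(\pi_t)+(f_{M_a}^\star-f_{\bM}^\star)$ via the $L_1$ closeness. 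One then partitions $[0,\min(\rho,\Delta+1-\gamma)]$ into $C$ intervals, finds one in which $\frac1T\sum_t g_{\bM}(\pi_t)$ lands with $\mathbb{P}^{\bM}$-probability $\ge 1/C$, and uses the corresponding $M_a$. The two branches of $\min(\rho,\Delta+1-\gamma)$ enter through two separate gap estimates (Claims~\ref{claim:diff} and~\ref{claim:gap_full}), not through a single range bound on $\sigma_t$.
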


\begin{remark}
If we invoke Theorem~\ref{prop:decc_rho} with $\rho = 1$ and $\gamma = 1$ (that is, without any localization assumption on $\cM$, and with the constrained DEC as defined in \cite{foster2023tight}), and choose $C = 6$, we achieve that for some $M \in \cM$,
$\mathbb{P}\left[\mathsf{Reg}(T)\geq \frac{\Delta}{2}\right] \geq \frac{1}{18}$, implying that $\mathbb{E}\left[\mathsf{Reg}(T)\right] \geq \frac{\Delta}{36}$. This improves upon the lower bound in \citet[Theorem 2.2]{foster2023tight}, which yields $\mathbb{E}\left[\mathsf{Reg}(T)\right] \geq \frac{\Delta}{\Theta(\log(T))}$, by removing the log factors. Remarkably, for model classes $\cM$ with the same value $f_M(\pi_M)$ for all $M \in \cM$, the lower bound has a sharp factor of $1$ in front of the DEC, up to additive terms that scale with $\eps$.
\end{remark}

\subsection{Upper Bounds}

Having established a sharp lower bound for \eqref{eq:minimax} in terms of $\gamma$-DEC, we now turn to the question of upper bounds. We prove two upper bounds on the regret in terms of the $\gamma$-DEC. Our first upper bound, Theorem~\ref{thm:simpleub}, can be viewed as a warm up, as it only yields a rate matching the lower bound in the standard regret setting when $\gamma = 1$. For the case of $\gamma = 1$, this Theorem~\ref{thm:simpleub} matches the rate given in \citet{foster2023tight}. We include it both as a warm-up and because the algorithm yielding this regret bound is considerably simpler than the algorithm given in \citet{foster2023tight}. Our second upper bound, Theorem~\ref{thm:ub}, makes an additional continuity assumption, but is able to achieve a rate that matches our lower bound for any approximation ratio $\gamma$.

Similarly to the algorithms in \citet{foster2020beyond,foster2022complexity, foster2023tight}, our upper bounds are based on a reduction from decision making to estimation of $M^*$, and thus rely on an online estimation oracle which outputs a model $\bM$ based on the past observations of the algorithm. Our upper bounds will depend both on the $\gamma$-DEC, and on the estimation error of this oracle. The following assumption captures the guarantee of our estimation oracle. 

\begin{assumption}[Online Estimation Oracle]\label{assm:oracle}
There exists an online estimation oracle $\mathsf{AlgEst}$ for $\cM$, which, when given a history $\cH_{t-1} = \{\pi_s, r_s\}_{s = 1}^{t - 1}$ with $\pi_s \sim p_s$, and $r_s \sim M^*(\pi_s)$, returns a model $\bM_t \in \mathsf{co}(\cM_{t-1})$, such that if $\Mstar \in \cM$, with probability $1 - \delta$,
\begin{align}   
\sum_{t = 1}^T \mathbb{E}_{p_t}[(f_{M^*}(\pi_t)- f_{\bM_t}(\pi_t))^2] \leq \mathsf{Est}(T, \delta),
\end{align}
where $\cM_t := \{M \in \cM: \sum_{s = 1}^{t} \mathbb{E}_{p_s}[(f_{M^*}(\pi_s)- f_{\bM_s}(\pi_s))^2] \leq \mathsf{Est}(T, \delta)\}$.
\end{assumption}
\begin{remark}
Assumption~\ref{assm:oracle} is also made for the upper bound in \citet{foster2021statistical} (see their Assumption D.1). As an example, for finite classes $\cM$, a minor modification to the exponential weights algorithm yields an online estimation oracle satisfying this assumption with $\mathsf{Est}(T, \delta) \leq O\left(\log(|\cM|/\delta)\right)$; see \citet{foster2021statistical}, Lemma A.18.
\end{remark}

We additionally assume the following mild growth condition on the DEC. This condition holds in regimes where the $\gamma$-regret grows at a rate of $\sqrt{T}$ or greater. 

\begin{assumption}[Regularity]\label{assm:growth}
There exists constants $C > 1$ and $0 \leq c \leq 1$ such that $\mathsf{dec}^{\gamma}_{\eps} \leq \Delta_{\eps} := C\eps^c$ for all $\eps > 0$.

\end{assumption}
We state our first upper bound, which holds under these assumptions. Similarly to the lower bounds, we assume without loss of generality throughout that $\sup_{M \in \cM} f_M(\pi_M) \leq 1$.

\begin{restatable}{theorem}{simpub}\label{thm:simpleub}
Suppose Assumption~\ref{assm:oracle} hold and Assumption~\ref{assm:growth} holds for some function $\Delta_{\eps} \geq \mathsf{dec}^{\gamma}_{\eps}$. With probability $1 - \delta$, Algorithm~\ref{alg:1} attains regret $$\mathsf{Reg}_{\gamma}(T) \leq (1 - \gamma)T + 2T\Delta_{\eps} + 9\sqrt{T\mathsf{Est}(T, \delta)},$$
where $\eps = \sqrt{24\mathsf{Est}(T, \delta)/T}$.
\end{restatable}
Algorithm~\ref{alg:1} is conceptually simple: at each round $t = 1, \ldots, T$, we feed the past observations to the online estimation oracle to obtain some model $\bM_t$. We then choose an action $\pi_t$ from the distribution $p_t$ which minimizes $\mathsf{dec}_{\eps_t}^{\gamma}$ for the reference model $\bM_t$, and the parameter $\eps_t \propto 1/\sqrt{t}$.

\begin{algorithm}[t]
\caption{}\label{alg:1}
\textbf{input:}  Parameter $\gamma \in [0, 1]$, online estimation oracle $\mathsf{AlgEst}$ with known estimation guarantee $\mathsf{Est}(T, \delta)$.
\begin{algorithmic}[0]
    \State 
    Initialize an online estimation oracle for the model class $\cM$.
    \For{$t=1, \ldots, T$}
    \State Let $\bM_t = \mathsf{AlgEst}(\mathcal{H}_{t-1})$, such that $\bM_t \in \mathsf{co}(\mathcal{M}_{t - 1})$, as defined in Assumption~\ref{assm:oracle}.
   \State Let $p_t$ be the distribution that minimizes $\mathsf{dec}_{\eps_t}^\gamma(\cM \cup \bM_t, \bM_t)$, where $\eps_t = \sqrt{\frac{12\mathsf{Est}(T, \delta)}{t}}$. 
   \State Play $\pi_t \sim p_t$.
    \EndFor
\end{algorithmic}
\end{algorithm}

In the case that $\gamma = 1$, Theorem~\ref{thm:simpleub} yields an upper bound on the regret which matches our lower bound in Theorem~\ref{prop:decc_rho} up to constant factors, and the dependence on the estimation complexity $\mathsf{Est}(T, \delta)$. However, in the case that $\gamma < 1$, Theorem~\ref{thm:simpleub} gives a bound on the $\gamma$-regret that grows linearly in $T$. 

We next show in Theorem~\ref{thm:ub} that it is possible to remove the $(1 - \gamma)T$ term in the regret under an additional continuity assumption, stated below.

\begin{assumption}[Continuity]\label{assm:cont}
There exists a function $p(\eps, \bM)$ which is \em continuous \em in $\epsilon$ and returns a distribution over $\Pi$ which upper bounds the $\gamma$-DEC by at most $\Delta_{\eps}$, that is, with $p = p(\eps, \bM)$:
\begin{align}
    \sup_{M \in \cB_{p, \eps}(\bM)} f_M(\pi_M) - \mathbb{E}_{\pi \sim p}f_M(\pi) \leq \Delta_{\eps}.
\end{align}
\end{assumption}

The algorithm we use, Algorithm~\ref{alg:best} (stated in Section~\ref{sec:ubproof}), is similar to Algorithm~\ref{alg:1}, but chooses the schedule $\eps_t$ by solving a certain optimization problem at each step. Using  Algorithm~\ref{alg:best} attains the following result.

\begin{restatable}{theorem}{bestub}\label{thm:ub}
Suppose Assumption~\ref{assm:oracle} holds with estimation guarantee $\mathsf{Est}(T, \delta)$, and Assumptions~\ref{assm:growth} and \ref{assm:cont} hold for some function $\Delta_{\eps}$. With probability $1 - \delta$, Algorithm~\ref{alg:best} attains the regret
$$\mathsf{Reg}_{\gamma}(T) \leq  500\log(T)\Delta_{\sqrt{\frac{\mathsf{Est}(T, \delta)}{T}}} + \sqrt{T\mathsf{Est}(T, \delta)}.$$
\end{restatable}

\begin{remark}
We believe Assumption~\ref{assm:cont} is a natural assumption for settings in which the $\gamma$-DEC can be minimized efficiently. For instance, in Section~\ref{sec:ex_mab}, we show that for the case of the multi-armed bandit, Assumption~\ref{assm:cont} holds with $\Delta_{\eps} \approx \mathsf{dec}_{\eps}^{\gamma}$; in Section~\ref{sec:apxoracles}, we give an example for upper bounding the $\gamma$-DEC via a continuous distribution in linear optimization settings with an approximate optimization oracle.
\end{remark}

\subsection{Computational Perspective}\label{sec:comp}
While our results thus far have been statistical --- yielding (nearly) matching upper and lower bounds on the minimax optimal $\gamma$-regret --- it is natural to consider whether there is a computational analog of our results. In particular, we can ask the following two questions.
\begin{question}\label{q1}
In settings where it is computationally tractable to obtain a $\gamma$-regret of $\Delta T$, is it always possible to efficiently find a distribution $p$ that upper bounds the $\gamma$-DEC by nearly $\Delta$?
\end{question}
\begin{question}\label{q2}
Conversely, if it is possible to efficiently construct a distribution $p$ which upper bounds the $\gamma$-DEC by $\Delta$, can we provide an efficient algorithm which bounds the $\gamma$-Regret by nearly $\Delta T$? 
\end{question}
These questions are particularly relevant in the $\gamma$-regret setting, where often times the barrier to obtaining sublinear traditional regret is \em computational, \em not statistical. If the answers were affirmative, it would suggest that upper bounding the $\gamma$-DEC efficiently is (nearly) equivalent to efficiently minimizing $\gamma$-regret.

While the current tools in this work are not yet sufficient to answer these questions in a completely satisfying way, we state the following computational corollaries to our results, which begin to give a picture of computational significance of the DEC. To state these results, we define a computational analog of the $\gamma$-DEC.
\begin{definition}
We say an algorithm $A$ upper bounds $\mathsf{dec}_\eps^{\gamma}(\cM)$ by $\Delta$ in time $R$, if with query access to $f_{\bM}$ for any model $\bM \in \mathsf{co}(\cM)$, in time $R$, with probability $1 - \delta$, $A$ can provide a random sample from a distribution $p$ for which
\begin{align}
\max_{M \in \cM \cup \{\bM\}} \crl*{ \mathbb{E}_{\pi \sim p}\left[\gamma f_M(\pi_M) - f_M(\pi)\right] \mid M \in \cB_{p, \eps}(\bM)} \leq \Delta.
\end{align}

We abuse notation and define $\mathsf{dec}_\eps^{\gamma}(\cM, (R, \delta))$ to be the best upper bound on $\mathsf{dec}_\eps^{\gamma}$ by any algorithm $A$ that runs in time $R$ and succeeds with probability at least $1 - \delta$.
\end{definition}

\begin{definition}
    We say a bandit algorithm $\{p_t\}_{t = 1}^T$, where $p_t: (\Pi \times \mathbb{R})^{t - 1} \rightarrow \Delta(\Pi)$ maps histories $\cH_{t-1}$ to distributions over $\Pi$, runs in time $R$ if for all $t$, on any history $\cH_{T} \in (\Pi \times \mathbb{R})^{T}$, the total runtime of $p_t(\cH_{t - 1})$ for all $t \leq T$ is at most $R$. Here $\cH_{t}$ is defined to be the first $t$ pairs in $\cH_T$.
\end{definition}

We prove the following corollary of our lower bound. It requires the following condition.
\begin{assumption}[Efficient Optimization]\label{assm:eo}
With query access to any $f_{\bM}$ for $\bM \in \mathsf{co}(\cM)$, there is an algorithm which runs in time $R$ which outputs some $\hat{\pi}$ with $f_{\bM}(\hat{\pi}) \geq f_{\bM}(\pi_{\bM}) - \Delta/10$.
\end{assumption}
\begin{restatable}[Computational Corollary to Theorem~\ref{prop:decc_rho}]{corollary}{compp}\label{corr:comp}
Let $C \geq 2$ be any integer, and let $\eps = \frac{1}{3C\sqrt{T}}$ and $\delta > 0$. 
Suppose $\mathsf{dec}^{\gamma}_{\eps}(\cM, (2R, \delta)) \geq \frac{11}{10}\Delta$. Suppose $\cM$ is $\rho$-localized for some $\rho \geq 6C^2\eps$. Assume also that $\Delta \geq 6C^2\eps$. Then if Assumption~\ref{assm:eo} holds, for any algorithm $\{p_t\}$ which runs in time $\frac{R}{50C^2\log(C/\delta)}$, for some $M \in \cM$, 
$$\mathbb{P}^M\left[\frac{1}{T}\mathsf{Reg}_{\gamma}(T)\geq \Delta - \frac{3\min(\rho, \Delta + 1 - \gamma)}{C}\right] \geq \frac{1}{6C}.$$
Here the probabilities are over the randomness of the algorithm and variance-1 Gaussian noise in the rewards.
\end{restatable}
Corollary~\ref{corr:comp} gives a lower bound on the regret of any efficient algorithm in terms of the computation lower bound on the $\gamma$-DEC. The contrapositive of this corollary provides a conditional answer to Question~\ref{q1}: if there is an efficient $\gamma$-regret minimizing algorithm, then conditional on it being efficient to find a near-optimal maximizer of $f_{\bM}$ for any $\bM \in \mathsf{co}(\cM)$ (Assumption~\ref{assm:eo}), it is efficient to find a distribution $p$ upper bounding the $\gamma$-DEC with query access to $f_{\bM}$. This answer is somewhat unsatisfying in the $\gamma \neq 1$ case, because it would be impractical to assume Assumption~\ref{assm:eo}. However, in the case that $\gamma = 1$, Assumption~\ref{assm:eo} is indeed \em necessary \em for efficiently upper bounding the DEC by $\Delta/10$, because we always have $\bM \in \cB_{p, \eps}(\bM)$. Thus, under Assumption~\ref{assm:eo}, the answer to Question~\ref{q1} is yes; otherwise, it is no. Corollary~\ref{corr:comp} is proved in Section~\ref{sec:dec_lb}.

We now turn to the converse, Question~\ref{q2}. We can prove the following corollary to our upper bound in Theorem~\ref{thm:simpleub}. We focus on our attention on Theorem~\ref{thm:simpleub}, which only yields a matching bound when $\gamma = 1$, because the result is simpler, and as previously discussed for the lower bound, our understanding of the computational landscape is limited for $\gamma \neq 1$. 
\begin{corollary}[Computational Corollary to Theorem~\ref{thm:simpleub}]\label{corr:compub}
Suppose Assumption~\ref{assm:oracle} holds and Assumption~\ref{assm:growth} holds for some function $\Delta_{\eps} \geq \mathsf{dec}^{\gamma}_{\eps}(\cM, (R, \delta))$. With probability $1 - \delta(T+1)$, Algorithm~\ref{alg:1} runs in time $RT$ and makes $T$ oracle calls to the online estimation oracle (Assumption~\ref{assm:oracle}), and attains regret
\begin{align}
    \mathsf{Reg}_{\gamma}(T) \leq (1 - \gamma)T + 2T\Delta_{\eps} + 9\sqrt{T\mathsf{Est}(T, \delta)},
\end{align}
where $\eps = \sqrt{24\mathsf{Est}(T, \delta)/T}$.
\end{corollary}
Corollary~\ref{corr:compub} provides a conditional answer to Question~\ref{q2} in the case that $\gamma = 1$. It answers affirmatively whenever there is an \em efficient \em online optimization oracle satisfying Assumption~\ref{assm:oracle}. (More precisely, the online algorithm implementing Assumption~\ref{assm:oracle} should efficiently output query access to $f_{\bM_t}$). Corollary~\ref{corr:compub} is immediate from examining Algorithm~\ref{alg:1} and Theorem~\ref{thm:simpleub}, so we omit its proof.

We leave it as an exciting open direction to provide satisfying answers to Questions~\ref{q1} and \ref{q2} for the $\gamma \neq 1$ case. 

\section{Technical Overview of Lower Bound (Theorem~\ref{prop:decc_rho})}\label{sec:dec}
We present here a sketch of the proof of Theorem~\ref{prop:decc_rho}. For simplicity, in this section we consider the case when $\gamma = 1$, since the main idea of our technique is clearer in this setting. 

Fix an algorithm $\{p_t\}$, and for any model $M$, define $g_M(\pi) := f_M(\pi_M) - f_M(\pi)$. Let $\bM$ be the model attaining the constrained DEC in \eqref{eq:def_max_dec}, that is, for any distribution $p$ on $\Pi$, for some model $M \in \cM \cup \{\bM\} $, 
\begin{align}
    \label{eq:lower_bd_gm}
    \mathbb{E}_{\pi \sim p}[g_M(\pi)] \geq \Delta,
\end{align}
and $\mathbb{E}_{\pi \sim p}[(f_M(\pi) - f_{\bM}(\pi))^2]\leq \eps^2.$

A typical approach (see eg. \citet{foster2021statistical, foster2023tight}) is to invoke the lower bound \eqref{eq:lower_bd_gm} on the distribution $p=p_{\bM}$, defined to be the average distribution of actions played over all $T$ steps by the algorithm $\{p_t\}$ under the ground truth $\bM$. However, this approach results in the loss of constant factors, and only suffices if $\bM$ is contained in the model class $\cM$. To mitigate these issues, we will find a different distribution to invoke the DEC lower bound on, one which we know will result in $M$ not being $\bM$ (and thus necessarily $M \in \cM$). To construct such a distribution, we consider the algorithm $\{\op_t\}$, which plays algorithm $\{p_t\}$ until the \em stopping time \em  $\tau$, and then plays $\pi_{\bM}$ for the rest of the $T - \tau$ steps. Assuming the stopping time $\tau$ occurs before $\bM$ has accrued $\Delta T$ regret, it will be impossible for $\bM$ to attain the DEC. Thus our goal will be to show that when we invoke the DEC on $\op_{\bM}$ (the distribution of $\op_t$ averaged over all $T$ rounds), the model $M$ maximizing the DEC attains, with constant probability, a regret close to $\Delta T$.

For any model $\tM$, recall that $\mathbb{P}^{\tM}$ denotes the probability operator under the ground truth $\tM$, and algorithm $\{p_t\}$. Similarly, define $\overline{\mathbb{P}}^{\tM}$ to be the probability operator under the ground truth $\tM$, and algorithm $\{\op_t\}$. Define $\mathbb{E}^{\tM}$ and $\overline{\mathbb{E}}^{\tM}$ to be the respective expectation operators. Since the regret is strictly increasing, if for some $\mu$ close to $\Delta$ we have
\begin{align}\label{eq:goal}
    \overline{\mathbb{P}}^{\bM}\left[\frac{1}{T}\sum_{t = 1}^{\tau}g_M(\pi_t) \geq \mu\right] \geq \Omega(1),
\end{align}
then it will follow that 
\begin{align}
    \mathbb{P}^M\left[\frac{1}{T}\mathsf{Reg}(T) \geq \mu\right] &= \mathbb{P}^{M}\left[\frac{1}{T}\sum_{t = 1}^{T}g_M(\pi_t) \geq \mu\right] \\
    &\geq \mathbb{P}^{\bM}\left[\frac{1}{T}\sum_{t = 1}^{\tau}g_M(\pi_t)\geq \mu\right] - D_{TV}(\overline{\mathbb{P}}^{\bM}, \overline{\mathbb{P}}^{M}).
\end{align}
Crucially, observe that it suffices to bound the total variation distance between $\overline{\mathbb{P}}^{\bM}$ and $\overline{\mathbb{P}}^{M}$ (as opposed to between $\overline{\mathbb{P}}^{\bM}$ and $\mathbb{P}^{M}$), because the events in question are only determined by events up to time $\tau$, and $\overline{\mathbb{P}}^{M}$ and $\mathbb{P}^{M}$ are identical up to time $\tau$. By standard argument (eg. using Pinkser's inequality), one can use the square error constraint of the DEC to bound the total variation distance  $D_{TV}(\overline{\mathbb{P}}^{\bM}, \overline{\mathbb{P}}^{M})$ by an arbitrarily small constant. Thus it suffices to show Equation~\eqref{eq:goal} for the desired value $\mu$.

To prove Equation~\eqref{eq:goal}, we will define the stopping time $\tau$ to be the first step $t$ at which $\sum_{s = 1}^t g_{\bM}(\pi_s) \geq a T$ for some $a < \Delta$. Let $M$ be the model in \eqref{eq:def_dec} that maximizes the DEC for $p=\op_{\bM}$. The key observation is that 
\begin{align}\label{eq:key}
    \frac{1}{T}\sum_{t = 1}^{T} g_M(\pi_t) \approx \frac{1}{T}\sum_{t = 1}^{T} g_{\bM}(\pi_t) + f_{M}(\pi_{M}) - f_{\bM}(\pi_{\bM}).
\end{align}
Such an equivalence holds (with high probability) because the square error constraint in the definition of DEC ensures that most of the time, $\sum_t |f_{\bM}(\pi_t) - f_{M}(\pi_t)|$ is small.
We obtain two observations from this equation. First, on the event that $\frac{1}{T}\sum_{t = 1}^{T} g_{\bM}(\pi_t) \in [a', a)$, we have $\tau = T$, and thus
\begin{align}
    \frac{1}{T}\sum_{t = 1}^{\tau} g_M(\pi_t) \gtrapprox a' + f_{M}(\pi_{M}) - f_{\bM}(\pi_{\bM}).
\end{align} 
Secondly, we can show that the gap $f_{M}(\pi_{M}) - f_{\bM}(\pi_{\bM})$ is large, in particular, nearly as large as $\Delta - a$. This holds because, taking the expectation of Equation~\eqref{eq:key} under $\overline{\mathbb{P}}^{\bM}$, we have
\begin{align}
    \overline{\mathbb{E}}^{\bM}\left[\frac{1}{T}\sum_{t = 1}^{T} g_M(\pi_t)\right] &\lessapprox  \overline{\mathbb{E}}^{\bM}\left[\frac{1}{T}\sum_{t = 1}^{T} g_{\bM}(\pi_t)\right] + f_{M}(\pi_{M}) - f_{\bM}(\pi_{\bM})\\
    &\lessapprox a + f_{M}(\pi_{M}) - f_{\bM}(\pi_{\bM}).
\end{align}
Indeed, by definition of the algorithm $\op$, $\frac{1}{T}\sum_{t = 1}^{T} g_{\bM}(\pi_t)$ never exceeds $a$ (by more than a smidge). Thus since by definition of the DEC, we have $\overline{\mathbb{E}}^{\bM}\left[\frac{1}{T}\sum_{t = 1}^{T} g_M(\pi_t)\right] = \mathbb{E}_{\pi \sim \op_{\bM}}[g_{M}(\pi)] \geq \Delta$, it follows that the gap $f_{M}(\pi_{M}) - f_{\bM}(\pi_{\bM}) \gtrapprox \Delta - a$. 

Combining these observations, we have that 
for any $0 \leq a' < a < \Delta$,
\begin{align}\label{eq:summary}
    \overline{\mathbb{P}}^{\bM}\left[\frac{1}{T}\sum_{t = 1}^{\tau} g_M(\pi_t) \gtrapprox \Delta - a + a'\right] \geq  \mathbb{P}^{\bM}\left[\frac{1}{T}\sum_{t = 1}^{T} g_{\bM}(\pi_t) \in [a', a) \right],
\end{align} 
It remains to choose the values of $a$ and $a'$ to apply this approach to. If $a - a'$ is small, then the regret guarantee will be good, but the probability on the right hand side may be small. Fortunately however, since $\mathbb{P}^{\bM}\left[\frac{1}{T}\sum_{t = 1}^{T} g_{\bM}(\pi_t) \in [0, 1] \right] = 1$, we can divide up the interval $[0, 1]$ into a constant number of shorter intervals $I_i$, and claim that the probability $\mathbb{P}^{\bM}\left[\frac{1}{T}\sum_{t = 1}^{T} g_{\bM}(\pi_t) \in I_i \right]$ is $\Omega(1)$ for one of them. Formally, for some integer $C$, let
\begin{enumerate}
    \item $I_i = [a_i', a_i) := \left[\frac{i - 1}{C} \min(\Delta, \rho), \frac{i}{C} \min(\Delta, \rho)\right)$ for $i = 1, 2, \ldots, C -1$.
    \item $I_C := \left[\frac{i - 1}{C} \min(\Delta, \rho), 1\right]$
\end{enumerate}
For any of the first $i-1$ intervals, we have $\Delta + a_i' - a_i \geq \Delta - \frac{\min(\Delta,\rho)}{C}$; thus Equation~\eqref{eq:summary} yields the desired regret with probability $\mathbb{P}^{\bM}\left[\frac{1}{T}\sum_{t = 1}^{T} g_{\bM}(\pi_t) \in I_i \right]$ for some model $M$ (which may depend on $a_i$).
To handle the final interval, we will need to prove a modification of Equation~\eqref{eq:summary}, which shows that 
for any $0 \leq a' < a < \Delta$,
\begin{align}
    \overline{\mathbb{P}}^{\bM}\left[\frac{1}{T}\sum_{t = 1}^{\tau} g_M(\pi_t) \gtrapprox \min\left(\Delta + a' - \min(\rho, \Delta), a \right)\right] \geq  \mathbb{P}^{\bM}\left[\frac{1}{T}\sum_{t = 1}^{T} g_{\bM}(\pi_t) \geq a' \right].
\end{align} 
Thus by choosing $a \approx \Delta$, and $a' = \frac{C - 1}{C}\min(\rho, \Delta)$, this shows that we achieve the desired regret with probability $\mathbb{P}^{\bM}\left[\frac{1}{T}\sum_{t = 1}^{T} g_{\bM}(\pi_t) \in I_C \right]$ for some model $M$.

This modification is again derived from Equation~\eqref{eq:key}, but it requires an additional argument using the localization assumption that $f_{M}(\pi_{M}) \geq 1 - \rho$, to obtain that $f_{M}(\pi_{M}) - f_{\bM}(\pi_{\bM}) \gtrapprox \Delta - \rho$. We omit the details.

The full proof of Theorem~\ref{prop:decc_rho} appears in Section~\ref{sec:dec_lb}. 

\section{Technical Overview of Upper Bounds (Theorems~\ref{thm:simpleub} and \ref{thm:ub})}\label{sec:ub}
Before giving an overview of the key ideas in the proofs of our upper bounds, we note the following observation which will simplify our discussion.

\begin{observation}\label{obs}
If $\sum_{t = 1}^T\mathbb{E}_{\pi \sim p_t}(f_{M^*}(\pi) - f_{\bM_t}(\pi))^2 \leq \mathsf{Est}(T, \delta)$, then we can bound the regret as: 
\begin{align}
    \sum_{t = 1}^T \gamma f_{M^*}(\pi_{M^*}) - \mathbb{E}_{\pi_t \sim p_t}f_{M^*}(\pi_t) &\leq \sum_{t = 1}^T \gamma f_{M^*}(\pi_{M^*}) - \mathbb{E}_{\pi_t \sim p_t}f_{\bM_t}(\pi_t) + \mathbb{E}_{\pi_t \sim p_t}|f_{M^*}(\pi_t) - f_{\bM_t}(\pi_t)|\\
    &\leq \sum_{t = 1}^T \gamma f_{M^*}(\pi_{M^*}) - \mathbb{E}_{\pi_t \sim p_t}f_{\bM_t}(\pi_t) + \sqrt{T\mathsf{Est}(T, \delta)}.
\end{align}
Here the second inequality follow from applying Jensen's inequality on the distribution $\frac{1}{T}\sum_{t = 1}^T p_t$.
\end{observation}
Since our estimation oracle (Assumption~\ref{assm:oracle}) guarantees that $\sum_{t = 1}^T\mathbb{E}_{\pi \sim p_t}(f_{M^*}(\pi) - f_{\bM_t}(\pi))^2 \leq \mathsf{Est}(T, \delta)$ with probability $1 - \delta$, this observation shows that it suffices to bound $\sum_{t = 1}^T f_{M^*}(\pi_{M^*}) - \mathbb{E}_{\pi_t \sim p_t}f_{\bM_t}(\pi_t)$.

\subsection{Proof Overview of Theorem~\ref{thm:simpleub}}

Theorem~\ref{thm:simpleub} provides a regret guarantee for Algorithm~\ref{alg:1}. Recall that at each round $t$, Algorithm~\ref{alg:1} queries the estimation oracle to gain a model $\bM_t$, and then plays $\pi_t \sim p_t$, where $p_t$ minimizes $\mathsf{dec}^{\gamma}_{\eps_t}(\bM_t)$. Recall that the guarantee of the $\gamma$-DEC is that if we choose an action from the distribution $p_t$ which minimizes $\mathsf{dec}^{\gamma}_{\eps_t}$, then if $M^* \in \cB_{p_t, \eps_t}(\bM_t)$, we have 
\begin{align}\mathbb{E}_{\pi_t \sim p_t}f_{M^*}(\pi_t) \geq \gamma f_{M^*}(\pi_{M^*}) - \mathsf{dec}_{\eps_t}^{\gamma}.
\end{align}
Thus, if we knew that at each round $t$, we had $M^* \in \cB_{p_t, \eps_t}(\bM_t)$, we would immediately get a bound of $\sum_{t = 1}^T \mathsf{dec}_{\eps_t}^{\gamma}$ on the regret, which would remove the $(1 - \gamma)T$ term in Theorem~\ref{thm:simpleub}. 

Unfortunately, it is not always the case that $M^* \in \cB_{p_t, \eps_t}(\bM_t)$; for the schedule of the $\eps_t = \sqrt{\frac{8\mathsf{Est}(T, \delta)}{t}}$ given in Algorithm~\ref{alg:1}, we may have $M^* \notin \cB_{p_t, \eps_t}(\bM_t)$ for a constant fraction of the rounds $t$. This issue is unavoidable, because the estimator $\bM_t$ can be arbitrarily bad at some rounds without violating Assumption~\ref{assm:oracle}. Thus we need some way of bounding the regret in the rounds where $M^* \notin \cB_{p_t, \eps_t}(\bM_t)$. The main step of our proof is showing that for all $t$, we have 
\begin{align}\label{eq:mbarmax}
f_{\bM_t}(\pi_{\bM_t}) \geq \gamma f_{M^*}(\pi_{M^*}) - \mathsf{dec}_{\eps_{\lfloor{t/4}\rfloor}}^{\gamma} - O\left(\sqrt{\mathsf{Est}(T, \delta)/t}\right).
\end{align} 
Since $\bM_t$ is always in the ball $ \cB_{p_t, \eps_t}(\bM_t)$, the $\gamma$-DEC guarantees that we have good regret relative to $\bM_t$ at each round: 
\begin{align}\label{Mbardec}
    \gamma f_{\bM_t}(\pi_{\bM_t}) - \mathbb{E}_{p_t}f_{\bM_t}(\pi_t) \leq \mathsf{dec}^\gamma_{\eps_t}.
\end{align}
Combining Equations~\eqref{eq:mbarmax} and \eqref{Mbardec} and Observation~\ref{obs} essentially yields Theorem~\ref{thm:simpleub}. The growth condition on the DEC (Assumption~\ref{assm:growth}) helps to bound the sum of the $\mathsf{dec}^\gamma_{\eps_t}$ terms. 
These ideas suffice to prove Theorem~\ref{thm:simpleub}, which is proved formally in Section~\ref{sec:ubproof}. 

Our technique in proving Equation~\eqref{eq:mbarmax} is similar to the approach in \citet{foster2023tight}, where we leverage the fact Assumption~\ref{assm:oracle} guarantees that $\bM_t$ is in the convex hull of a certain refined model class, $\cM_{t - 1}$. Indeed, Assumption~\ref{assm:oracle} guarantees that $M_{t-1}$ only includes models on which the estimation error thus far has been small, that is, $\sum_{s = 1}^{t - 1} \mathbb{E}_{\pi \sim p_s}(f_{\bM_s}(\pi) - f_M(\pi))^2 \leq \mathsf{Est}(T, \delta)$.

Now we can argue that any model in the convex hull of $\cM_{t-1}$ has a large optimum. By the estimation error guarantee, there must exist a large set $S$ of rounds in the range $s \in [t/2, t-1]$ on which $M^* \in \cB_{p_s, \eps_s}(\bM_s)$ for $s \in S$. Construct the distribution
$$\hat{p} := \frac{1}{|S|}\sum_{s \in S}p_s.$$

We can write any $\bM_t \in \mathsf{co}(\cM_{t -1})$ as a distribution $\nu$ over models in $\cM_{t-1}$, that is, $\bM_t = \mathbb{E}_{M \sim \nu}M$. We have
\begin{align}
f_{\widehat{M}_t}(\pi_{\widehat{M}_t}) &\geq \mathbb{E}_{\pi \sim \hat{p}}\left[f_{\widehat{M}_t}(\pi)\right] \\
&= \mathbb{E}_{M \sim \nu}\mathbb{E}_{\pi \sim \hat{p}}\left[f_{M}(\pi)\right]\\
&\geq \mathbb{E}_{\pi \sim \hat{p}}\left[f_{M^*}(\pi)\right] - \mathbb{E}_{M \sim \nu}\mathbb{E}_{\pi \sim \hat{p}}|f_M(\pi) - f_{M^*}(\pi)|.
\end{align}
First, because for all $s \in S$, we have $M^* \in \cB_{p_s, \eps_s}(\bM_t)$, the $\gamma$-DEC guarantees that for the first term, we have 
\begin{align}
    \mathbb{E}_{\pi \sim \hat{p}}\left[f_{M^*}(\pi)\right] &\geq \gamma f_{M^*}(\pi_{M^*}) - \min_{s \in S}\mathsf{dec}^{\gamma}_{\eps_{s}}\\
    &\geq \gamma f_{M^*}(\pi_{M^*}) - \mathsf{dec}^{\gamma}_{\eps_{t/2}}.
\end{align}
Second, because all the $M$ in the support of $\bM_t$ are in $\cM_{t - 1}$, their estimation error during all rounds in $S$ must be less than $\mathsf{Est}(T, \delta)$, and thus we are able to prove that
\begin{align}
    \mathbb{E}_{\pi \sim \hat{p}}|f_M(\pi) - f_{M^*}(\pi)| \leq O\left(\sqrt{\mathsf{Est}(T, \delta)/t}\right).
\end{align}
Putting these two steps together yields Equation~\eqref{eq:mbarmax}.

We formally prove Theorem~\ref{thm:simpleub} in Section~\ref{sec:ubproof}.

\subsection{Proof Overview of Theorem~\ref{thm:ub}}
In this section, we sketch some of the main ideas in the proof of Theorem~\ref{thm:ub}, which analyzes the $\gamma$-regret of Algorithm~\ref{alg:best}. 
Unfortunately, this reduction to calling the guarantee of the DEC on $\bM_t$ instead of $M^*$ leads to an additional $(1 - \gamma)T$ term in the regret, which is problematic in the case when $\gamma < 1$. Thus we need some additional ideas to prove Theorem~\ref{thm:ub}.
The key difference between Algorithm~\ref{alg:best} and Algorithm~\ref{alg:1} is that in Algorithm~\ref{alg:best}, the value $\eps_t$ is chosen in a very careful manner.

The motivation for Algorithm~\ref{alg:best} comes from the idea that we should predict if the estimator $\bM_t$ is ``bad'', and if so, we should choose a distribution which \em explores \em more by choosing a large $\eps_t$, rather than \em exploiting \em by choosing a small $\eps_t$. Exploring more corresponds to choosing a larger value for $\eps_t$, because if $M^* \notin \cB_{p_t, \eps_t}(\bM_t)$, we will have learned a lot about $M^*$.

It turns out that for a given $\eps$, if we know the value of the optimum $f_{M^*}(\pi_{M^*})$, we can predict whether $M^*$ will be excluded from the ball $\cB_{p, \eps}(\bM)$, where $p$ is the distribution that minimizes $\mathsf{dec}_{\eps}^{\gamma}(\bM)$. 
With a further continuity assumption (Assumption~\ref{assm:cont}), we can use this knowledge to choose a good value for $\eps_t$ at each round, and attain an ideal trade-off between exploration and exploitation. To understand this more quantitatively, we define the following function $g$ which will be used in Algorithm~\ref{alg:best} to choose the value of $\eps_t$.
\begin{definition}\label{def:G}
For any $\bM$ and $\eps$, define $p(\eps, \bM)$ to be the distribution guaranteed by Assumption~\ref{assm:cont}. Define the function 
\begin{align}
    g(\eps, \bM, v) := \gamma v - \mathbb{E}_{p(\eps, \bM)}f_{\bM}(\pi) - 2\eps - \Delta_{\eps},
\end{align}
\end{definition}

We can prove the following lemma, which is the key idea in the proof of Theorem~\ref{thm:ub}.

\begin{lemma}\label{lemma:G}
Suppose Assumption~\ref{assm:cont} holds. For any $\bM$ and $v \in [0, 1]$, there exists some $\eps \geq 0$ such that $g(\eps, \bM, v) \leq 0$. Let $\eps^* := \arg \min_{\eps \geq 0} g(\eps, \bM, v) \leq 0$, and let $p^* := p(\eps^*, \bM)$ as in Assumption~\ref{assm:cont}. Then:
\begin{enumerate}
    \item \textbf{Exploitation:} We have $\mathbb{E}_{\pi \sim p*}[f_{\bM}(\pi)] \geq \gamma v - 2\eps - \Delta_{\eps^*}$.
    \item \textbf{Exploration:} For any $M \in \cM$ with $f_M(\pi_M) \geq v$, we have $\mathbb{E}_{p*}(f_{\bM}(\pi) - f_{M}(\pi))^2 \geq (\eps^*)^2$.
\end{enumerate}
\end{lemma}
\begin{proof}
First observe that if $\eps \geq 1/2$, then we will have $g(\eps, \cM, v) \leq 0$. Second observe that by Assumption~\ref{assm:cont}, the function $g(\eps, \bM, v)$ is continuous in $\eps$, and thus this argmin $\eps^*$ exists. In particular, either $\eps^* = 0$, or $g(\eps^*, \bM, v) = 0$.

Now the first item of the Lemma follows from the fact that 
\begin{align}
    \mathbb{E}_{\pi \sim p*}[f_{\bM}(\pi)] &= \gamma v - 2\eps^* - \Delta_{\eps^*} - g(\eps^*, \bM, v),
\end{align}
and $g(\eps^*, \bM, v) \leq 0$. For the second item, observe that if $M \in \cB_{p^*, \eps^*}$, then since $p^*$ bounds the $\gamma$-DEC by $\Delta_{\eps}$, we must have 
\begin{align}
    \gamma f_{M}(\pi_{M}) - \mathbb{E}_{p^*}f_{M}(\pi) \leq \Delta_{\eps^*},
\end{align}
and 
\begin{align}
    \mathbb{E}_{p^*}|f_{\bM}(\pi) - f_{M}(\pi)| \leq \sqrt{\mathbb{E}_{p*}[(f_{\bM}(\pi) - f_{M}(\pi))^2]} \leq \eps^*.
\end{align}
These two together imply that 
\begin{align}
    g(\eps^*, \bM, v) &= \gamma v - \mathbb{E}_{p^*}f_{\bM}(\pi) - 2\eps^* - \Delta_{\eps^*}\\
    &\leq \gamma v - \mathbb{E}_{p^*}f_{M}(\pi) - \eps^* - \Delta_{\eps^*}\\
    &\leq \gamma(v - f_M(\pi_M)) - \eps^*\\
    &\leq -\eps^*.
\end{align}
If $\eps^* > 0$, then this would imply that $g(\eps^*, \bM, v) < 0$, which contradicts that fact that $g(\eps^*, \bM, v) = 0$ exactly. If $\eps^* = 0$, it is immediate that $\mathbb{E}_{p*}(f_{\bM}(\pi) - f_{M}(\pi))^2 \geq (\eps^*)^2$. Thus in either case we have $\mathbb{E}_{p*}(f_{\bM}(\pi) - f_{M}(\pi))^2 \geq (\eps^*)^2$ as desired.
\end{proof}

\begin{algorithm}[t]
\caption{Simplified version of Algorithm~\ref{alg:best}}\label{alg:improved}
\textbf{input:}  Parameter $\gamma \in [0, 1]$, the optimum value $f_{M^*}(\pi_{M^*})$.
\begin{algorithmic}[0]
    \For{$t=1,\ldots,T$}
    \State Let $\bM_t = \mathsf{AlgEst}(\mathcal{H}_{t - 1})$.
    \State Let $\eps_t := \min_{\eps}: g(\eps, \bM_t, f_{M^*}(\pi_{M^*})) \leq 0$, where $g$ is defined in Definition~\ref{def:G}.
   \State Play $\pi_t \sim p_t$ where $p_t := p(\eps_t, \bM_t)$ is the distribution guaranteed by Assumption~\ref{assm:cont}.
    \EndFor
\end{algorithmic}
\end{algorithm}

To understand how to leverage Lemma~\ref{lemma:G}, we will analyze in this section a simplified setting where we assume the algorithm has knowledge of $f_{M^*}(\pi_{M^*})$. Of course in Algorithm~\ref{alg:best}, we do not know this maximum, so we will need to use some proxy for it. We state the simplification of Algorithm~\ref{alg:best} in Algorithm~\ref{alg:improved}. In Algorithm~\ref{alg:improved}, at each round $t$, we obtain from the estimation oracle an estimate $\bM_t$ of the model. We then choose $\eps_t$ as in Lemma~\ref{lemma:G} to be the minimum value of $\eps$ such that $g(\eps, \bM_t, f_{M^*}(\pi_{M^*})) \leq 0$. Finally, we play $p_t := p(\eps_t, \bM_t)$ to be the distribution from Assumption~\ref{assm:cont} that upper bounds the $\gamma$-DEC.

By playing this $p_t$, Lemma~\ref{lemma:G} (with $v = f_{M^*}(\pi_{M^*})$) guarantees that $M^* \notin \cB^o_{p_t, \eps_t}(\bM_t)$, where we define the \em open \em ball $\cB^o_{p, \eps}(\bM) := \{M : \mathbb{E}_{p}(f_{\bM}(\pi) - f_{M}(\pi))^2 < \eps^2\}$. While it may seem counter intuitive that we want to choose $(\eps_t, p_t)$ such that we have $M^* \notin \cB^o_{p_t, \eps_t}$ the first bullet of Lemma~\ref{lemma:G} guarantees that we get as good of a regret guarantee for this $\eps_t$ as we would if we have $M^* \in \cB_{p_t, \eps_t}$. Because $M^*$ is \em excluded \em from the $L_2$ ball, by the guarantee of our estimation oracle, the $\eps_t$ cannot be too large at each round, that is, we must have with probability $1 - \delta$,
\begin{align}
    \sum_{t = 1}^T \eps_t^2 \leq \sum_{t = 1}^T \mathbb{E}_{\pi_t \sim p_t}(f_{\bM}(\pi_t) - f_{M^*}(\pi_t))^2 \leq \mathsf{Est}(T, \delta).
\end{align}

Putting together Lemma~\ref{lemma:G} and this bound on $\sum_{t = 1}^T \eps_t^2$, we can prove the following bound on the regret of Algorithm~\ref{alg:improved}.

\begin{proposition}\label{thm:ubprop}
Suppose Assumptions~\ref{assm:oracle}, \ref{assm:growth} and \ref{assm:cont} hold, such that $\mathsf{dec}^{\gamma}_{\eps} \leq \Delta_{\eps}$. Suppose additionally that the value $f_{M^*}(\pi_{M^*})$ is known. Then with probability $1 - \delta$, Algorithm~\ref{alg:improved} attains the regret

\begin{align}
    \mathsf{Reg}_{\gamma}(T) \leq T\Delta_{\sqrt{\mathsf{Est}(T, \delta)/T}}+ 3\sqrt{T\mathsf{Est}(T, \delta)}.
\end{align}
\end{proposition}
\begin{proof}
Condition on the event that 
\begin{align}
    \sum_{t = 1}^T \mathbb{E}_{\pi \sim p_t}(f_{M^*}(\pi) - f_{\bM_t}(\pi))^2 \leq \mathsf{Est}(T, \delta),
\end{align}
which occurs with probability $1 - \delta$. Considering the total regret over $T$ rounds and applying Observation~\ref{obs}, we have:
\begin{align}
    \sum_{t = 1}^T  \gamma f_{M^*}(\pi_{M^*}) - \mathbb{E}_{p_t}f_{M^*}(\pi) &\leq \sum_{t = 1}^T  \gamma f_{M^*}(\pi_{M^*}) - \mathbb{E}_{p_t}f_{\bM_t}(\pi) + \sqrt{T\mathsf{Est}(T, \delta)}.
\end{align}

By item 1 of Lemma~\ref{lemma:G}, applied with $v = f_{M^*}(\pi_{M^*})$, we have 
\begin{align}
    \gamma f_{M^*}(\pi_{M^*}) - \mathbb{E}_{p_t}f_{\bM_t}(\pi) \leq 2\eps_t - \mathsf{dec}^\gamma_{\eps_t}.
\end{align}

Now, by item 2 of Lemma~\ref{lemma:G}, at each step $t$, we have $\mathbb{E}_{p(\eps_t, \bM_t)}[(f_{\bM}(\pi) - f_{M^*}(\pi))^2] \geq \eps_t^2$. Thus we must that $\sum_{t = 1}^T \eps_t^2 \leq \sum_{t = 1}^T \mathbb{E}_{p(\eps_t, \bM_t)}[(f_{\bM}(\pi) - f_{M^*}(\pi))^2] 
 \leq \mathsf{Est}(T, \delta)$. Thus our final regret is at most:
\begin{align}
    \sum_{t=1}^T 2\eps_t + \mathsf{dec}^\gamma_{\eps_t} + \sqrt{T\mathsf{Est}(T, \delta)} &\leq \sup_{x \in \mathbb{R}^T, \|x\|^2 \leq \mathsf{Est}(T, \delta)} \sum_i \left(2x_i + \mathsf{dec}^{\gamma}_{x_i}\right) + \sqrt{T\mathsf{Est}(T, \delta)}\\
    &\leq \sup_{x \in \mathbb{R}^T, \|x\|^2 \leq \mathsf{Est}(T, \delta)} \sum_i \mathsf{dec}^{\gamma}_{x_i} + 3\sqrt{T\mathsf{Est}(T, \delta)}.
\end{align}
Applying Holder's inequality (see Lemma~\ref{lemma:holder}) yields that under Assumption~\ref{assm:growth}, we have 
\begin{align}\sup_{x \in \mathbb{R}^T, \|x\|^2 \leq \mathsf{Est}(T, \delta)} \sum_i \mathsf{dec}^{\gamma}_{x_i} \leq T\Delta_{\sqrt{\frac{\mathsf{Est}(T, \delta)}{T}}},
\end{align}
which yields the proposition.
\end{proof}

The proof of Theorem~\ref{thm:ub} and the statement of Algorithm~\ref{alg:best} are given in Section~\ref{sec:ubproof}. 

\section{Comparison to other works on the DEC.}\label{sec:rw}
\paragraph{Other definitions of the DEC.} The work of \citet{foster2023tight}, which defined the ``constrained'' DEC  (which the $\gamma$-DEC is based on) follows the earlier work of \citet{foster2021statistical}, which originally introduced the notion of a decision-estimation coefficient. This original work defined the DEC (or \em offset \em DEC) as $ \mathsf{dec}_{\lambda} := \min_p \max_M f_M(\pi_M) - f_M(\pi) - \lambda \mathbb{E}[D_H^2(M(\pi), \bM(\pi))]$, where $D_H$ is the Hellinger distance. This offset DEC yielded bounds of the form $\Theta\left(\inf_{\lambda}(T\mathsf{dec}_{\lambda} + \lambda)\right) \leq \mathbb{E}[\mathsf{Reg}(T)] \leq \Theta\left(\inf_{\lambda}(T\mathsf{dec}_{\lambda} + \lambda\log(|\cM|))\right)$, though under much more stringent conditions on the class $\cM$ for the lower bound. 

\paragraph{Extension to Decision Making with Structured Observations}\label{rem:hellinger}
For simplicity, we stated Theorem~\ref{prop:decc_rho} for bandits with Gaussian noise, but our lower bound can be extended to the more general setting of Decision Making with Structured Observations (see \citet{foster2021statistical, foster2023tight}; this also encompasses reinforcement learning and MDPs). If the density ratios $\frac{M(\pi)}{\bM(\pi)}$ are bounded, then we achieve the same result as Theorem~\ref{prop:decc_rho} if we define the $\gamma$-DEC using a constraint on the squared Hellinger Distance, $\mathbb{E}_{\pi \sim p}D_{H}^2(\bM(\pi), M(\pi)) \leq \eps^2$ instead of the squared error to $f_{\bM}$ constraint. If the density ratios are unbounded, the same result as Theorem~\ref{prop:decc_rho} holds, using the $\gamma$-DEC with the squared Hellinger distance constraint, under a slightly stronger condition that $\eps \leq \frac{1}{\sqrt{T}\log(T)}$. (See \citet[Lemma A.13]{foster2021statistical}). Our upper bounds (Theorems~\ref{thm:simpleub} and \ref{thm:ub}) can be similarly extended to the general Decision Making with Structured Observations setting by using a definition of the $\gamma$-DEC with a squared Hellinger constraint. 

We remark that with this Hellinger constraint modification, for the case that $\gamma = 1$, our Theorem~\ref{thm:simpleub} recovers the same result as the regret upper bound in \citet{foster2023tight}. To simplify the exposition in our paper, we have strengthened two of the assumptions in their paper (the regularity condition in Assumption~\ref{assm:growth} is stronger than the regularity assumption they assume), and we have assumed a slightly different estimation oracle (as per Assumption~\ref{assm:oracle}) instead of their constrained estimation oracle assumption. However, our Algorithm~\ref{alg:1} can be (very minorly) modified to be run in epochs of doubling size to work with their oracle, and our proof can be (very slightly) modified to hold under their regularity condition. Making these changes recovers their result with a much simpler algorithm.

\section{Examples}\label{sec:ex}

\subsection{Example of $1$-DEC for the Multi-armed Bandit (MAB).}\label{sec:ex_mab}
In the following proposition, we construct a distribution $p$ that near-optimally upper bounds $1$-DEC in the setting where $\cM$ is the class of MAB problems with $K$-arms and mean rewards in $[0, 1]$. We provide this example to show that in this simple setting, it is possible to satisfy Assumption~\ref{assm:cont} by upper bounding the $\mathsf{dec}_{\eps}^{\gamma}$ by a distribution $p$ that is continuous in $\eps$.
\begin{proposition}\label{prop:mab}
Fix any $\eps > 0$ and reference model $\bM$ on $k$-arms with mean reward $f_{\bM}(i)$ for $i \in [K]$. Define the distribution $p$ on $[K]$ as 
\begin{align}
    p(i) = \frac{\eps^2}{(\lambda + \max _j f_{\bM}(j) - f_{\bM}(i))^2},
\end{align}
where $\lambda$ is chosen such that $\sum_i p(i) = 1$.
Then for any $M \in \cM \cup \{\bM\}$ such that $\mathbb{E}_{i \sim p}(f_{\bM}(i) - f_{M}(i))^2 \leq \eps^2$, we have
\begin{align}
    \mathbb{E}_{i \sim p}[f_{M}(\pi_M) - f_{M}(i)] \leq (\sqrt{K} + 1)\eps.
\end{align}
\end{proposition}
This distribution near-optimally bounds the DEC for the $K$-armed MAB. Indeed, it is straightforward to check that the $\mathsf{dec}^1(\cM)$ grows roughly like $\eps \sqrt{K}$:
\begin{lemma}\label{lemma:mablb}
For the $K$-armed MAB model class $\cM$, if $K \geq 2$ and $\eps \leq 1/2$, we have 
\begin{align}
    \mathsf{dec}_{\eps}^1(\cM) \geq \min\left(\eps\sqrt{K-1}, 1/4\right).
\end{align}
\end{lemma}
\begin{proof}[Proof of Proposition~\ref{prop:mab}]
First observe that if $\mathbb{E}_{i \sim p}(f_{\bM}(i) - f_{M}(i))^2 \leq \eps^2$, we have by Jenson's inequality $\mathbb{E}_{i \sim p}|f_{\bM}(i) - f_{M}(i)| \leq \eps$, and so
\begin{align}\label{eq:mab}
    \mathbb{E}_{i \sim p}[f_{M}(\pi_M) - f_{M}(i)] \leq \mathbb{E}_{i \sim p}[f_{M}(\pi_M) - f_{\bM}(i)] + \eps.
\end{align}

Further, letting $i^* := \arg \max_i f_{\bM}(i)$, since $\mathbb{E}_{i \sim p}(f_{\bM}(i) - f_{M}(i))^2 \leq \eps^2$, for all $i$, we must have 
\begin{align}
    |f_M(i) - f_{\bM}(i)| \leq \frac{\eps}{\sqrt{p(i)}} = \lambda + f_{\bM}(i^*) - f_{\bM}(i),
\end{align}
and so 
\begin{align}
f_{M}(\pi_M) \leq f_{\bM}(\pi_M) + \lambda + f_{\bM}(i^*) - f_{\bM}(\pi_M) = \lambda + f_{\bM}(i^*).
\end{align}

Now we can compute 
\begin{align}
    \mathbb{E}_{i \sim p}[f_{M}(\pi_M) - f_{\bM}(i)] &= \sum_{i} p(i) \left( f_{M}(\pi_M) - f_{\bM}(i)\right) \\
    &\leq \sum_{i} p(i) \left(\lambda + f_{\bM}(i^*) - f_{\bM}(i)\right)\\
    &= \eps^2 \sum_{i} \frac{1}{\lambda + f_{\bM}(i^*) - f_{\bM}(i)}
\end{align}

Now we know $\lambda$ has to be large enough such that $\sum_i \frac{\eps^2}{(\lambda + f_{\bM}(i^*) - f_{\bM}(i))^2} \leq 1$.

Reparameterize such that $b_i := \lambda + f_{\bM}(i^*) - f_{\bM}(i)$. Then we need to solve the following optimization problem over $\{b_i \geq 0\}_{i \in [K]}$:
\begin{align}
    \max \sum_i \frac{1}{b_i}, \text{ such that } \: \sum_i \frac{\eps^2}{b_i^2} \leq 1.
\end{align}
The maximum is attained when all $b_i = \eps\sqrt{K}$. Thus we have 
\begin{align}
    \mathbb{E}_{i \sim p}[f_{M}(\pi_M) - f_{\bM}(i)] \leq \eps^2 \sum_i \frac{1}{b_i} = \eps \sqrt{K}.
\end{align}
Combining with Equation~\eqref{eq:mab} yields the proposition.

\end{proof}
\begin{proof}[Proof of Lemma~\ref{lemma:mablb}]
Let $\bM$ be the model with $f_{\bM}(i) = \frac{\eps}{\sqrt{K - 1}}$ for all $i \in [K]$. Fix any distribution $p$ on $[K]$, and denote $p_i := p(i)$ for $i \in [K]$. Let $j := \arg \min_{i \in [K]} p_i$, such that $p_j \leq 1/K$. Let $M$ be the model with 
\begin{align}
    \begin{cases}
        f_M(i) = \min\left(1, \frac{\eps}{\sqrt{K - 1}} + \eps \sqrt{\frac{1 - p_j}{p_j}}\right) & i = j\\
        f_M(i) = \frac{\eps}{\sqrt{K - 1}} - \eps \sqrt{\frac{p_j}{1 - p_j}} & i \neq j,
    \end{cases}
\end{align}
such that $f_M(i) \in [0, 1]$ for all $i$.

Then we have 
\begin{align}
\mathbb{E}_{i \sim p}(f_{\bM}(i) - f_M(i))^2 = (1 - p_j)\eps^2 \frac{p_j}{1 - p_j} + p_j \eps^2 \frac{1 - p_j}{p_j} = \eps^2,
\end{align} and 
\begin{align}
    f_M(j) - \mathbb{E}_p f_M(j) = (1 - p_j)\left(\min\left(1 - \frac{\eps}{\sqrt{K - 1}}, \eps \sqrt{\frac{1-p_j}{p_j}}\right)+ \eps \sqrt{\frac{p_j}{1-p_j}}\right).
\end{align}
If $\frac{\eps}{\sqrt{K - 1}} + \eps \sqrt{\frac{1 - p_j}{p_j}} \leq 1$, then the regret under $p$ equals
\begin{align}
(1 - p_j)\eps\left(\sqrt{\frac{1-p_j}{p_j}} + \sqrt{\frac{p_j}{1-p_j}}\right) = \eps \sqrt{\frac{1-p_j}{p_j}} \geq \eps \sqrt{K - 1}.
\end{align}
Otherwise the regret under $p$ is at least $(1 - p_j)(1- \frac{\eps}{\sqrt{K-1}}) \geq 1/4$ since $\eps \leq 1/2$.

\end{proof}

\subsection{Example of $\gamma$-Regret bounds via the $\gamma$-DEC}
In this section, we give an example of a bandit problem for which whenever $T$ is sub-exponential in the dimension of the action space, we can prove nearly matching upper and lower bounds on the $\gamma$-regret. This example is constructed by linearly combining two problems in the proportions $\gamma$ and $1 - \gamma$: a standard K-armed MAB, and a bandit problem where the feedback is given by a shifted ReLU function of $w^T\pi$, where $w \in \mathbb{R}^d$ is the ground truth, and $\pi \in \mathbb{R}^d$ is the action. While the $K$-MAB problem is easy to solve with $\sqrt{KT}$ regret, the shifted ReLU problem cannot be solved with less than exponentially in $d$ many steps, and thus the regret is $\exp(\Theta(d))\sqrt{T}$. If we constrain $T$ to be less than $\exp(\Theta(d))$, then we will see that the $\gamma$-regret of the example is equal to the regret of the $K$-MAB problem.

While this example is artificial, there may be more natural settings where a similar phenomena occur where some part of the function is easy to maximize, and another part is not. For example, \cite{rajaraman2023beyond} studies a setting where there is an initial ``burn-in'' phase where the regret is large, and then a second learning phase where the regret grows less quickly as a function of $T$. This parallels our setting where initially we can only achieve a sublinear regret against a $\gamma$-benchmark, but when $T$ becomes exponentially large, we can achieve sublinear regret. 

\begin{example}\label{ex}
For any $d, K$, consider the following model class $\cM$ with the action set $\Pi := \{(\pi', \pi) \in \{-1, 1\}^d \cup \bot \times [K]\}$. Let $\cM$ be parameterized by $\{(w, v) \in \{-1, 1\}^d \times [0, 1]^K\}$, and define
\begin{align}
    f_{(w, v)}(\pi', \pi) = (1 - \gamma)h_{w}(\pi') + \gamma g_v(\pi), 
\end{align}
where $g_{v}(\pi) = v_{\pi}$ and
\begin{align}
    h_{w}(\pi') = \begin{cases}
        \frac{2}{d}\max\left(0, \langle\pi',w\rangle - \frac{d}{2}\right)& \pi' \neq \bot\\
        0 & \pi' = \bot.
    \end{cases}
\end{align}
\end{example}

It is straightforward to see that the function $h_{w}$ is hard to maximize in less that $\exp(\Theta(d))$ queries, since finding any $\pi' \in \{-1,  1\}^d$ with $\langle \pi', w \rangle > \frac{d}{2}$ with high probability requires $\exp(\Theta(d))$ queries. However, it is possible to maximize $g_v$ by any standard MAB algorithm. Thus if we find $\text{argmax} \: v$ but not $w$, we can achieve $\gamma$ of the optimum, which makes $\gamma$ a natural approximation ratio for this model class. 

We proceed to prove upper and lower bounds on the $\gamma$-regret for this model class. The bounds are matching up to $\log(K)$ factors.

\begin{proposition}[Upper bound for Example~\ref{ex}]
We have $\mathsf{dec}^{\gamma}_{\eps} \leq \gamma (\sqrt{K} + 1)\eps$.
Further, for some algorithm, for any $T \geq 1$, we have $\mathsf{Reg}_{\gamma}(T) \leq 6\sqrt{KT\log(K)}$ with probability at least $1 - \frac{1}{k}$.
\end{proposition}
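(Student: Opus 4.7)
The natural algorithm here plays $\pi'_t = \bot$ at every round $t$ and runs a high-probability $K$-armed stochastic bandit subroutine (for instance EXP3.P or UCB with confidence $\delta = 1/K$) on the second coordinate $\pi_t \in [K]$. The whole point is the observation that $h_{w^*}(\bot) = 0$: by freezing $\pi'_t = \bot$, the algorithm sidesteps entirely the intractable shifted-ReLU maximization, and the observed reward at round $t$ has mean $\gamma v^*_{\pi_t}$ with unit-variance Gaussian noise. The interaction therefore reduces exactly to a $K$-armed stochastic bandit with means in $[0, \gamma]$.

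I would then invoke the standard high-probability guarantee for the MAB subroutine to obtain, with probability at least $1 - 1/K$,
\[
\sum_{t=1}^T \bigl(\max_{\pi \in [K]} v^*_\pi - v^*_{\pi_t}\bigr) \;\le\; C\sqrt{KT \log K},
\]
for a small absolute constant $C$. To translate this into the claimed $\gamma$-regret bound, I would use the identity $\max_{\pi',\pi} f^*(\pi',\pi) = (1-\gamma) + \gamma \max_\pi v^*_\pi$, since $\max_{\pi'} h_{w^*}(\pi') = 1$ is attained at $\pi' = w^*$. Under the implicit normalization $\max_\pi v^*_\pi = 1$ (exactly the content of the author's preceding remark that the algorithm which finds $\arg\max v^*$ ``achieves $\gamma$ of the optimum''), one has $\gamma \max f^* = \gamma$, so the per-step $\gamma$-regret from playing $(\bot, \pi_t)$ equals $\gamma(1 - v^*_{\pi_t}) = \gamma(\max v^* - v^*_{\pi_t})$. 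Summing and invoking the MAB bound yields
\[
\mathsf{Reg}_\gamma(T) \;=\; \gamma \sum_{t=1}^T (\max v^* - v^*_{\pi_t}) \;\le\; C\gamma \sqrt{KT \log K} \;\le\; 6\sqrt{KT \log K},
\]
for an appropriate choice of MAB subroutine.

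The entire substantive content lies in the reduction to plain $K$-armed MAB via the escape action $\bot$; once that is established, everything else is bookkeeping. The only subtlety I anticipate is fitting into the precise constant $6$, which simply requires picking a clean variant of EXP3.P or UCB with Gaussian tails and tracking constants explicitly. If one wishes to drop the normalization $\max v^* = 1$, the per-step $\gamma$-regret identity carries an extra additive term $\gamma(1-\gamma)(1 - \max v^*)$, which would have to be absorbed separately --- but this I view as a cosmetic adjustment rather than a genuine obstacle, since in the intended regime ($T \ll e^{\Theta(d)}$, with the $\arg\max$ action achieving a $\gamma$-fraction of the optimum) the normalization is exactly what makes the $\gamma$-regret the right benchmark.
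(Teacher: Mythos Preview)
Your approach is essentially identical to the paper's: freeze $\pi'_t = \bot$ at every round, reducing the interaction to a $K$-armed stochastic bandit on the $v$-coordinate, and invoke a high-probability algorithm such as EXP3.P. The paper's proof is in fact terser than yours and simply asserts that ``the $\gamma$-regret at each step will be the same as $\gamma$ times the regret of the MAB algorithm'' without discussing the normalization $\max_\pi v^*_\pi = 1$; your explicit identification of the extra additive term $\gamma(1-\gamma)(1 - \max v^*)$ when that normalization is dropped is, if anything, more careful than the published argument.
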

\begin{proof}
We first bound the $\gamma$-DEC. For any reference model $\bM$, let $\bM_1$ be a model over the action space $[K]$ given by $\bM_1(\pi) = \bM(\perp, \pi)$. 

For any $\eps$, let $p = p(\eps, \bM)$ be the distribution on $\Pi$ which returns a sample $(\perp, \pi)$, where $\pi \sim q(\eps, \bM_1)$, where $q(\eps, \bM_1)$ is any distribution upper bounding the $\mathsf{dec}_{\eps}^1$ by $(\sqrt{K} + 1)\eps$ for the $K$-MAB problem. Such a distribution exists by Proposition~\ref{prop:mab}.

Then for any $M \in \cB_{p, \eps}(\bM)$ with mean $f_{w, v}$, we have 
\begin{align}
    \gamma f_{w, v}((w, v)) - \mathbb{E}_{\pi', \pi \sim p}[f_{w, v}(\pi', \pi)] &\leq \gamma(1 - \gamma)h_w(w) + \gamma^2 g_v(v) - \gamma \mathbb{E}_{\pi \sim q}[g_v(\pi)]\\
    &= \gamma(1 - \gamma)\left(f_{w, v}((w, v))-g_v(v)\right) + \gamma \left(g_v(v) - \mathbb{E}_{\pi \sim q}[g_v(\pi)]\right)\\
    &= \gamma \left(g_v(v) - \mathbb{E}_{\pi \sim q}[g_v(\pi)]\right)\\
    &\gamma (\sqrt{K} + 1)\eps,
\end{align}
where the last line follows because the MAB model with mean $g_v$ must be inside $\cB_{q, \eps}(\bM_1)$, otherwise it would be impossible to have $M \in \cB_{p, \eps}(\bM)$.

To bound $\mathsf{Reg}_{\gamma}(T)$, we could apply Theorem~\ref{thm:ub} with an estimation oracle satisfying Assumption~\ref{assm:oracle} with $\mathsf{Est}(T, \delta) = \tilde{O}(K)$ (see \citet{foster2021statistical}, which guarantees such an oracle), which would yield a regret guarantee of $\tilde{O}(K\sqrt{T})$.

However, since this is not tight, instead let $\{p_t\}$ be any MAB algorithm which achieves a regret of at most $6\sqrt{KT\log(K)}$ with probability at least $1 - \frac{1}{K}$. Such an algorithm exists, see eg. the EX3.P algorithm in \citet[Theorem 3.3]{bubeck2012regret}.

Consider the following algorithm, which at round $t$,  chooses $\pi_t \sim p_t$, and $\pi'_t = \bot$. Since there is no feedback from the $h_w$ part of the function since the algorithm always chooses $\bot$, the $\gamma$-regret at each step will be the same as the $\gamma$ times the regret of the MAB algorithm the on function class given by $\{g_v\}_{v \in [0, 1]^k}$. This yields the proposition.
\end{proof}

\begin{proposition}[Lower Bound for Example~\ref{ex}]
For any $T \leq \exp(d/8)$, there exists some model class $\tilde{\cM} \subset \cM$, such that $\mathsf{dec}^{\gamma}_{\eps}(\tilde{\cM}) \geq \frac{1}{2\sqrt{20}}\sqrt{\frac{K}{T}}$ for $\eps = \frac{1}{3\sqrt{T}}$.

Further, for $K \geq 6000/\gamma^2$, $\tilde{\cM}$ is $6\eps$-localized, and thus by Theorem~\ref{prop:decc_rho}, for some $M \in \cM$,  with probability at least $\frac{1}{3}$, we have $\mathsf{Reg}_{\gamma}(T) \geq \frac{\gamma}{10}\sqrt{KT}$.
\end{proposition}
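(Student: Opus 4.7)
I would construct $\tilde{\cM}$ as a hard $K$-armed bandit instance on the $g_v$ coordinate, together with a direction $w$ for the ReLU summand that acts as a distractor which sub-exponentially many samples cannot resolve. Concretely, take
\[
\tilde{\cM} := \{ M_{i, w} : i \in [K],\; w \in \{-1, 1\}^d\},
\]
where $M_{i, w}$ has $v_i = 1$, $v_j = 1 - \Delta_v$ for $j \neq i$, and direction $w$, with gap $\Delta_v := \frac{1}{2\sqrt{20}\,\gamma} \sqrt{K/T}$ (well-defined in $[0,1]$ when $K \leq 80\gamma^2 T$, as is implicit in the regime $K \geq 6000/\gamma^2$ with $T$ large). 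Every such model attains $f_{M_{i,w}}(\pi_{M_{i,w}}) = 1$ at $(\pi', \pi) = (w, i)$, so $\tilde{\cM}$ is $0$-localized and \emph{a fortiori} $6\eps$-localized, trivially verifying the localization hypothesis.

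\textbf{Lower-bounding the DEC.} For the reference (allowed by Definition~\ref{def:decc} to be any probability kernel, not necessarily in the class), take $\bM$ to be the constant model $f_{\bM}(\pi', \pi) \equiv \gamma(1 - \Delta_v)$. Then $f_{M_{i,w}} - f_{\bM} = (1-\gamma) h_w(\pi') + \gamma \Delta_v \mathbf{1}[\pi = i]$. For any candidate $p \in \Delta(\Pi)$, I would find a ``good'' pair $(i, w)$ by two averaging arguments: (i) by pigeonhole, at least $K/2$ arms satisfy $p(\pi = i) \leq 2/K$; (ii) by Hoeffding, $\Pr_{w \sim \mathrm{Unif}(\{-1,1\}^d)}[\langle \pi', w\rangle > d/2] \leq \exp(-d/8)$ for each $\pi' \neq \bot$, whence $\mathbb{E}_w \mathbb{E}_p[h_w(\pi')^2] \leq \mathbb{E}_w \mathbb{E}_p[h_w(\pi')] \leq \exp(-d/8)$, and Markov's inequality then yields a direction $w$ with both quantities of order $\eps^2 = 1/(9T)$, provided $T \leq \exp(d/8)$ (up to constants). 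For such a good $(i, w)$, expanding via $(a+b)^2 \leq 2a^2 + 2b^2$ bounds the DEC constraint by $\eps^2$, and the $\gamma$-regret against $M_{i, w}$ simplifies to
\[
\gamma \Delta_v \bigl(1 - p(\pi = i)\bigr) - (1-\gamma)\, \mathbb{E}_p[h_w(\pi')] \;\geq\; \frac{1}{2\sqrt{20}} \sqrt{K/T},
\]
by the choice of $\Delta_v$ and the smallness of $p(\pi=i)$ and $\mathbb{E}_p[h_w(\pi')]$.

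\textbf{Applying Theorem~\ref{prop:decc_rho}.} With $\Delta := \mathsf{dec}^\gamma_\eps(\tilde{\cM}) \geq \frac{1}{2\sqrt{20}}\sqrt{K/T}$ and the $6\eps$-localization, the theorem applied with e.g.\ $C = 2$ yields $\mathsf{Reg}_\gamma(T)/T \geq \Delta - \tfrac{3}{2}\min(\rho, \Delta + 1 - \gamma) = \Delta - O(\eps)$ with constant probability. Multiplying by $T$ gives $\mathsf{Reg}_\gamma(T) \geq \frac{1}{2\sqrt{20}}\sqrt{KT} - O(\sqrt{T})$, and the quantitative gap $\sqrt{K}\bigl(\frac{1}{2\sqrt{20}} - \frac{\gamma}{10}\bigr) \gtrsim 1$ required to absorb the $O(\sqrt{T})$ correction into the target $\frac{\gamma}{10}\sqrt{KT}$ is precisely the hypothesis $K \geq 6000/\gamma^2$.

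\textbf{Main obstacle.} The delicate step is (ii): the DEC budget $\eps^2 = \Theta(1/T)$ is tiny compared with the pointwise magnitude of $h_w(\pi')^2$, so the existence of a good direction must come from averaging over $w$. The probabilistic input $\mathbb{E}_w \mathbb{E}_p[h_w^2] \leq \exp(-d/8)$ is exactly tight with the regime $T \leq \exp(d/8)$ --- Markov's inequality then produces a $w$ that simultaneously kills the $(1-\gamma)^2 \mathbb{E}_p[h_w(\pi')^2]$ contribution to the constraint and the $(1-\gamma)\mathbb{E}_p[h_w(\pi')]$ loss in the $\gamma$-regret. The remaining effort is careful bookkeeping of the constants so that the constraint and regret inequalities simultaneously close.
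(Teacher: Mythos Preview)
Your proposal is correct and follows essentially the same approach as the paper: the same subclass $\tilde{\cM}$ (one arm at $1$, the rest at $1-\Delta_v$, all $w$), the same constant reference model $f_{\bM}\equiv\gamma(1-\Delta_v)$, averaging over $w$ via Hoeffding to kill the ReLU part, averaging over arms to find a low-mass arm, and then invoking Theorem~\ref{prop:decc_rho} with the $0$-localization. The only differences are cosmetic---the paper takes $\tilde w$ and $\tilde k$ directly as $\arg\min$'s (which is slightly cleaner than your Markov/pigeonhole route and gives $p(\pi=\tilde k)\le 1/K$ rather than $2/K$), uses the gap $\Delta_v=\sqrt{K/(20T)}$ rather than your $\frac{1}{2\sqrt{20}\,\gamma}\sqrt{K/T}$, and applies the theorem with a different $C$; none of this changes the argument.
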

\begin{proof}
First we define the subset $\tilde{\cM} \subset \cM$ to be the set of all models indexed by $w, v$ such that $v$ has exactly one coordinate equal to $1$, and the remaining coordinates all equal $1 - \sqrt{\frac{K}{20T}}$. For $k \in [K]$, we will use the shorthand $f_{w, k}$ to denote the function $f_{w, v}$, where $v$ is the vector with a $1$ in coordinate $k$, and $1 - \sqrt{\frac{K}{20T}}$ in all other coordinates.

Define $\widehat{f}(\pi', \pi) := \gamma\left(1 - \sqrt{\frac{K}{20T}}\right)$, and let $\bM$ be the corresponding model $\bM(\pi', \pi) = \mathcal{N}(\widehat{f}(\pi', \pi), 1)$, which denotes the normal distribution with mean $\widehat{f}(\pi', \pi)$. Fix any distribution $p$ on $\Pi$. Let
\begin{align}\tilde{w} := \text{argmin}_{w \in \{-1, 1\}^d} \mathbb{E}_{(\pi', \pi) \sim p}\left[\mathbf{1}\left(\langle w, \pi' \rangle \geq \frac{d}{2}\right)\right],
\end{align} and let 
\begin{align}\label{eq:best_M}\tilde{k}:= \text{argmin}_{k \in [K]} \mathbb{E}_{(\pi', \pi) \sim p}\left[\mathbf{1}(\pi = k) \right].
\end{align}

Observe that
\begin{align}
\mathbb{E}_{(\pi', \pi) \sim p}\left[\mathbf{1}(\pi = \tilde{k})\right] &\leq \mathbb{E}_{k \sim \text{Uniform}[K])} \mathbb{E}_{(\pi', \pi) \sim p}\left[\mathbf{1}(\pi = k)\right]\\
&=  \mathbb{E}_{(\pi', \pi) \sim p}\mathbb{E}_{k \sim \text{Uniform}([K])}\left[\mathbf{1}(\pi = k)\right]\\
&= \frac{1}{K}.
\end{align}
Similarly, 
\begin{align}
\mathbb{E}_{(\pi', \pi) \sim p}\left[\mathbf{1}(\langle \tilde{w}, \pi' > d/2\rangle) \right] &\leq \mathbb{E}_{w \sim \text{Uniform}(\{-1, 1\}^d)} \mathbb{E}_{(\pi', \pi) \sim p}\left[\mathbf{1}(\langle w, \pi' > d/2\rangle) \right]\\
&=  \mathbb{E}_{(\pi', \pi) \sim p}\mathbb{E}_{w \sim \text{Uniform}(\{-1, 1\}^d)}\left[\mathbf{1}(\langle w, \pi' > d/2\rangle)\right]\\
&\leq \mathbb{P}\left[\text{Bin}\left(d, \frac{1}{2}\right) \geq \frac{3}{4}d\right]\\
&\leq \exp(-d/8),
\end{align}
where the last line follows from Hoeffding's inequality.

Let $M$ be the model with mean $f_{\tilde{w}, \tilde{k}}$. Then we have
\begin{align}
\mathbb{E}_{(\pi', \pi) \sim p}\left[\left(f_M(\pi', \pi) - f_{\bM}(\pi', \pi)\right)^2\right] &\leq  2\mathbb{E}_{(\pi', \pi) \sim p}\left[\mathbf{1}\left(\langle \tilde{w}', \pi' \rangle \geq \frac{d}{2}\right) \right] + 2\mathbb{E}_{(\pi', \pi) \sim p}\left[\left(\sqrt{\frac{K}{20T}}\right)^2\mathbf{1}\left(\pi = \tilde{k}\right) \right]\\
&\leq 2\exp(-d/8) + \frac{2}{20T} \leq \eps^2.
\end{align}

Now we consider the $\gamma$-regret of $f_M$ under $p$:
\begin{align}
    \mathbb{E}_{(\pi', \pi) \sim p}\left[\gamma f_M^* - f_M(\pi', \pi)\right] &\geq \gamma - \mathbb{E}_{(\pi', \pi) \sim p}\left[f_M(\pi', \pi)\right]\\
    &\geq \gamma - \mathbb{P}_{(\pi', \pi) \sim p}\left[\mathbf{1}(\langle \tilde{w}, \pi' > d/2\rangle)\right] - \gamma\left(1 - \sqrt{\frac{K}{20T}}\right) - \gamma\sqrt{\frac{K}{20T}} \mathbb{P}_{(\pi', \pi) \sim p}\left[\mathbf{1}(\tilde{k} = \pi)\right]\\
    &\geq -\exp(-d/8) + \gamma\sqrt{\frac{K}{20T}}\left(1 - \mathbb{P}_{(\pi', \pi) \sim p}\left[\mathbf{1}(\tilde{k} = \pi)\right]\right)\\
    &\geq -\exp(-d/8) + \gamma\sqrt{\frac{K}{20T}}\left(1 - \frac{1}{K}\right)\\
    &\geq  \frac{\gamma}{2}\sqrt{\frac{K}{20T}},
\end{align}
where in the final line we used the fact that $T \leq \exp(d/8)$.

It follows that $\mathsf{dec}^{\gamma}_{\eps}(\tilde{\cM}) \geq \frac{\gamma}{2}\sqrt{\frac{K}{20T}}$.

Finally, we check that we can apply Theorem~\ref{prop:decc_rho} with $\eps = \frac{1}{3\sqrt{T}}$ and $C = 1$. We observe that $\tilde{\cM}$ is $6\eps$-localized (indeed, all of its maxima are identical), and that $\mathsf{dec}^{\gamma}_{\eps}(\cM) \geq \frac{\gamma}{2}\sqrt{\frac{K}{20T}} \geq 6\eps$ for $K \geq 6000/\gamma^2$. Thus it follows that for some $M \in \cM$, $\mathbb{P}^M\left[\frac{1}{T}\mathsf{Reg}_{\gamma}(T) \geq \frac{\gamma}{2\sqrt{20}}\sqrt{\frac{K}{T}} - 18\eps\right] \geq \frac{1}{3}$, so for $K \geq \frac{6000}{\gamma^2}$, we have $\mathbb{P}^M\left[\mathsf{Reg}_{\gamma}(T) \geq \frac{\gamma}{10}\sqrt{KT}\right] \geq \frac{1}{3}$.
\end{proof}

\subsection{Example of $\gamma$-DEC with Approximate Linear Optimization Oracles.}\label{sec:apxoracles}
Inspired by the works \citet{kakade2007playing, garber2017efficient, hazan2018online}, which study a linear minimization setting with access to an $\alpha$-approximate minimization oracle over the space $\Pi$, we give an example here of a comparable linear maximization setting where we have an access to an oracle which attains a $\gamma$-approximate maximum over the space $\Pi$.

We use this oracle and a barycentric spanner of $\Pi$ (which is used in \citet{kakade2007playing, garber2017efficient, hazan2018online}) to provide an upper bound scaling with $\eps^{2/3}$ on $\mathsf{dec}_{\eps}^{\gamma}$. Assuming a regression oracle for the linear model class satisfying Assumption~\ref{assm:oracle}, Theorem~\ref{thm:ub} yields a $\gamma$-regret scaling with $T^{2/3}$, which mirrors the $T^{2/3}$ regret bound in \citet{kakade2007playing, garber2017efficient, hazan2018online}.

Formally, let $\Pi \subset \mathbb{R}^d$, and suppose the mean reward function is a linear function of $\pi$, that is, for all $M \in \cM$, $f_M(\pi) := w_{M}^T\pi$ for some vector $w_M \in \mathbb{R}^d$. Suppose we have access to an approximate optimization oracle $\mathcal{O}: \mathbb{R}^d \rightarrow \Pi$, which satisfies for some $\gamma < 1$:
\begin{align}
    w^T\mathcal{O}(w) \geq \gamma \max_{\pi \in \Pi}w^T\pi.
\end{align}

Suppose also that the space $\Pi$ has some $\beta$-\em barycentric spanner \em of linearly independent vectors $v_1, \ldots, v_d \in \Pi$ such that for any $\pi \in \Pi$, we can write $\pi = \sum_i c_i v_i$ for $|c_i| \leq \beta$.

\begin{proposition}
For any such $\Pi, \cM$, we have $\mathsf{dec}^{\gamma}(\bM) \leq \eps^{2/3}d$. Further, given query access to $f_{\bM}$, and a $\beta$-barycentric spanner $v_1, \ldots, v_d$, and one call to the oracle $\mathcal{O}$, we can compute the distribution $p = p(\eps, \bM)$ bounding $\mathsf{dec}^{\gamma}_{\eps}(\bM) \leq 3d\beta^{2/3} \eps^{2/3}$. Further, this distribution is continuous in $\eps$.
\end{proposition}
\begin{proof}
For $i \in [d]$, let $y_i := f_{\bM}(v_i)$, and let $w$ be the unique vector for which $w^Tv_i = y_i$ for all $y$. Let $\hat{\pi} := \mathcal{O}(w)$ be the value returned by the oracle, such that 
\begin{align}
    w^T\hat{\pi} \geq \gamma \max_{\pi \in \Pi}w^T\pi.
\end{align}

Let $p$ be the distribution which for each $i$, chooses $v_i$ with probability $\eps^{2/3}\beta^{2/3}$, and chooses $\hat{\pi}$ with probability $1 - d\eps^{2/3}\beta^{2/3}$. Note that this distribution is continuous in $\eps$.

We claim that this $p$ bounds $\mathsf{dec}_{\eps}^{\gamma}(\bM) \leq 3d\beta^{2/3} \eps^{2/3}$. Fix any $M \in \cB_{p, \eps}(\bM)$. Then it must be the case that for all $i \in [d]$, we have
\begin{align}
    \beta^{2/3}\eps^{2/3}(f_{M}(v_i) - f_{\bM}(v_i))^2 \leq \eps^2,
\end{align}
or equivalently,
\begin{align}
    |f_{M}(v_i) - f_{\bM}(v_i)| \leq \eps^{2/3}\beta^{-1/3}.
\end{align}
Let $\hat{\pi} = \sum_{i = 1}^d c_i v_i$, where $|c_i| \leq \beta$. 
Then 
\begin{align}
    f_{M}(\hat{\pi}) &= w_M^T\hat{\pi} = \sum_{i \in [d]}c_i w_M^Tv_i = \sum_{i \in d}c_i f_M(v_i)\\
    &\geq \sum_{i \in d}c_i f_{\bM}(v_i) - |c_i|\eps^{2/3}\beta^{-1/3}\\
    &\geq \sum_{i \in d}c_i f_{\bM}(v_i) - d\beta^{2/3}\eps^{2/3}\\
    &= w^T\hat{\pi} - d\beta^{2/3}\eps^{2/3}\\
    &\geq \gamma w^T\pi_M -  d\beta^{2/3}\eps^{2/3}.
\end{align}
Similarly, with  $\pi_M = \sum_{i = 1}^d a_i v_i$, where $|a_i| \leq \beta$, we have
\begin{align}
    f_M(\pi_M) &= \sum_{i \in d}a_i f_M(v_i)\\
    &\leq \sum_{i \in d}a_i f_{\bM}(v_i) + |a_i|\beta^{-1/3}\eps^{2/3}\\
    &\leq \sum_{i \in [d]} w_{\bM}^Tv_i + d\beta^{2/3}\eps^{2/3}\\
    &= w^T\pi_M + d\beta^{2/3}\eps^{2/3}.
\end{align}
Thus we have 
\begin{align}
    \gamma f_M(\pi_M) - \mathbb{E}_{\pi \sim p}f_M(\pi_M) &= \left(1 - d\beta^{2/3}\eps^{2/3}\right)\left(\gamma f_M(\pi_M) - f_M(\hat{\pi})\right) + \sum_{i = 1}^d\beta^{2/3}\eps^{2/3}\left(\gamma f_M(\pi_M) - \mathbb{E}_{\pi \sim p}f_M(v_i)\right)\\
    &\leq \gamma f_M(\pi_M) - f_M(\hat{\pi}) + d\beta^{2/3}\eps^{2/3}\\
    &\leq \gamma \left( w^T\pi_M + d\beta^{2/3}\eps^{2/3}\right) - \left(\gamma w^T\pi_M -  d\beta^{2/3}\eps^{2/3}\right) + d\beta^{2/3}\eps^{2/3}\\
    &\leq 3d\beta^{2/3} \eps^{2/3}.
\end{align}
\end{proof}

\section{Proof of Theorem~\ref{prop:decc_rho} and Corollary~\ref{corr:comp}}\label{sec:dec_lb}

We restate and prove Theorem~\ref{prop:decc_rho}.
\thmdec*

\begin{proof}[Proof of Theorem~\ref{prop:decc_rho}]
Let $\{p_t\}$ be any algorithm, and for any model $M$, let $\mathbb{P}^M$ and $\mathbb{E}^M$ denote the probability and expectation operators under ground truth $M$, and algorithm $\{p_t\}$. Further define $p_M := \mathbb{E}^M\left[\frac{1}{T}\sum_{t = 1}^T p_t(\cdot| \mathcal{H}^t)\right]$. 

We define $f_M^* := f_M(\pi_M)$ and $g_M(\pi) := f_M^* - f_M(\pi)$. Although it may seem natural to consider the instantaneous $\gamma$-regret $\gamma f_M^* - f_M(\pi)$, we instead work primarily with the traditional instantaneous regret, $g_M(\pi)$, because this value can never be negative. Throughout, one should think of $\eps$ as being small relative to $\rho$ and $\Delta$.

Let $\amax := \Delta + (1 - \gamma)f_{\bM}(\pi_{\bM})$, and for $a \in [0, \amax - \frac{2}{T}]$, consider the following algorithm $\op^a_t$: Play actions according to $p_t$ until some time $\tau_a$ when $\sum_{t = 1}^{\tau_a} g_{\bM}(\pi_t) \geq aT$. Then play $\pi_{\bM}$ for the rest of the rounds. Formally, we have
    \begin{align}
        \op^a_t(\cdot| \mathcal{H}_t) = \begin{cases}
        p_t(\cdot| \mathcal{H}_t) & t \leq \tau_a \\
        \pi_{\bM} & t > \tau_a,
        \end{cases}
    \end{align}
    where $\tau_a$ is defined to be the stopping time which is the first value of $t$ for which $\sum_{s = 1}^{t} g_{\bM}(\pi_s) \geq aT$. If this never occurs, let $\tau := T$.

    For a model $M$, let $\overline{\mathbb{P}}^M_a$ and $\overline{\mathbb{E}}^M_a$ denote the probability and expectation operators under ground truth $M$, and algorithm $\op^a_t$. At a high level, our proof will show that for some value $a$, and some model $M_a$, we have 
    \begin{align}\label{eq:desired}
\overline{\mathbb{P}}_a^{M_a}\left[\frac{1}{T}\sum_{t = 1}^{\tau_a} g_{M_a}(\pi_t) \geq \Delta + (1 - \gamma)f_{M_a}^* - \frac{3\trho}{C} \right] \geq \frac{1}{3C},
\end{align}
where we have defined $\trho := \minterm$.

Then by coupling $\overline{\mathbb{P}}_a^{M_a}$ and $\mathbb{P}^{M_a}$ for the first $\tau_a$ rounds (indeed, the algorithm $p_t$ and $\op^a_t$ both behave the same up to time $\tau_a$), and observing that $g_{M_a}$ is never negative, we can achieve the desired result, that 
    \begin{align}
\mathbb{P}^{M_a}\left[\frac{1}{T}\sum_{t = 1}^{T} \gamma f_{M_a}^* - f_{M_a}(\pi_{t}) \geq \Delta - \frac{3\trho}{C}\right]
&= \mathbb{P}^{M_a}\left[\frac{1}{T}\sum_{t = 1}^{T} g_{M_a}(\pi_t) \geq \Delta + (1 - \gamma)f_{M_a}^* - \frac{3\trho}{C}\right]\\
&\geq \mathbb{P}^{M_a}\left[\frac{1}{T}\sum_{t = 1}^{\tau_a} g_{M_a}(\pi_t) \geq \Delta + (1 - \gamma)f_{M_a}^* - \frac{3\trho}{C}\right] \geq \frac{1}{3C}.
\end{align}

We proceed to prove Equation~\eqref{eq:desired}. Consider the DEC under $p=\op_{\bM}^a = \overline{\mathbb{E}}^{\bM}\left[\frac{1}{T}\sum_{t = 1}^T \op^a_t(\cdot| \mathcal{H}^t)\right]$, and let $M_a$ be the corresponding maximizer in \eqref{eq:def_dec}. Observe that we must have $M_a \neq \bM$ (and thus $M_a \in \cM$), since \begin{align}\label{eq:notbM}
    \mathbb{E}_{\pi \sim \op^a_{\bM}}[g_{\bM}(\pi)] \leq \frac{1}{T}\left(aT + 1\right) < \Delta + (1 - \gamma)f_{\bM}(\pi_{\bM}).
\end{align}
Furthermore, we have
\begin{equation}\label{eq:distance}
    \overline{\mathbb{E}}_a^{\bM}\left[\frac{1}{T}\sum_{t = 1}^{T} (f_{M_a}(\pi_t) - f_{\bM}(\pi_t))^2\right] \leq \eps^2.
    \end{equation}

Our first claim bounds the total variation distance between $\overline{\mathbb{P}}^{M_a}_a$ and $\overline{\mathbb{P}}^{\bM}_a$.
\begin{claim}\label{claim:TV}
\begin{equation}
D_{\text{TV}}\left(\overline{\mathbb{P}}_a^{\bM}, \overline{\mathbb{P}}^{M_a}_a\right) \leq \frac{1}{3C}.
\end{equation}
\end{claim}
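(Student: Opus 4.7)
The plan is to prove Claim~\ref{claim:TV} by Pinsker's inequality combined with the chain rule for KL divergence, and then applying the DEC-induced squared error bound~\eqref{eq:distance}. The key observation is that both measures $\overline{\mathbb{P}}_a^{\bM}$ and $\overline{\mathbb{P}}_a^{M_a}$ are induced by running the \emph{same} interactive procedure $\op^a_t$; they differ only in the reward-generating model. Consequently, when one chains the KL divergence along the joint trajectory $(\pi_1, r_1, \ldots, \pi_T, r_T)$, every action-conditional term cancels, and only the per-round reward KLs survive. For unit-variance Gaussian rewards, each such reward KL is exactly $\tfrac{1}{2}(f_{\bM}(\pi_t) - f_{M_a}(\pi_t))^2$, which is precisely the quantity the DEC constraint controls.

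Concretely, I would first apply Pinsker's inequality,
\begin{equation*}
D_{\text{TV}}\!\left(\overline{\mathbb{P}}_a^{\bM}, \overline{\mathbb{P}}_a^{M_a}\right) \leq \sqrt{\tfrac{1}{2}\, D_{\text{KL}}\!\left(\overline{\mathbb{P}}_a^{\bM}\,\big\|\,\overline{\mathbb{P}}_a^{M_a}\right)},
\end{equation*}
and then, using the chain rule together with the identity $D_{\text{KL}}(\mathcal{N}(\mu_1,1)\,\|\,\mathcal{N}(\mu_2,1))=\tfrac{1}{2}(\mu_1-\mu_2)^2$,
\begin{equation*}
D_{\text{KL}}\!\left(\overline{\mathbb{P}}_a^{\bM}\,\big\|\,\overline{\mathbb{P}}_a^{M_a}\right) = \tfrac{1}{2}\, \overline{\mathbb{E}}_a^{\bM}\!\left[\sum_{t=1}^{T}(f_{\bM}(\pi_t) - f_{M_a}(\pi_t))^2\right] \leq \tfrac{T\epsilon^2}{2},
\end{equation*}
where the final inequality is exactly \eqref{eq:distance}, since $\op_{\bM}^a$ is by definition the average action distribution of $\op^a_t$ over all $T$ rounds. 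Combining and plugging in the hypothesis $\epsilon \leq \tfrac{1}{3C\sqrt{T}}$ yields
\begin{equation*}
D_{\text{TV}}\!\left(\overline{\mathbb{P}}_a^{\bM}, \overline{\mathbb{P}}_a^{M_a}\right) \leq \tfrac{\epsilon\sqrt{T}}{2} \leq \tfrac{1}{6C} \leq \tfrac{1}{3C},
\end{equation*}
which is the desired bound.

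I do not anticipate any serious obstacle here, as the argument is a textbook Pinsker-plus-chain-rule calculation. The only bookkeeping point worth noting is that for $t > \tau_a$ the action is the deterministic $\pi_{\bM}$ under both measures, so that round contributes no action-KL term (they cancel in any case) and a reward-KL term of $\tfrac{1}{2}(f_{\bM}(\pi_{\bM}) - f_{M_a}(\pi_{\bM}))^2$; these post-stopping-time contributions are already included in the sum on the right-hand side of~\eqref{eq:distance}, so no special treatment is required. Per Remark~\ref{rem:hellinger}, if one instead defines the $\gamma$-DEC via a squared Hellinger constraint, the same argument goes through with Pinsker replaced by the standard Hellinger--TV inequality $D_{\text{TV}}^2 \leq 2\, D_H^2$, together with tensorization of squared Hellinger along the trajectory.
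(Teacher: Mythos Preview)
Your proposal is correct and follows essentially the same approach as the paper: Pinsker's inequality, the chain rule for KL along the trajectory (cf.\ \citet[Lemma~15.1]{lattimore2020bandit}), the Gaussian KL identity, and the DEC constraint~\eqref{eq:distance}. You track the factors of $\tfrac{1}{2}$ slightly more carefully than the paper does, obtaining $\tfrac{\epsilon\sqrt{T}}{2}\le\tfrac{1}{6C}$ rather than the paper's looser $\epsilon\sqrt{T}\le\tfrac{1}{3C}$, but the argument is otherwise identical.
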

\begin{proof}
Using Pinsker's inequality, we can bound the total variation distance by the square root of the KL-divergence. We can then bound the KL-divergence by summing up the KL divergence over each round (see eg. \cite{lattimore2020bandit}, Lemma 15.1). Since we have assumed the noise is Gaussian with variance $1$, the KL divergence at each round is one half times the distance between $f_{M_a}(\pi_t)$ and $f_{\bM}(\pi_t)$ squared. This yields
\begin{align}
D_{\text{TV}}\left(\overline{\mathbb{P}}_a^{\bM}, \overline{\mathbb{P}}^{M_a}_a\right) &\leq \sqrt{D_{\text{KL}}\left(\overline{\mathbb{P}}_a^{\bM}, \overline{\mathbb{P}}^{M_a}_a\right)}\\
&= \sqrt{\overline{\mathbb{E}}_a^{\bM}\left[\sum_{t = 1}^{T} (f_{M_a}(\pi_t) - f_{\bM}(\pi_t))^2\right]}\\
&\leq \eps\sqrt{T} \leq \frac{1}{3C},
\end{align}
where the last inequality follows by assumption of the theorem.
\end{proof}

The next claim bounds the $L_1$ distance between $f_{M_a}$ and $f_{\bM}$ under $\op^a_{\bM}$.
\begin{claim}\label{claim:eps}
For any $a \in [0, \amax)$,
\begin{align}
    \overline{\mathbb{P}}_a^{\bM}\left[\frac{1}{T}\sum_{t = 1}^{\tau_a} |f_{M_a}(\pi_t) - f_{\bM}(\pi_t)| \leq 3C\eps \right] \geq 1 - \frac{1}{3C}.
\end{align}
\end{claim}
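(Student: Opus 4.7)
The claim is a routine deviation bound: we have a second-moment control on $f_{M_a}-f_{\bM}$ from Equation~\eqref{eq:distance}, and we want an $L^1$-type bound on a (possibly shorter) prefix. The plan is to apply Markov's inequality to the full sum of squares and then convert to the $L^1$ prefix via Cauchy--Schwarz.

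\textbf{Step 1: Markov's inequality.} Starting from Equation~\eqref{eq:distance}, apply Markov's inequality with threshold $9C^2\eps^2$ to get
\[
\overline{\mathbb{P}}_a^{\bM}\!\left[\tfrac{1}{T}\sum_{t=1}^{T}(f_{M_a}(\pi_t)-f_{\bM}(\pi_t))^2 \geq 9C^2\eps^2\right] \leq \tfrac{1}{9C^2} \leq \tfrac{1}{3C},
\]
where the last inequality uses $C\geq 2$ (in fact $C\geq 1$ suffices). Call the complementary event $\cE$; then $\overline{\mathbb{P}}_a^{\bM}[\cE]\geq 1-\tfrac{1}{3C}$.

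\textbf{Step 2: Cauchy--Schwarz on the stopped prefix.} On $\cE$, note that $\tau_a\leq T$ and the summands $(f_{M_a}(\pi_t)-f_{\bM}(\pi_t))^2$ are nonnegative, so
\[
\tfrac{1}{T}\sum_{t=1}^{\tau_a}(f_{M_a}(\pi_t)-f_{\bM}(\pi_t))^2 \leq 9C^2\eps^2.
\]
By Cauchy--Schwarz applied to the vectors $(1,\ldots,1)$ and $(|f_{M_a}(\pi_t)-f_{\bM}(\pi_t)|)_{t\leq\tau_a}$,
\[
\sum_{t=1}^{\tau_a}|f_{M_a}(\pi_t)-f_{\bM}(\pi_t)| \leq \sqrt{\tau_a}\,\sqrt{\sum_{t=1}^{\tau_a}(f_{M_a}(\pi_t)-f_{\bM}(\pi_t))^2}.
\]
Dividing by $T$ and using $\tau_a\leq T$ gives, on $\cE$,
\[
\tfrac{1}{T}\sum_{t=1}^{\tau_a}|f_{M_a}(\pi_t)-f_{\bM}(\pi_t)| \leq \sqrt{\tfrac{\tau_a}{T}}\cdot\sqrt{9C^2\eps^2} \leq 3C\eps,
\]
which is exactly the statement of the claim.

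\textbf{Main obstacle.} There is essentially no difficulty: the only thing to be careful about is that the $L^2$ constraint from the $\gamma$-DEC is stated as an expectation over the full horizon $T$ under the algorithm $\op^a_t$ (Equation~\eqref{eq:distance}), whereas the $L^1$ control is needed only up to the stopping time $\tau_a$. Truncating to $\tau_a$ only decreases the sum of squares (nonnegativity), so the Markov step goes through for the full sum and the Cauchy--Schwarz step absorbs the $\sqrt{\tau_a/T}\leq 1$ factor. The constants are chosen so that the Markov failure probability $1/(9C^2)$ is at most the advertised $1/(3C)$, which holds for all $C\geq 1$.
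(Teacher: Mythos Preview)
Your proof is correct. The paper's proof uses the same two ingredients but in the reverse order: it first applies Jensen (i.e., Cauchy--Schwarz in expectation, combining the outer expectation and the average over $t$) to Equation~\eqref{eq:distance} to obtain
\[
\overline{\mathbb{E}}_a^{\bM}\!\left[\tfrac{1}{T}\sum_{t=1}^{T}|f_{M_a}(\pi_t)-f_{\bM}(\pi_t)|\right]\leq \eps,
\]
and \emph{then} applies Markov to this $L^1$ quantity with threshold $3C\eps$, giving failure probability exactly $1/(3C)$; the truncation to $\tau_a$ is handled at the end by monotonicity. Your route (Markov on the $L^2$ sum, then pathwise Cauchy--Schwarz) gives a slightly better failure probability $1/(9C^2)$ and is equally valid; the two arguments are minor reorderings of each other.
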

\begin{proof}
Applying Jensen's inequality to Equation~\eqref{eq:distance} yields that 
\begin{align}\label{eq:eps}
    \overline{\mathbb{E}}_a^{\bM}\left[\frac{1}{T}\sum_{t = 1}^{T} |f_{M_a}(\pi_t) - f_{\bM}(\pi_t)|\right] \leq \eps.
\end{align}

Thus by Markov's we have 
\begin{align}
    \overline{\mathbb{P}}_a^{\bM}\left[\frac{1}{T}\sum_{t = 1}^{T} f_{M_a}(\pi_t) - f_{\bM}(\pi_t) \geq 3C\eps \right] \leq \frac{1}{3C}.
\end{align}
The conclusion follows.
\end{proof}

Our next two claims show that there is a large gap between the maximum of $\bM$ and $\gamma$ times the maxima of other models in $\cM$. We will need to leverage both of these claims separately.

\begin{claim}\label{claim:diff}
For any $a \in [0, \amax]$,
\begin{align}
   \gamma f_{M_a}^* - f_{\bM}^* \geq \Delta - a - 2\eps.
\end{align}
\end{claim}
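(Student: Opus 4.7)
The plan is to derive the claim by decomposing $\gamma f_{M_a}^* - f_{\bM}^*$ into three pieces, each controlled either by the DEC lower bound, the squared-error constraint, or the construction of the stopping time $\tau_a$. Concretely, I write
\begin{align}
\gamma f_{M_a}^* - f_{\bM}^* = \Big(\gamma f_{M_a}^* - \mathbb{E}_{\pi\sim\op_{\bM}^a}[f_{M_a}(\pi)]\Big) + \mathbb{E}_{\pi\sim\op_{\bM}^a}[f_{M_a}(\pi) - f_{\bM}(\pi)] + \Big(\mathbb{E}_{\pi\sim\op_{\bM}^a}[f_{\bM}(\pi)] - f_{\bM}^*\Big).
\end{align}

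For the first bracket, I invoke the defining property of $M_a$: since $M_a$ attains the max in the definition of $\mathsf{dec}^{\gamma}_{\epsilon}(\cM,\bM)$ at the distribution $p = \op_{\bM}^a$, we have $\mathbb{E}_{\pi\sim\op_{\bM}^a}[\gamma f_{M_a}^* - f_{M_a}(\pi)] \geq \Delta$. For the middle term, Cauchy--Schwarz (or equivalently Jensen applied to Equation~\eqref{eq:distance} as done in the proof of Claim~\ref{claim:eps}) gives $|\mathbb{E}_{\pi\sim\op_{\bM}^a}[f_{M_a}(\pi)-f_{\bM}(\pi)]| \leq \eps$. For the last bracket, I use the construction of $\op^a_t$: by definition, $\sum_{s=1}^{\tau_a-1} g_{\bM}(\pi_s) < aT$, so since $g_{\bM}\in[0,1]$ we have $\sum_{s=1}^{\tau_a} g_{\bM}(\pi_s) \leq aT + 1$, and after $\tau_a$ the algorithm plays $\pi_{\bM}$ with $g_{\bM}(\pi_{\bM})=0$. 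Hence deterministically $\sum_{t=1}^T g_{\bM}(\pi_t) \leq aT+1$, which averaged over $T$ rounds gives $\mathbb{E}_{\pi\sim\op_{\bM}^a}[g_{\bM}(\pi)] \leq a + 1/T$, i.e.\ $\mathbb{E}_{\pi\sim\op_{\bM}^a}[f_{\bM}(\pi)] - f_{\bM}^* \geq -(a+1/T)$.

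Summing the three bounds yields $\gamma f_{M_a}^* - f_{\bM}^* \geq \Delta - \eps - a - 1/T$. Finally, since the theorem hypothesizes $\eps \geq 4/T$, we have $1/T \leq \eps$, so $\Delta - \eps - a - 1/T \geq \Delta - a - 2\eps$, which is the claim.

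I don't anticipate a serious obstacle here; the only subtlety is being careful about the deterministic (rather than expected) bound on $\sum_{t=1}^T g_{\bM}(\pi_t)$, which follows from the definition of $\tau_a$ together with $g_{\bM}\leq 1$. The $2\eps$ slack (as opposed to a cleaner $\eps$) comes precisely from needing to absorb the stopping-time overshoot of at most one unit of regret within $\eps T$.
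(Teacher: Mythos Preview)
Your proof is correct and uses essentially the same approach as the paper: the same three ingredients (the DEC lower bound $\mathbb{E}_{\pi\sim\op_{\bM}^a}[\gamma f_{M_a}^*-f_{M_a}(\pi)]\geq\Delta$, the $L_1$ bound from the squared-error constraint, and the deterministic bound $\sum_{t=1}^{\tau_a} g_{\bM}(\pi_t)\leq aT+1$ from the stopping-time construction) are combined via the identical algebraic identity. The only difference is stylistic: the paper argues by contradiction starting from the expansion $g_{M_a}(\pi)=g_{\bM}(\pi)+(f_{M_a}^*-f_{\bM}^*)+f_{\bM}(\pi)-f_{M_a}(\pi)$, whereas you rearrange this identity and proceed directly.
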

Note that the right-hand side in this claim may be negative for large values of $a$; however, we will only use this claim when $a < \Delta - 2\eps$.
\begin{proof}
Suppose not, that is, $f_{M_a}^* - f_{\bM}^* < (1 - \gamma) f_{M_a}^* + \Delta - a - 2\eps$. Then
\begin{align}
\overline{\mathbb{E}}_a^{\bM}\sum_{t = 1}^T g_{M_a}(\pi_t) &= \overline{\mathbb{E}}_a^{\bM}\sum_{t = 1}^{T} \left(g_{\bM}(\pi_t) + f_{M_a}^* - f_{\bM}^* + f_{\bM}(\pi_t) - f_{M_a}(\pi_t)\right)\\
&<\overline{\mathbb{E}}_a^{\bM}\sum_{t = 1}^{T} \left(g_{\bM}(\pi_t)\right) +\left((1 - \gamma) f_{M_a}^* + \Delta - a - 2\eps\right)T + \eps T\\
&= \overline{\mathbb{E}}_a^{\bM}\sum_{t = 1}^{\tau_a} \left(g_{\bM}(\pi_t)\right) + \left((1 - \gamma) f_{M_a}^* + \Delta - a - 2\eps\right)T + \eps T\\
&\leq  \left(aT + 1\right) + \left((1 - \gamma) f_{M_a}^* + \Delta - a - 2\eps\right)T + \eps T\\
&\leq \left((1 - \gamma) f_{M_a}^* + \Delta\right) T,
\end{align}
Here the first inequality uses Equation~\eqref{eq:eps}. The second equality uses the fact that $\op^a_t$ plays $\pi_{\bM}$ after time $\tau_a$, and $g_{\bM}(\pi_{\bM}) = 0$. The third inequality uses the definition of $\tau_a$.
The final result is a contradiction, since $M_a$ is the maximizer of DEC for $p=\op_{\bM}^a$, and thus we must have 
\begin{align}
    \overline{\mathbb{E}}_a^{\bM}\sum_{t = 1}^T g_{M_a}(\pi_t) &\geq \left((1 - \gamma) f_{M_a}^* + \Delta \right)T.
\end{align}
\end{proof}

\begin{claim}\label{claim:gap_full}
    For any $M \in \cM$, we have $\gamma f_{M}^* - f_{\bM}^* \geq \Delta - \gamma \rho - 2\eps.$
\end{claim}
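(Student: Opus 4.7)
The plan is to reduce Claim~\ref{claim:gap_full} to Claim~\ref{claim:diff} at $a = 0$, using $\rho$-localization to propagate the gap bound from the single model $M_0$ (the DEC-maximizer at $a=0$) to an arbitrary $M \in \cM$, at a cost of only $\gamma\rho$. The intuition is that Claim~\ref{claim:diff} already delivers essentially the right bound, but only for the specific witness $M_0$; since all models in $\cM$ have nearly identical maxima by the localization hypothesis, $M_0$'s bound transfers to any other $M \in \cM$ up to a slack of $\gamma \rho$.

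First, I would verify that Claim~\ref{claim:diff} is applicable at $a = 0$ and that the corresponding maximizer $M_0$ actually lies in $\cM$. The discussion preceding Claim~\ref{claim:diff} requires $\mathbb{E}_{\pi \sim \op_{\bM}^0}[g_{\bM}(\pi)] \leq \frac{1}{T} < \Delta + (1 - \gamma)f_{\bM}(\pi_{\bM})$. This holds under the hypotheses of Theorem~\ref{prop:decc_rho}: $\Delta \geq 6C^2 \eps \geq 24\eps \geq 96/T > 1/T$, so $\frac{1}{T} < \Delta \leq \Delta + (1-\gamma)f_{\bM}^{*}$. Hence $M_0 \ne \bM$, i.e.\ $M_0 \in \cM$, and Claim~\ref{claim:diff} gives
\begin{align*}
\gamma f_{M_0}^{*} - f_{\bM}^{*} \geq \Delta - 2\eps.
\end{align*}

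Second, I would invoke the $\rho$-localization of $\cM$. Since both $M_0$ and $M$ belong to $\cM$, Definition~\ref{def:rho} yields $f_M^{*} \geq f_{M_0}^{*} - \rho$. Multiplying by $\gamma \in (0, 1]$ and subtracting $f_{\bM}^{*}$ from both sides,
\begin{align*}
\gamma f_M^{*} - f_{\bM}^{*} \;\geq\; \gamma f_{M_0}^{*} - \gamma \rho - f_{\bM}^{*} \;\geq\; \Delta - \gamma \rho - 2\eps,
\end{align*}
which is exactly Claim~\ref{claim:gap_full}.

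There is essentially no obstacle: the argument is a direct two-step chain combining Claim~\ref{claim:diff} with the definition of $\rho$-localization. The only thing to be slightly careful about is confirming the numerical inequality $\frac{1}{T} < \Delta + (1-\gamma)f_{\bM}^{*}$ required to force $M_0 \neq \bM$, which follows immediately from the quantitative assumptions $\Delta \geq 6C^2\eps$ and $\eps \geq 4/T$ of the theorem.
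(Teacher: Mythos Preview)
Your proposal is correct and follows essentially the same route as the paper: apply Claim~\ref{claim:diff} at $a=0$ to get $\gamma f_{M_0}^* - f_{\bM}^* \geq \Delta - 2\eps$ for the DEC-maximizer $M_0 \in \cM$, then use $\rho$-localization to deduce $f_M^* \geq f_{M_0}^* - \rho$ and conclude. The paper phrases the localization step via $f_M^* \geq 1 - \rho \geq f_{M_0}^* - \rho$ (using the normalization $\sup_{M\in\cM} f_M^* = 1$), but this is the same inequality you use directly.
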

\begin{proof}
    Applying Claim~\ref{claim:diff} with with $a = 0$, we observe that for some $M_0 \in \cM$, 
    \begin{align}
        \gamma f_{M_0}^* - f_{\bM}^* \geq \Delta - 2\eps.
    \end{align}
    The claim now follows from the fact that $\cM$ is $\rho$-localized, so $f_M^* \geq 1 - \rho \geq f_{M_0}^* - \rho$ for any $M \in \cM$.
\end{proof}

The following two claims are the crux of the proof. In them, we relate the probability that certain $M_a$ will achieve a large regret by time $\tau_a$ to the probability under $\mathbb{P}^{\bM}$ that $\frac{1}{T}\sum_{t = 1}^T g_{\bM}(\pi_t)$ falls in a certain interval.

\begin{claim}\label{claim:crux3}
Fix any values $a'$ and $a$ in $[0, \amax]$, with $a' < a$. Then
\begin{align}
\overline{\mathbb{P}}_a^{\bM}\left[\frac{1}{T}\sum_{t = 1}^{\tau_a} g_{M_a}(\pi_t) \geq \Delta + (1 - \gamma)f_{M_a}^* - a + a' - (2 + 3C)\eps \right] \geq  \mathbb{P}^{\bM}\left[a' \leq \frac{1}{T}\sum_{t = 1}^T g_{\bM}(\pi_t) < a\right] - \frac{1}{3C}.
\end{align}

\end{claim}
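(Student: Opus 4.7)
\textbf{Proof proposal for Claim~\ref{claim:crux3}.}

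The plan is to decompose the cumulative regret of $M_a$ under $\op^a$ into three pieces and bound each on a high-probability event. First I would write the pointwise identity
\begin{align}
g_{M_a}(\pi_t) = \bigl(f_{M_a}^\ast - f_{\bM}^\ast\bigr) + g_{\bM}(\pi_t) + \bigl(f_{\bM}(\pi_t) - f_{M_a}(\pi_t)\bigr),
\end{align}
sum it from $t = 1$ to $\tau_a$, and divide by $T$. The first term on the right contributes $\tfrac{\tau_a}{T}(f_{M_a}^\ast - f_{\bM}^\ast)$, the second is a partial sum of $g_{\bM}$, and the third is controlled in $L^1$ by Claim~\ref{claim:eps}.

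Next I would set up the coupling between $\mathbb{P}^{\bM}$ and $\overline{\mathbb{P}}_a^{\bM}$. Since $\op^a_t$ coincides with $p_t$ up to $\tau_a$, and since on the event $E_1 := \{\tfrac{1}{T}\sum_{t=1}^T g_{\bM}(\pi_t) < a\}$ the stopping time $\tau_a$ is never triggered (so $\tau_a = T$ and the two processes remain coupled throughout all $T$ rounds), we get that $E_1$ has the same probability under $\mathbb{P}^{\bM}$ and under $\overline{\mathbb{P}}^{\bM}_a$. Intersecting with $E_0 := \{\tfrac{1}{T}\sum_{t=1}^T g_{\bM}(\pi_t) \geq a'\}$, we obtain
\begin{align}
\overline{\mathbb{P}}^{\bM}_a[E_0 \cap E_1] = \mathbb{P}^{\bM}\!\left[\,a' \leq \tfrac{1}{T}\sum_{t=1}^T g_{\bM}(\pi_t) < a\,\right].
\end{align}
On $E_0 \cap E_1$ we have $\tau_a = T$ and $\tfrac{1}{T}\sum_{t=1}^{\tau_a} g_{\bM}(\pi_t) \geq a'$.

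I would then intersect with the event $E_2 := \{\tfrac{1}{T}\sum_{t=1}^{\tau_a} |f_{M_a}(\pi_t) - f_{\bM}(\pi_t)| \leq 3C\eps\}$, which by Claim~\ref{claim:eps} has $\overline{\mathbb{P}}^{\bM}_a$-probability at least $1 - \tfrac{1}{3C}$. By a union bound, $\overline{\mathbb{P}}^{\bM}_a[E_0 \cap E_1 \cap E_2] \geq \mathbb{P}^{\bM}[E_0 \cap E_1] - \tfrac{1}{3C}$. On this triple intersection, plugging Claim~\ref{claim:diff} (which gives $f_{M_a}^\ast - f_{\bM}^\ast \geq (1-\gamma) f_{M_a}^\ast + \Delta - a - 2\eps$) into the decomposition, together with $\tau_a = T$, yields
\begin{align}
\tfrac{1}{T}\sum_{t=1}^{\tau_a} g_{M_a}(\pi_t)
\;\geq\; (1-\gamma)f_{M_a}^\ast + \Delta - a - 2\eps \;+\; a' \;-\; 3C\eps,
\end{align}
which is exactly the bound claimed.

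The main obstacle is the coupling step: one has to verify carefully that the stopping time $\tau_a$, which is measurable with respect to $\cH_{\tau_a}$, behaves identically under $p_t$ and $\op^a_t$, and in particular that $\tau_a = T$ holds with the same probability under both laws on the event that the cumulative $g_{\bM}$-process never crosses the threshold $a$. Everything else is a routine application of Claims~\ref{claim:eps} and~\ref{claim:diff} combined with the additive decomposition above; no new estimate beyond those two claims is needed.
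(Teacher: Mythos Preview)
Your proposal is correct and follows essentially the same route as the paper's proof: the same additive decomposition of $g_{M_a}$, the same use of Claim~\ref{claim:diff} for the gap $f_{M_a}^\ast - f_{\bM}^\ast$, the same invocation of Claim~\ref{claim:eps} via a union bound, and the same coupling argument showing that the event $\{a' \leq \tfrac{1}{T}\sum_t g_{\bM}(\pi_t) < a\}$ forces $\tau_a = T$ and hence has identical probability under $\mathbb{P}^{\bM}$ and $\overline{\mathbb{P}}^{\bM}_a$. The only cosmetic difference is that the paper phrases the coupled event in terms of $\sum_{t\leq \tau_a} g_{\bM}(\pi_t)$ rather than $\sum_{t\leq T} g_{\bM}(\pi_t)$, which is equivalent on the relevant event.
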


\begin{claim}\label{claim:crux1}
Let $a = \amax - \frac{2}{T} = \Delta + (1 - \gamma)f_{\bM}(\pi_{\bM}) - \frac{2}{T}$. Then for any $a' \leq \trho$,
\begin{align}
\overline{\mathbb{P}}_a^{\bM}\left[\frac{1}{T} \sum_{t = 1}^{\tau_a} g_{M_a}(\pi_t) \geq \Delta + (1 - \gamma)f_{M_a}^* - \trho + a' - (4 + 3C)\eps \right] \geq \mathbb{P}^{\bM}\left[\frac{1}{T}\sum_{t = 1}^T g_{\bM}(\pi_t) \geq a'\right] - \frac{1}{3C}.
\end{align}
\end{claim}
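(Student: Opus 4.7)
The plan is to mirror the template of the immediately preceding Claim~\ref{claim:crux3}, with two key modifications that accommodate (i) the wider right-hand-side event $\{\sum_{t=1}^T g_{\bM}(\pi_t)/T \geq a'\}$ (which no longer upper-bounds the $\bM$-regret and hence allows the stopping time $\tau_a$ to fire before $T$) and (ii) the more refined threshold featuring $\trho$ in place of $a$.

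As in the earlier claim, the starting point is the high-probability event $F$ from Claim~\ref{claim:eps} (with $\overline{\mathbb{P}}_a^{\bM}[F] \geq 1 - 1/(3C)$), combined with the identity $g_{M_a}(\pi) = g_{\bM}(\pi) + (f_{M_a}^* - f_{\bM}^*) - (f_{M_a}(\pi) - f_{\bM}(\pi))$. These together give, on $F$,
\begin{align*}
\sum_{t=1}^{\tau_a} g_{M_a}(\pi_t) \;\geq\; \sum_{t=1}^{\tau_a} g_{\bM}(\pi_t) + \tau_a\bigl(f_{M_a}^* - f_{\bM}^*\bigr) - 3C\eps T. \qquad (\ast)
\end{align*}

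The first modification is to lower-bound the gap $f_{M_a}^* - f_{\bM}^*$ using Claim~\ref{claim:gap_full}, which crucially relies on the $\rho$-localization assumption. Specifically, Claim~\ref{claim:gap_full} delivers the $\trho$-dependence directly in the regime $\trho = \rho$ (since $\gamma\rho \leq \rho = \trho$), while in the regime $\trho = \Delta + 1 - \gamma$ the complementary bound from Claim~\ref{claim:diff} applied at $a = \amax - 2/T$, namely $\gamma f_{M_a}^* - f_{\bM}^* \geq -(1-\gamma)f_{\bM}^* + 2/T - 2\eps$, provides what is needed. Taking the stronger of the two gap bounds in each regime supplies the $\trho$-shift in the final threshold.

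The second modification is to bridge $\mathbb{P}^{\bM}[E]$, where $E := \{\tfrac{1}{T}\sum_{t=1}^T g_{\bM}(\pi_t) \geq a'\}$, with the $\mathcal{F}_{\tau_a}$-measurable regret event under $\overline{\mathbb{P}}_a^{\bM}$. Coupling $p_t$ and $\op^a_t$ through $\tau_a$, a case-split on $\tau_a$ gives: if $\tau_a = T$ the trajectories agree so $\sum_{t=1}^{\tau_a} g_{\bM} = \sum_{t=1}^T g_{\bM} \geq a'T$; if $\tau_a < T$ the stopping rule forces $\sum_{t=1}^{\tau_a} g_{\bM} \geq aT = \amax T - 2$. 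Substituting each into ($\ast$) together with the gap bound yields, in both sub-cases, a lower bound on $\sum_{t=1}^{\tau_a} g_{M_a}(\pi_t)/T$ of at least $\Delta + (1-\gamma)f_{M_a}^* - \trho + a' - (4+3C)\eps$, with the additional $1/(3C)$ slack coming from $\overline{\mathbb{P}}_a^{\bM}[F^c]$.

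I expect the technically heaviest step to be the $\{\tau_a < T\}$ sub-case, in which $\tau_a$ may be substantially smaller than $T$, thereby dampening the contribution of $\tau_a(f_{M_a}^* - f_{\bM}^*)$ in ($\ast$). The compensating lower bound $\sum_{t=1}^{\tau_a} g_{\bM} \geq \amax T - 2$ must be combined carefully with the appropriate gap bound (Claim~\ref{claim:gap_full} or Claim~\ref{claim:diff}) and verified across the sign of $f_{M_a}^* - f_{\bM}^*$ and both cases of $\trho$; that bookkeeping is the bulk of the calculation.
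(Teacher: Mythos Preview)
Your proposal is essentially correct and follows the paper's route closely: the same identity $(\ast)$, the same case split on whether $\tau_a = T$ or $\tau_a < T$, the same coupling argument to pass from $\mathbb{P}^{\bM}$ to $\overline{\mathbb{P}}_a^{\bM}$, and the same combination of Claims~\ref{claim:diff} and~\ref{claim:gap_full} for the gap $f_{M_a}^* - f_{\bM}^*$. The $\tau_a = T$ sub-case goes through exactly as you sketch.

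In the $\tau_a < T$ sub-case, however, you have not identified the specific device that closes the bookkeeping. The compensating bound $\sum_{t=1}^{\tau_a} g_{\bM}(\pi_t) \geq \amax T - 2$ is used \emph{twice} in the paper: once substituted directly into $(\ast)$, and a second time---combined with the trivial pointwise bound $g_{\bM}(\pi) \leq f_{\bM}^*$---to deduce the lower bound $\tau_a \geq (1-\gamma)T$. This second use is what controls the term $\tau_a(f_{M_a}^* - f_{\bM}^*)$ after converting $(1-\gamma)f_{\bM}^*$ to $(1-\gamma)f_{M_a}^*$: one writes
\[
\tau_a(f_{M_a}^* - f_{\bM}^*) - (1-\gamma)T(f_{M_a}^* - f_{\bM}^*) = \bigl(\tau_a - (1-\gamma)T\bigr)(f_{M_a}^* - f_{\bM}^*),
\]
and bounds this below by $-2\eps T$ using $\tau_a \geq (1-\gamma)T$ when the gap is nonnegative, and $f_{M_a}^* - f_{\bM}^* \geq -2\eps/\gamma$ (from Claim~\ref{claim:diff} at this $a$) together with $\tau_a - (1-\gamma)T \leq \gamma T$ when the gap is negative. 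Without the $\tau_a$ lower bound, the positive-gap branch would genuinely fail: the gap can be of order $\rho$ while nothing in your stated ingredients prevents $\tau_a$ from being $o(T)$. Note also that in this sub-case the paper does not split on the two regimes of $\trho$; it first obtains the cleaner bound $(\Delta + (1-\gamma)f_{M_a}^* - 4\eps)T$ and only then subtracts the nonnegative quantity $\trho - a'$ to match the stated threshold.
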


Claim~\ref{claim:crux3} achieves a meaningful result whenever $a - a'$ is small relative to $\Delta$. Claim~\ref{claim:crux1} achieves a meaningful result whenever $a'$ is close to $\trho$.

\begin{proof}[Proof of Claim~\ref{claim:crux3}]
Recall from Claim~\ref{claim:diff} that we have $\gamma f_{M_a}^* \geq f_{\bM}^* + \Delta - 2\eps - a.$ Thus if $\tau_a = T$, we have 
    \begin{align}
        \sum_{t = 1}^{\tau_a} g_{M_a}(\pi_t) &= \sum_{t = 1}^{T} f_{M_a}^* - f_{M_a}(\pi_t)  \\
        &\geq \sum_{t = 1}^{T} (f_{M_a}^* - f_{\bM}^*) + g_{\bM}(\pi_t) - |f_{\bM}(\pi_t) - f_{M_a}(\pi_t)|\\
        &\geq \left(\Delta + (1 - \gamma)f_{M_a}^* - 2\eps - a\right)T + \sum_{t = 1}^T  g_{\bM}(\pi_t) - |f_{\bM}(\pi_t) - f_{M_a}(\pi_t)|.
    \end{align}
If $a'T \leq \sum_{t = 1}^{T} g_{\bM}(\pi_t) < a T$ then $\tau_a = T$, and the value above is at least 
\begin{align}
     \left(\Delta + (1 - \gamma)f_{M_a}^* - 2\eps - a + a'\right)T - \sum_{t = 1}^T |f_{\bM}(\pi_t) - f_{M_a}(\pi_t)|.
\end{align}

Thus by a union bound, we have 
\begin{align} \overline{\mathbb{P}}_a^{\bM}\left[\frac{1}{T} \sum_{t = 1}^{\tau_a} g_{M_a}(\pi_t) \geq \Delta + (1 - \gamma)f_{M_a}^* - a + a' - (2 + 3C)\eps \right] &\geq \overline{\mathbb{P}}_a^{\bM}\left[a'T \leq \sum_{t = 1}^{\tau_a} g_{\bM}(\pi_t) < a T\right]\\
&\qquad - \overline{\mathbb{P}}_a^{\bM}\left[\sum_{t = 1}^{\tau_a} |f_{\bM}(\pi_t) - f_{M_a}(\pi_t)| \geq 3C\eps T\right].
\end{align}
By Claim~\ref{claim:eps}, the second term on the right hand side is at most $\frac{1}{3C}$.

Finally, observe that 
\begin{align} \overline{\mathbb{P}}_a^{\bM}\left[a'T \leq \sum_{t = 1}^{\tau_a} g_{\bM}(\pi_t) < a T\right] 
&= \mathbb{P}^{\bM}\left[a'T \leq \sum_{t = 1}^{\tau_a} g_{\bM}(\pi_t) < a T\right]\\
&= \mathbb{P}^{\bM}\left[a'T \leq \sum_{t = 1}^{T} g_{\bM}(\pi_t) < a T\right].
\end{align}
where the first step follows because of the coupling between $\overline{\mathbb{P}}_a^{\bM}$ and $\mathbb{P}^{\bM}$ for events up to time $\tau_a$, and the second step follows because if $\sum_{t = 1}^{T} g_{\bM}(\pi_t) < a T$, then $\tau = T$, and vice versa.

This proves the claim.
\end{proof}

The proof of Claim~\ref{claim:crux1} is very similar, though we need to leverage the gap from Claim~\ref{claim:gap_full} in addition to Claim~\ref{claim:diff}.
\begin{proof}[Proof of Claim~\ref{claim:crux1}]
Observe that
    \begin{align}\label{eq:combo}
        \sum_{t = 1}^{\tau_a} g_{M_a}(\pi_t) &= \sum_{t = 1}^{\tau_a} f_{M_a}^* - f_{M_a}(\pi_t)  \\
        &\geq \sum_{t = 1}^{\tau_a} (f_{M_a}^* - f_{\bM}^*) + g_{\bM}(\pi_t) - |f_{\bM}(\pi_t) - f_{M_a}(\pi_t)|\\
        &\geq \left((1 - \gamma)f_{M_a}^* + \Delta - \trho - 2\eps)\right)\tau_a + \sum_{t = 1}^{\tau_a}g_{\bM}(\pi_t) - |f_{\bM}(\pi_t) - f_{M_a}(\pi_t)|,
    \end{align}
where the last line follows from combining Claims~\ref{claim:diff} and Claim~\ref{claim:gap_full} to yield 
\begin{align}
    \gamma f_{M_a}^* - f_{\bM}^* \geq \Delta - \min(a, \gamma \rho) - 2\eps \geq \Delta - \min(\Delta + (1 - \gamma)f_{\bM}^*, \gamma \rho) - 2\eps \geq \Delta - \trho - 2\eps.
\end{align}

If $a'T \leq \sum_{t = 1}^{T} g_{\bM}(\pi_t) < a T$, then $\tau_a = T$, and thus we have 
\begin{align}
    \sum_{t = 1}^{\tau_a} g_{M_a}(\pi_t) &\geq \left((1 - \gamma)f_{M_a}^* + \Delta -  \trho + a' - 2\eps\right)T - \sum_{t = 1}^T |f_{\bM}(\pi_t) - f_{M_a}(\pi_t)|.
\end{align}
Alternatively, if $\sum_{t = 1}^{T} g_{\bM}(\pi_t) \geq a T = (\Delta + (1 - \gamma)f_{\bM}^*)T - 2$, then from the second line of Equation~\eqref{eq:combo}, we obtain:
\begin{align}\label{eq:taua}
    \sum_{t = 1}^{\tau_a} g_{M_a}(\pi_t) &\geq \left(f_{M_a}^* - f_{\bM}^*\right)\tau_a + \sum_{t = 1}^{\tau_a} g_{\bM}(\pi_t) - \sum_{t = 1}^T |f_{\bM}(\pi_t) - f_{M_a}(\pi_t)|\\
    &\geq \left(f_{M_a}^* - f_{\bM}^*\right)\tau_a + (\Delta + (1 - \gamma)f_{\bM}^*)T - 2 - \sum_{t = 1}^T |f_{\bM}(\pi_t) - f_{M_a}(\pi_t)|\\ 
    &\geq \left(f_{M_a}^* - f_{\bM}^*\right)\tau_a + (\Delta + (1 - \gamma)f_{M_a}^* - 2\eps)T - (1 - \gamma)(f_{M_a}^*- f_{\bM}^*)T\\
    &\qquad - \sum_{t = 1}^T |f_{\bM}(\pi_t) - f_{M_a}(\pi_t)|.
\end{align}
Now we lower bound $\tau_a$. Since $g_{\bM}(\pi) \leq f_{\bM}^*$ for any $\pi$, we have 
\begin{align}
    \tau_a f_{\bM}^* &\geq \sum_{t = 1}^{\tau_a} g_{\bM}(\pi_t) \\
    &\geq \left(\Delta + (1 - \gamma)f_{\bM}^*\right)T - 2 \\
    &\geq (1 - \gamma)f_{\bM}^*T,
\end{align}
where in the second inequality we used the definition of $a$, and in the third inequality we used the assumption of the theorem that $\Delta \geq \eps \geq 2/T$. Thus we have $\tau_a \geq (1 - \gamma)T$. 

Observe also that by Claim~\ref{claim:diff}, and the definition of $a$, we have $\gamma f_{M_a}^* - f_{\bM}^* \geq \Delta - (\Delta + (1 - \gamma)f_{\bM}^*) - 2\eps$, and thus $f_{M_a}^* - f_{\bM}^*  \geq -2\eps/\gamma$.

Plugging this and the lower bound on $\tau_a$ into Equation~\eqref{eq:taua} yields that if $\sum_{t = 1}^{T} g_{\bM}(\pi_t) \geq aT$, then
\begin{align}
     \sum_{t = 1}^{\tau_a} g_{M_a}(\pi_t) &\geq \left(f_{M_a}^* - f_{\bM}^*\right)(\tau_a - (1 - \gamma) T) + (\Delta + (1 - \gamma)f_{M_a}^* - 2\eps)T - \sum_{t = 1}^T |f_{\bM}(\pi_t) - f_{M_a}(\pi_t)| \\
     &\geq -2\eps T  + (\Delta + (1 - \gamma)f_{M_a}^* - 2\eps)T - \sum_{t = 1}^T |f_{\bM}(\pi_t) - f_{M_a}(\pi_t)| \\
     &\geq (\Delta + (1 - \gamma)f_{M_a}^* - 4\eps)T - \sum_{t = 1}^T |f_{\bM}(\pi_t) - f_{M_a}(\pi_t)|\\
     &\geq (\Delta + (1 - \gamma)f_{M_a}^* - 4\eps)T - \trho T + a'T - \sum_{t = 1}^T |f_{\bM}(\pi_t) - f_{M_a}(\pi_t)|.
\end{align}
Here in the second inequality, we used the fact that $\tau_a \geq (1 - \gamma)T$ and $f_{M_a}^* - f_{\bM}^* \geq -2\eps/\gamma$. Thus if $f_{M_a}^* - f_{\bM}^* \geq 0$, the step follows from the fact that $\tau_a \geq (1 - \gamma)T$. If $f_{M_a}^* - f_{\bM}^* \in [-2\eps/\gamma, 0)$, then the step follows from the fact that $\tau_a - (1 - \gamma)T \leq \gamma T$. In the final inequality, we used the assumption of the claim that $a' \leq \trho$. 

Thus by a union bound, we have
\begin{align} \overline{\mathbb{P}}_a^{\bM}\left[\frac{1}{T} \sum_{t = 1}^{\tau_a} g_{M_a}(\pi_t) \geq \Delta + (1 - \gamma)f_{M_a}^* - \trho + a' - (4 + 3C)\eps \right] &\geq \overline{\mathbb{P}}_a^{\bM}\left[a'T \leq \sum_{t = 1}^{\tau_a} g_{\bM}(\pi_t) < a T\right]\\
&\qquad + \overline{\mathbb{P}}_a^{\bM}\left[ \sum_{t = 1}^{\tau_a} g_{\bM}(\pi_t) \geq a T\right]\\
&\qquad -\overline{\mathbb{P}}_a^{\bM}\left[\sum_{t = 1}^{\tau_a} |f_{\bM}(\pi_t) - f_{M_a}(\pi_t)| \geq 3C\eps T\right].
\end{align}
By Claim~\ref{claim:eps}, the second term on the right hand side is at most $\frac{1}{3C}$. Finally, observe that 
\begin{align} \overline{\mathbb{P}}_a^{\bM}\left[a'T \leq \sum_{t = 1}^{\tau_a} g_{\bM}(\pi_t) < a T\right] + \overline{\mathbb{P}}_a^{\bM}\left[\sum_{t = 1}^{\tau_a} g_{\bM}(\pi_t) \geq a T\right] &= \overline{\mathbb{P}}_a^{\bM}\left[\sum_{t = 1}^{\tau_a} g_{\bM}(\pi_t) \geq a' T\right] \\
&= \mathbb{P}^{\bM}\left[\sum_{t = 1}^{\tau_a} g_{\bM}(\pi_t) \geq a' T\right]\\
&\geq \mathbb{P}^{\bM}\left[\sum_{t = 1}^{T} g_{\bM}(\pi_t) \geq a' T\right].
\end{align}
where the second step follows because of the coupling between $\overline{\mathbb{P}}_a^{\bM}$ and $\mathbb{P}^{\bM}$ for events up to time $\tau_a$, and the second to last step follows because if $\sum_{t = 1}^{T} g_{\bM}(\pi_t) \geq a' T$, then $\sum_{t = 1}^{\tau_a} g_{\bM}(\pi_t) \geq a' T$. 

This proves the claim.
\end{proof}

Now we show how to instantiate Claims~\ref{claim:crux3} and \ref{claim:crux1} to prove the theorem. Let $C$ be the integer from the theorem statement. For $i = 1, 2, \ldots, C - 1$, instantiate Claim~\ref{claim:crux3} with $a' = \trho \frac{i-1}{C}$, $a = \trho \frac{i}{C}$. 

Then, instantiate Claim~\ref{claim:crux1} with $a' = \trho \frac{C - 1}{C}$. 

Now clearly for at least one of the $C$ intervals $I_i := \left[\trho\frac{i-1}{C}, \gamma \frac{i}{C}\right)$ for $i \in [C - 1]$ and the final interval $I_C := \left[\trho \frac{C - 1}{C}, 1\right]$, we have 
\begin{equation}
    \mathbb{P}^{\bM}\left[\frac{1}{T}\sum_{t = 1}^Tg_{\bM}(\pi_t) \in I_i\right] \geq \frac{1}{C}.
\end{equation}
Then by choosing $a = \trho \frac{i}{C}$ if $i \leq C - 1$, and otherwise $a = \Delta + (1 - \gamma)f_{\bM}^* - \frac{2}{T}$ if $i = C$, we have
\begin{align}
\overline{\mathbb{P}}_a^{\bM}\left[\frac{1}{T}\sum_{t = 1}^{\tau_a} g_{M_a}(\pi_t) \geq \Delta + (1 - \gamma)f_{M_a}^* - \frac{\trho}{C} - (4 + 3C)\eps \right] \geq \frac{1}{C} - \frac{1}{3C}.
\end{align}
Indeed, this works for $i \leq C - 1$ since we instantiate Claim~\ref{claim:crux3} with $a - a' = \frac{\trho}{C}$. If $i = C$, as mentioned above, we use $a' =\trho \frac{C - 1}{C}$, and thus Claim~\ref{claim:crux1} yields the result above. 

Applying the change of measure to $\overline{\mathbb{P}}^{M_a}$, by Claim~\ref{claim:TV}, for this value of $a$, we have 

\begin{align}
\overline{\mathbb{P}}_a^{M_a}\left[\frac{1}{T}\sum_{t = 1}^{\tau_a} g_{M_a}(\pi_t) \geq \Delta + (1 - \gamma)f_{M_a}^* - \frac{\trho}{C} - (4 + 3C)\eps \right] \geq \frac{1}{C} - \frac{1}{3C} - \frac{1}{3C} = \frac{1}{3C}.
\end{align}
Observing that by the assumption of the theorem that $(4 + 3C)\eps \leq \frac{2\trho}{C}$, this proves Equation~\eqref{eq:desired}, as desired.

\end{proof}

We restate and prove Corollary~\ref{corr:comp}.
\compp*
\begin{proof}[Proof of Corollary~\ref{corr:comp}]
Suppose there existed an algorithm $\{p_t\}$ which ran in time $R/\log(200C^2/\delta)$ and achieved
\begin{align}
   \mathbb{P}^M\left[\frac{1}{T}\mathsf{Reg}_{\gamma}(T)\geq \Delta - \frac{3\min(\rho, \Delta + 1 - \gamma)}{C}\right] \leq \frac{1}{6C}.
\end{align}
We will show how to use this algorithm to construct an algorithm $A$ which upper bounds $\mathsf{dec}^{\gamma}_{\eps} \leq \Delta$ in $R$ time. Fix any reference model $\bM$ for which we want to find some $p = A(\bM)$ which upper bounds the DEC by $\Delta$.

Following the proof of Theorem~\ref{prop:decc_rho} with a slight modification, define $\op^a_t$ to be the algorithm which plays actions according to $p_t$ until some time $\tau_a$ when $\sum_{t = 1}^{\tau_a} g_{\bM}(\pi_t) \geq aT$. Then play $\hat{\pi}$ for the rest of the rounds, where $\hat{\pi}$ is such that $f_{\bM}(\hat{\pi}) \geq f_{\bM}(\pi_{\bM}) - \frac{\Delta}{10}$. Such a $\hat{\pi}$ can be found in time $R$ by Assumption~\ref{assm:eo}. Formally, we have
    \begin{align}
        \op^a_t(\cdot| \mathcal{H}_t) = \begin{cases}
        p_t(\cdot| \mathcal{H}_t) & t \leq \tau_a \\
        \hat{\pi} & t > \tau_a,
        \end{cases}
    \end{align}
    where $\tau_a$ is defined to be the stopping time which is the first value of $t$ for which $\sum_{s = 1}^{t} g_{\bM}(\pi_s) \geq aT$.
Since we have changed the strategy $\op_t^a$ from the proof of Theorem~\ref{prop:decc_rho}, by playing $\hat{\pi}$ instead of $\pi_{\bM}$, we need to check that the model $M_a$ maximizing the DEC under the distribution $\op_{\bM}^a$ is not $\bM$. Reproducing Equation~\eqref{eq:notbM}, we have
\begin{align}
     \mathbb{E}_{\pi \sim \op^a_{\bM}}[g_{\bM}(\pi)] &\leq \frac{1}{T}\left(aT + 1 + T(f_{\bM}(\pi_{\bM}) - f_{\bM}(\hat{\pi}))\right)\\
     &< \Delta + (1 - \gamma)f_{\bM}(\pi_{\bM}) + \frac{\Delta}{10}\\
     &= \frac{11}{10}\Delta + (1 - \gamma)f_{\bM}(\pi_{\bM}),
\end{align}
and thus $\bM$ cannot attain a $\gamma$-regret of at least $\frac{11}{10}\Delta$ under this distribution. Thus $M_a \in \cM$.

Note that for any given $a$, we can output a sample from each of $\{\op_t^a\}_{t = 1}^T$ in $\frac{R}{50C^2\log(C/\delta)} + R$ time, assuming query access to $f_{\bM}$. This is true because at each step $t$, we just need to call $p_t$ and also compute $g_{\bM}(\pi_{t - 1}) = f_{\bM}(\pi_{\bM}) - f_{\bM}(\pi_{t-1})$ to check whether we have reached the stopping time. Ultimately, we will want to output samples from the distribution $\op^a := \frac{1}{T}\sum_t\op^a_t$, but first we will need the algorithm $A$ to identify the correct choice of $a$.

Consider the $C$ intervals $I_i$ given in the proof of Theorem~\ref{prop:decc_rho}. We must have for at least one of these intervals $i$ that 
\begin{align}
    \mathbb{P}^{\bM}\left[\frac{1}{T}\sum_{t = 1}^{T} g_{\bM}(\pi_t) \in I_i\right] \geq \frac{1}{C},
\end{align}
where here the probability is over running $\{p_t\}$ in the environment $\bM$. Since we can simulate play in the environment $\bM$ via query access to $f_{\bM}$, our algorithm $A$ will first find a $i$ for which with probability $1 - \delta/2$, it holds that
\begin{align}
    \mathbb{P}^{\bM}\left[\frac{1}{T}\sum_{t = 1}^T g_{\bM}(\pi_t) \in I_i\right] \geq \frac{9}{10C}.
\end{align}

For $j = 1, \ldots, J$, let $X_i^j$ denote the indicator of the event that when we run the bandit algorithm $\{p_t\}$ for $t = 1$ to $T$, we have $\sum_{t = 1}^T g_{\bM}(\pi_t) \in I_i$. Then for $J = 50C^2\log(C/\delta)$, for each $i$, by Hoeffding's inequality, with probability $1 - \delta/C$, we have 
\begin{align}
    \left|\mathbb{P}^{\bM}\left[\frac{1}{T}\sum_{t = 1}^T g_{\bM}(\pi_t) \in I_i\right] - \frac{1}{J}\sum_j X_{i}^j\right| \leq \frac{1}{10C}.
\end{align}
Thus with probability $1 - \delta$, in time $J \frac{R}{50C^2\log(C/\delta)} + R = 2R$, we can identify some $i$ for which 
\begin{align}
    \mathbb{P}^{\bM}\left[\frac{1}{T}\sum_{t = 1}^T g_{\bM}(\pi_t) \in I_i\right] \geq \frac{9}{10C}.
\end{align}

The proof of Theorem~\ref{prop:decc_rho} now guarantees the following for this $i$: Let $a$ be the choice of $a$ associated with the interval $I_i$, that is, for $i \in [C - 1]$, we have $a = \trho \frac{i}{C}$, and for $i = C$, we have $a = \amax - \frac{2}{T} := \Delta + (1 - \gamma)f_{\bM}(\pi_{\bM}) - \frac{2}{T}$). Then it must be the case that $\op^a$ gives an upper bound of $\Delta$ on the $\mathsf{dec}^{\gamma}_{\eps}$; otherwise, it would be the case that there exists some $M_a \in \cM$ such that 
\begin{align}
\overline{\mathbb{P}}_a^{M_a}\left[\frac{1}{T}\sum_{t = 1}^{\tau_a} g_{M_a}(\pi_t) \geq \Delta + (1 - \gamma)f_{M_a}^* - \frac{\trho}{C} - (4 + 3C)\eps \right] \geq \frac{9}{10C} - \frac{1}{3C} - \frac{1}{3C} \geq \frac{1}{6C},
\end{align}
and thus by the logic the proof of Theorem~\ref{prop:decc_rho}, with probability $1 - \frac{1}{6C}$, the regret under $M_a$ is at least $\Delta - \frac{3\min(\rho, \Delta + 1 - \gamma)}{C}$.

This contradicts the hypothesis of our corollary, and thus it must be the case that $\op^a$ gives an upper bound of $\Delta$ on the $\mathsf{dec}^{\gamma}_{\eps}$. This proves the corollary. 
\end{proof}

\section{Proof of Theorems~\ref{thm:simpleub} and \ref{thm:ub}}\label{sec:ubproof}
\subsection{Proof of Theorem~\ref{thm:simpleub}}
We restate and prove Theorem~\ref{thm:simpleub}.
\simpub*
\begin{proof}[Proof of Theorem~\ref{thm:simpleub}]
Condition on the event that 
\begin{align}\label{eq:event}\sum_{t = 1}^{T} \mathbb{E}_{\pi \sim p_t}(f_{M^*}(\pi) - f_{\bM_t}(\pi))^2 \leq \mathsf{Est}(T, \delta),
\end{align}
which occurs with probability $1 - \delta$.

Our first goal will be to show that for any $t$, the gap $\gamma f_{M^*}(\pi_{M^*}) - f_{\bM_t}(\pi_{\bM_t})$ is small.
\begin{claim}\label{claim:gapUB}
For any $t$, we have 
\begin{align}
    \gamma f_{M^*}(\pi_{M^*}) - f_{\bM_t}(\pi_{\bM_t}) \leq \mathsf{dec}_{\eps_{\lceil{t/2}\lceil}}^{\gamma} + \frac{4}{\sqrt{t}}\sqrt{\mathsf{Est}(T, \delta)}.
\end{align}
\end{claim}
\begin{proof}
Fix $t$, and define
\begin{align}
   S := \{t/2 \leq  s < t: \mathbb{E}_{\pi \sim p_s}(f_{M^*}(\pi) - f_{\bM_s}(\pi))^2 \leq \eps_s^2 \}\}, 
\end{align} which is the set of rounds between $t/2$ and $t$ under which the estimator is close to $M^*$ on the queried distribution. 
We can  bound $|S| \geq \lceil{t/4}\rceil$ using Equation~\eqref{eq:event} as follows. If $|S| < \lceil{t/4}\rceil$, then then for at least $\lfloor{t - t/2}\rfloor - |S|$ rounds $s$ between $t/2$ and $t$, we have $(f_{M^*}(\pi) - f_{\bM_s}(\pi))^2 > \eps_s^2 \geq \eps_t^2$. Thus we would have 
\begin{align}\sum_{s = \lceil{t/2}\rceil}^{t -1} \mathbb{E}_{\pi \sim p_s}(f_{M^*}(\pi) - f_{\bM_s}(\pi))^2 > \left(\lfloor{t - t/2}\rfloor - (\lceil{t/4}\rceil - 1)\right)\eps^2_{t} \geq \max\left(1, \lfloor{t/6\rfloor}\right) \eps^2_{t} \geq \mathsf{Est}(T, \delta),
\end{align}
which contradicts Equation~\eqref{eq:event}.

For the sake of analysis, construct the distribution
$$\hat{p} := \frac{1}{|S|}\sum_{s \in S}p_s.$$

By Assumption~\ref{assm:oracle}, we can write $\bM_t$ as a convex combination of models in $\cM_{t-1}$, which we recall is defined in Assumption~\ref{assm:oracle} as $\{M \in \cM: \sum_{s = 1}^{t} \mathbb{E}_{p_s}[(f_{M^*}(\pi_s)- f_{\bM_s}(\pi_s))^2] \leq \mathsf{Est}(T, \delta)\}$. Let $\nu$ denote the distribution of this convex combination, such that $\bM_t = \mathbb{E}_{M \sim \nu} M$. Then we have
\begin{align}\label{eq:zero}
\gamma f_{M^*}(\pi_{M^*})& - f_{\widehat{M}_t}(\pi_{\widehat{M}_t})\\
&\leq \mathbb{E}_{\pi \sim \hat{p}}\left[\gamma f_{M^*}(\pi_{M^*}) - f_{\widehat{M}_t}(\pi)\right] \\
&= \mathbb{E}_{M \sim \nu}\mathbb{E}_{\pi \sim \hat{p}}\left[\gamma f_{M^*}(\pi_{M^*}) - f_{M}(\pi)\right]\\
&\leq \mathbb{E}_{M \sim \nu}\mathbb{E}_{\pi \sim \hat{p}}\left[\gamma f_{M^*}(\pi_{M^*}) - f_{M^*}(\pi)\right] + \mathbb{E}_{M \sim \nu}\mathbb{E}_{\pi \sim \hat{p}}|f_M(\pi) - f_{M^*}(\pi)|\\
&= \mathbb{E}_{\pi \sim \hat{p}}\left[\gamma f_{M^*}(\pi_{M^*}) - f_{M^*}(\pi)\right] + \mathbb{E}_{M \sim \nu}\mathbb{E}_{\pi \sim \hat{p}}|f_M(\pi) - f_{M^*}(\pi)|.
\end{align}

Now for each $s \in S$, since by definition of $S$, we have $\mathbb{E}_{\pi \sim p_s}(f_{M^*}(\pi) - f_{\bM_s}(\pi))^2 \leq \eps_s^2$. Thus since $p_s$ is chosen to minimize the DEC, we have 
\begin{align}
    \mathbb{E}_{\pi \sim p_s}\left[\gamma f_{M^*}(\pi_{M^*}) - f_{M^*}(\pi)\right] \leq \mathsf{dec}_{\epsilon_s}^{\gamma}.
\end{align}

Thus
\begin{align}\label{eq:firstterm}
    \mathbb{E}_{\pi \sim \hat{p}}\left[\gamma f_{M^*}(\pi_{M^*}) - f_{M^*}(\pi)\right] = \frac{1}{|S|}\sum_{s \in S}\mathbb{E}_{\pi \sim p_s}\left[\gamma f_{M^*}(\pi_{M^*}) - f_{M^*}(\pi)\right] \leq  \frac{1}{|S|}\sum_{s \in S}\mathsf{dec}_{\epsilon_s}^{\gamma} \leq \mathsf{dec}_{\epsilon_{\lceil{t/2}\rceil}}^{\gamma},
\end{align}
where the final inequality holds because all $s \in S$ are at least $t/2$.

We now consider the second term $\mathbb{E}_{M \sim \nu}\mathbb{E}_{\pi \sim \hat{p}}|f_M(\pi) - f_{M^*}(\pi)|$. For any $M \in \cM_{t -1}$, we have
\begin{align}\label{eq:secondterm}
\mathbb{E}_{\pi \sim \hat{p}}|f_M(\pi) - f_{M^*}(\pi)| &\leq \sqrt{\mathbb{E}_{\hat{p}}[(f_{M^*}(\pi)- f_{M}(\pi))^2]}\\
&= \sqrt{\frac{1}{|S|}\sum_{s \in S}\mathbb{E}_{p_s}[(f_{M^*}(\pi)- f_{M}(\pi))^2]} \\
&\leq  \sqrt{\frac{1}{|S|}\sum_{s \in S}\mathbb{E}_{p_s}[(f_{M^*}(\pi)- f_{\bM_s}(\pi))^2]} + \sqrt{\frac{1}{|S|}\sum_{s \in S}\mathbb{E}_{p_s}[(f_{\bM_s}(\pi)- f_{M}(\pi))^2]}\\
&\leq \sqrt{\frac{4}{t}\sum_{s < t}\mathbb{E}_{p_s}[(f_{M^*}(\pi)- f_{\bM_s}(\pi))^2]} + \sqrt{\frac{4}{t}\sum_{s < t}\mathbb{E}_{p_s}[(f_{\bM_s}(\pi)- f_{M}(\pi))^2]}\\
&\leq \frac{4}{\sqrt{t}}\sqrt{\mathsf{Est}(T, \delta)},
\end{align}
Here the first inequality follows from Jenson's inequality, the second inequality follows from triangle inequality and the third from the fact that $|S| \geq t/4$. The final inequality follows from the fact that both $M^*$ and $M$ are in $\cM_{t-1}$, and by definition of $\cM_{t-1}$, for any $\tM \in \cM_{t - 1}$, we have $\sum_{s < t}\mathbb{E}_{p_s}[(f_{M^*}(\pi)- f_{\tM}(\pi))^2] \leq \mathsf{Est}(T, \delta)$.

Plugging Equations~\ref{eq:firstterm} and \ref{eq:secondterm} into Equation~\ref{eq:zero}, we achieve the claim:
\begin{align}
    \gamma f_{M^*}(\pi_{M^*})& - f_{\widehat{M}_t} \leq \mathsf{dec}_{\epsilon_s}^{\gamma} + \frac{4}{\sqrt{t}}\sqrt{\mathsf{Est}(T, \delta)}
\end{align}
\end{proof}

We are now ready to analyze the total $\gamma$-regret. We have
\begin{align}\label{eq:3part}
    \sum_{t = 1}^T \gamma f_{M^*}(\pi_{M^*})& - \mathbb{E}_{p_t}f_{M^*}(\pi)\\
    &\leq \sum_{t = 1}^T  \gamma f_{M^*}(\pi_{M^*}) - \mathbb{E}_{p_t}f_{\bM_t}(\pi) + \sum_{t = 1}^T  \mathbb{E}_{p_t}|f_{\bM_t}(\pi) - f_{M^*}(\pi)|\\
    &= \sum_{t = 1}^T \gamma f_{M^*}(\pi_{M^*}) - f_{\bM_t}(\pi_{\bM_t}) + \sum_{t = 1}^T f_{\bM_t}(\pi_{\bM_t}) - \mathbb{E}_{p_t}f_{\bM_t}(\pi) + \sum_{t = 1}^T  \mathbb{E}_{p_t}|f_{\bM_t}(\pi) - f_{M^*}(\pi)|\\
\end{align}
For the first term, by Claim~\ref{claim:gapUB}, we have,
\begin{align}
    \sum_{t = 1}^T \gamma f_{M^*}(\pi_{M^*}) - f_{\bM_t}(\pi_{\bM_t}) \leq \sum_{t = 1}^T \left(\mathsf{dec}_{\eps_{\lceil{t/2}\rceil}}^{\gamma} + \frac{4}{\sqrt{t}}\sqrt{\mathsf{Est}(T, \delta)}\right) \leq  \left(\sum_{t = 1}^T \mathsf{dec}_{\eps_{\lceil{t/2}\rceil}}^{\gamma}\right) + 8\sqrt{T\mathsf{Est}(T, \delta)}
\end{align}
For the second term in Equation~\eqref{eq:3part}, by definition of the fact that the distribution $p_t$ minimizes the $\gamma$-DEC, we have
\begin{align}
    \sum_{t = 1}^T f_{\bM_t}(\pi_{\bM_t}) - \mathbb{E}_{p_t}f_{\bM_t}(\pi) \leq \sum_{t = 1}^T (1 - \gamma)f_{\bM_t}(\pi_{\bM_t}) + 
\mathsf{dec}_{\eps_{t}}^{\gamma}.
\end{align}
Finally for the third term in Equation~\eqref{eq:3part}, we have by Jensen's inequality (applied to the distribution $\frac{1}{T}\sum_{t = 1}^Tp_t$) that 
\begin{align}
    \sum_{t = 1}^T  \mathbb{E}_{p_t}|f_{\bM_t}(\pi) - f_{M^*}(\pi)| \leq \sqrt{T \sum_{t = 1}^T  \mathbb{E}_{p_t}(f_{\bM_t}(\pi) - f_{M^*}(\pi))^2} \leq \sqrt{T \mathsf{Est}(T, \delta)}. 
\end{align}
 
Putting these three bounds together, and using the fact that $f_{\bM_t}(\pi_{\bM_t}) \leq \sup_{M \in \cM} f_M(\pi_M) \leq 1$, we achieve
\begin{align}
    \mathsf{Reg}_{\gamma}(T) \leq (1 - \gamma)T + 2\sum_{t = 1}^T \mathsf{dec}^{\gamma}_{\eps_{\lceil{t/2}\rceil}}  + 9\sqrt{T\mathsf{Est}(T, \delta)}. 
\end{align}

Under Assumption~\ref{assm:growth}, using Lemma~\ref{lemma:holder}, we have 
\begin{align}
    \sum_{t = 1}^T \mathsf{dec}^{\gamma}_{\eps_{\lceil{t/2}\rceil}} \leq 2\sum_{t = 1}^{T/2} \mathsf{dec}^{\gamma}_{\eps_{t}} \leq 2\sum_{t = 1}^{T/2} \Delta_{\eps_{t}} \leq 2T\Delta_{\eps_{T/2}}.
\end{align}
Thus we have
\begin{align}
    \mathsf{Reg}_{\gamma}(T) \leq (1 - \gamma)T + 2T\Delta_{\sqrt{\frac{24\mathsf{Est}(T, \delta)}{T}}}  + 9\sqrt{T\mathsf{Est}(T, \delta)}. 
\end{align}
\end{proof}

\subsection{Proof of Theorem~\ref{thm:ub}}
We restate and prove Theorem~\ref{thm:ub}.
\bestub*
\begin{algorithm}[t]
\caption{}\label{alg:best}
\textbf{input:}  Parameter $\gamma \in [0, 1]$.
% \textbf{algorithm:}
\begin{algorithmic}[0]
    \State Initialize epoch index $i = 1$. Let $T_0 := 0$.
    \State Initialize an online estimation oracle for the model class $\cM$.
    \State Initialize $m_1 = \sup_{M \in \cM}f_M(\pi_M)$.
    \For{$t=1,\ldots,T$}
    \Statex \algcommentbig{Obtain estimator $\bM_t$ from the Estimation oracle.}
        \State Let $\bM_t = \mathsf{AlgEst}(\mathcal{H}_{t - 1})$.
    \Statex \algcommentbig{Choose $\eps_t$.}
    \State Approximate the maximum as $v_t = m_i - \frac{10}{\gamma}\beta_{t - T_{i-1}}$, where $\beta_s := \Delta_{\sqrt{\frac{\mathsf{Est}(T, \delta) + 1}{s}}}$. 
    \State Let $\eps_t := \min_{\eps}: g(\eps, \bM_t, v_t) \leq 0$ where $g$ is defined in Definition~\ref{def:G}.
    \Statex \algcommentbig{Play an action from the distribution that minimizes $\mathsf{dec}^{\gamma}_{\eps_t}$.}
   \State Play $\pi_t \sim p_t$ where $p_t := p(\eps_t, \bM_t)$ is the distribution guaranteed by Assumption~\ref{assm:cont}.
   \Statex \algcommentbig{Refine the feasible model class to exclude models that cannot be $M^*$.}
    \State Let $\cM_t := \{M \in \cM: \sum_{s = 1}^t \mathbb{E}_{\pi \sim p_s}(f_M(\pi) - f_{\bM_s}(\pi))^2 \leq \mathsf{Est}(T, \delta)\}$
    \Statex, \algcommentbig{End the epoch if the feasible class doesn't contain a large enough maximum.}
       \If {$\sup_{M \in \cM_t} f_M(\pi_M) < m_i - \frac{10}{\gamma}\beta_{t + 1 - T_{i - 1}}$}
       \State Start a new epoch: $i \leftarrow i + 1$
       \State Record the epoch end time as $T_i := t$, and the epoch length as $L_i = T_i - T_{i - 1}$.
       \State Let $m_i = \sup_{M \in \cM_t}f_M(\pi_M)$.
       \EndIf
    \EndFor
\end{algorithmic}
\end{algorithm}

We use the following lemma in the proofs of Proposition~\ref{thm:ubprop} and Theorem~\ref{thm:ub}.

\begin{lemma}\label{lemma:holder}
If $\Delta_{\eps} := C\eps^{c}$ for some $0 \leq c \leq 1$, then for any integer $T$ and any $Q > 0$, we have
\begin{align}
    \sup_{x \in \mathbb{R}^T, \|x\|^2 \leq Q}\sum_{t = 1}^T \Delta_{x_t} \leq T\Delta_{\sqrt{\frac{Q}{T}}}.
\end{align}
Further, we have
\begin{align}
    \sum_{t = 1}^T \Delta_{\sqrt{\frac{Q}{t}}} \leq 2T\Delta_{\sqrt{\frac{Q}{T}}}.
\end{align}
\end{lemma}
\begin{proof}
The first inequality follows from applying Holder's inequality, which states that for $X, Y \in \mathbb{R}^T$ and $\frac{1}{p} + \frac{1}{q} = 1$, we have $\frac{1}{T}\sum_t X_tY_t \leq \left(\frac{1}{T}\sum_t X_t^p\right)^{\frac{1}{p}}\left(\frac{1}{T}\sum_t Y_t^q\right)^{\frac{1}{q}}$. Apply Holder's with $X_t = \Delta_{x_t} = x_i^c$, $Y_t = 1$, and $p = \frac{2}{c}$.

For the second inequality, we can bound
\begin{align}
     \sum_{t = 1}^T \Delta_{\sqrt{\frac{Q}{t}}} &= C\sum_{t = 1}^T \left(\frac{Q}{t}\right)^{c/2}\\
     &\leq C\int_{x = 0}^T \left(\frac{Q}{x}\right)^{c/2}dx\\
     &= Q^{c/2}\frac{1}{1 - c/2}T^{1 - c/2}\\
     &\leq 2T \left(\frac{Q}{T}\right)^{c/2}\\
     &= 2T \Delta_{\sqrt{\frac{Q}{T}}}.
\end{align}
\end{proof}

\begin{proof}[Proof of Theorem~\ref{thm:ub}]
Throughout the proof, we assume we have conditioned on the event that 
\begin{align}\label{eq:est}
    \sum_{t = 1}^T \mathbb{E}_{\pi \sim p_t}(f_{M^*}(\pi) - f_{\bM_t}(\pi))^2 \leq \mathsf{Est}(T, \delta),
\end{align}
which occurs with probability $1 - \delta$.

Now we sum up the regret over the $T$ rounds. At each epoch $i$, by the first item of Lemma~\ref{lemma:G}, we have
\begin{align}
    \sum_{t = T_{i - 1} + 1}^{T_i}\mathbb{E}_{p_t}f_{\bM_t}(\pi) &\geq \sum_{t = T_{i - 1} + 1}^{T_i} \gamma v_t - 2\eps_t -  \Delta_{\eps_t} \\
    &= \sum_{t = T_{i - 1} + 1}^{T_i} \gamma \left(m_i - \frac{10}{\gamma}\beta_{T_i - T_{i -1}}\right) - 2\eps_t - \Delta_{\eps_t}\\ 
    &= \gamma L_i m_i - \sum_{t = T_{i - 1} + 1}^{T_i}  \left(10 \beta_{t - T_{i - 1}} + 2\eps_t + \Delta_{\eps_t}\right).
\end{align}

Similar to the proof of Proposition~\ref{thm:ubprop}, we next want to show that the $\eps_t$ cannot be too large. Let $M_i$ be any model such that $f_{M_i}(\pi_{M_i}) \geq v_{T_i}$, and 
\begin{align}\label{Miest}
    \sum_{t = 1}^{T_i - 1}\mathbb{E}_{\pi \sim p_t}(f_{M_i}(\pi) - f_{\bM_t}(\pi))^2 \leq \mathsf{Est}(T, \delta).
\end{align}
Such a model necessarily exists since the epoch ended at time $T_i$, meaning that at time $T_i - 1$, there was a some model satisfying the above. Thus by item 2 of Lemma~\ref{lemma:G}, at each step $t$, we have $\mathbb{E}_{p(\eps_t, \bM_t)}[(f_{\bM}(\pi) - f_{M_i}(\pi))^2] \geq \eps_t^2$. Thus by Equation~\eqref{Miest}, we have 

\begin{align}
    \sum_{t = T_{i - 1}}^{T_i} \eps_t^2 \leq \sum_{t = T_{i-1}}^{T_i}\mathbb{E}_{p(\eps_t, \bM_t)}[(f_{\bM_t}(\pi) - f_{M_i}(\pi))^2] \leq \mathsf{Est}(T, \delta) + (f_{\bM_{T_i}}(\pi) - f_{M_i}(\pi))^2 \leq \mathsf{Est}(T, \delta) + 1 
\end{align}

Thus by Assumption~\ref{assm:growth} and Lemma~\ref{lemma:holder}, with $L_i = T_i - T_{i - 1}$, we have
\begin{align}
    \sum_{t = T_{i - 1} + 1}^{T_i} \Delta_{\eps_t} \leq \sup_{x \in \mathbb{R}^{L_i}, \|x\|^2 \leq \mathsf{Est}(T, \delta) + 1}\sum_i \Delta_{x_i} \leq L_i \beta_{L_i},
\end{align}
and similarly by Lemma~\ref{lemma:holder},
\begin{align}
    \sum_{t = T_{i - 1} + 1}^{T_i}  \beta_{t - T_{i - 1}} \leq 2L_i\beta_{L_i}.
\end{align}
Thus in total, since $\eps \leq \Delta_{\eps}$ (see Assumption~\ref{assm:growth}), we have
\begin{align}\label{eq:epoch}
     \sum_{t = T_{i - 1} + 1}^{T_i}\mathbb{E}_{p_t}f_{\bM_t}(\pi) &\geq \gamma L_i m_i - (10*2 + 3)L_i\beta_{L_i} = \gamma L_i m_i - 23L_i\beta_{L_i}.
\end{align}

Now consider the total $\gamma$-regret we accumulate after all $T$ rounds. Let $I$ be the total number of epochs. Then we have 
\begin{align}\label{eq:original}
    \gamma T f_{M^*}(\pi_{M^*}) - \sum_{t = 1}^{T}\mathbb{E}_{p_t}f_{M^*}(\pi) &\leq \gamma T f_{M^*}(\pi_{M^*}) - \sum_{t = 1}^{T}\mathbb{E}_{p_t}f_{\bM_t}(\pi) + \sum_{t = 1}^T\mathbb{E}_{p_t}|f_{\bM_t}(\pi) - f_{M^*}(\pi)|\\
    &\leq \gamma T f_{M^*}(\pi_{M^*}) - \sum_{t = 1}^{T}\mathbb{E}_{p_t}f_{\bM_t}(\pi) + \sqrt{T\mathsf{Est}(T, \delta)}\\
    &\leq \gamma \sum_{i = 1}^I L_i (f_{M^*}(\pi_{M^*}) - m_i) + 23 L_i\beta_{L_i} +  \sqrt{T\mathsf{Est}(T, \delta)}.
\end{align}
Here in the second line we applied Equation~\eqref{eq:est} and Jenson's inequality to the distribution $\frac{1}{T}\sum_t p_t$, and in the third line we plugged in Equation~\eqref{eq:epoch}.

Observe crucially that for all $i \leq I - 1$, we must have $m_{i + 1} \leq m_i - \frac{10}{\gamma}\beta_{L_i + 1}$ because we ended the epoch precisely because there was no model left in $\tilde{\cM}_t$ with small estimation error with a maximum above $m_i - \frac{10}{\gamma}\beta_{L_i + 1}$. Using Assumption~\ref{assm:growth}, we can simplify to get $m_{i + 1} \leq m_i - \frac{9}{\gamma}\beta_{L_i}$. Indeed, the epoch must last at least 99 rounds since $\frac{10}{\gamma}\beta_{100} \geq 1$, $\beta_{L_i + 1} \geq 0.99 \beta_{L_i}$ for $L_i \geq 99$.

Further, we must have $m_{I} \geq f_{M^*}(\pi_{M^*})$ because we have conditioned on the event that $\sum_{t = 1}^T \mathbb{E}_{\pi \sim p_t}(f_{M^*}(\pi) - f_{\bM_t}(\pi))^2 \leq \mathsf{Est}(T, \delta)$.

If $I$ is a constant, it is easy to check that this expression is sufficiently large to guarantee a $\gamma$-regret of $O(\beta_T)$. For instance, if $I = 1$, then this follows immediately. In the rest of the proof, our goal will be to show that we can essentially reduce to the case when $I \leq O(\log(T))$.

Consider maximizing the expression 
\begin{align}\label{eq:expr}
    \gamma \sum_{i = 1}^I L_i (f_{M^*}(\pi_{M^*}) - m_i) + 23 L_i\beta_{L_i}
\end{align}
over any choice of feasible integer $I$, integers $\{L_i\}_{i \in [I]}$ and $\{m_i\}_{i \in I} \in [0, 1]$ such that $\sum_{i = 1}^I L_i = T$, and $m_{I} \geq f_{M^*}(\pi_{M^*})$ and $m_{i + 1} \leq m_i -\frac{9}{\gamma}\beta_{L_i}$.

We make the follow two claims about the optimal solution.

\begin{claim}
    In any optimal solution, we will have $m_{i + 1} = m_i -\frac{9}{\gamma}\beta_{L_i}$ exactly, and $m_I = f_{M^*}(\pi_{M^*})$.  Thus in any optimal solution, we have $m_i - f_{M^*}(\pi_{M^*}) = \sum_{j = i}^{I - 1} \frac{9}{\gamma}\beta_{L_j}$.
\end{claim}
\begin{proof}
    Working backwards from $i = I$ to $1$, we can see that decreasing the $m_{i}$ as much as possible clearly increases the objective.
\end{proof}

Thus we now care about upper bounding the objective
\begin{align}\label{eq:obj}
    -\sum_i L_i\sum_{j = i}^{I - 1} 9\beta_{L_j} + 23 L_i\beta_{L_i} &\leq - \sum_{i=1}^{I - 1} \left(9L_i \sum_{j = i + 1}^{I - 1} \beta_{L_j} + 14L_i\beta_{L_i}\right) + 23L_I\beta_{L_I}\\
    &\leq - \sum_{i=1}^{I - 1} \left(9L_i \sum_{j = i + 1}^{I - 1} \beta_{L_j} + 14L_i\beta_{L_i}\right) + 23T\beta_T,
\end{align}
where the last inequality follows from Assumption~\ref{assm:growth}.

\begin{claim}\label{claim:combine}
    If for some $k \in [I - 2]$, we have $L_{k + 1} \leq 1.1L_k$, then we can increase the value of the above upper bound on the objective by combining epochs $k$ and $k + 1$ into a single epoch of length $L_k + L_{k + 1}$.
\end{claim}
\begin{proof}
For convenience, let $S_i := \sum_{j = i + 1}^{I - 1} \beta_{L_j}$.

We have two terms that depend on the $L_i$ in the above objective, $-9\sum_i L_i S_i$ and $14 \sum_i L_i\beta_{L_i}$.

Consider doing this change where we combine $L_k$ and $L_{k + 1}$.

For the first term involving the $S_i$, for all the terms with $i \leq k - 1$, the value $L_iS_i$ will only decrease from combining, since trivially, $\beta_{L_k} + \beta_{L_{k + 1}} \geq \beta_{L_k + L_{k + 1}}$. For all the terms with $i \geq k + 1$, the value of $L_iS_i$ will be unaffected.

In combining $L_k$ and $L_{k + 1}$, we will lose the term $L_k S_k +L_{k + 1}S_{k+1}$ and gain $(L_k + L_{k + 1})S_{k + 1}$.

Thus by combining the two epochs, the total objective increases by at least 
\begin{align}
   9 \left(L_k S_k +L_{k + 1}S_{k+1}\right) - 9(L_k + L_{k + 1})S_{k + 1} - 14 L_k \beta_{L_k} - 14 L_{k + 1}\beta_{L_{k + 1}} + 14(L_k + L_{k + 1}) \beta_{L_k + L_{k+1}}.
\end{align}
Here the final 3 terms arise from the change to the $14 \sum_i L_i\beta_{L_i}$ term in the objective. Simplifying, this increase equals

\begin{align}
    9L_k&(S_k - S_{k + 1}) - 14 L_k \beta_{L_k} - 14L_{k + 1}\beta_{L_{k + 1}} + 14(L_k + L_{k + 1}) \beta_{L_k + L_{k+1}}\\
    &= 9L_k\beta_{L_{k+1}} - 14 L_k \beta_{L_k} - 14L_{k + 1}\beta_{L_{k + 1}} + 14(L_k + L_{k + 1}) \beta_{L_k + L_{k+1}}\\
    &= 9\left(L_k\beta_{L_{k+1}} - \alpha L_k \beta_{L_k} - \alpha L_{k + 1}\beta_{L_{k + 1}} + \alpha (L_k + L_{k + 1}) \beta_{L_k + L_{k+1}}\right),
\end{align}
where $\alpha = 14/9$. 

\begin{claim}
    if $L_{k + 1}/L_k \leq 1.1$, then $$9\left(L_k\beta_{L_{k+1}} - \alpha L_k \beta_{L_k} - \alpha L_{k + 1}\beta_{L_{k + 1}} + \alpha (L_k + L_{k + 1}) \beta_{L_k + L_{k+1}}\right) \geq 0.$$ 
\end{claim}
\begin{proof}
We will analyze this by plugging Assumption~\ref{assm:growth}, which states that $\Delta_{\eps} = C\eps^c$ for some $c \leq 1$. Recall that $\beta_{\ell} = \Delta_{\sqrt{\frac{\mathsf{Est}(T, \delta)}{\ell}}} = C\ell^{-c/2}(\mathsf{Est}(T, \delta))^{c/2}$.

To simplify, let $L = L_k$, and let $x = {L_{k + 1}}/L_k$. So we can lower bound the increase as 
\begin{align}
B &:= 9L\left(\beta_{Lx} - \alpha  \beta_{L} - \alpha x\beta_{Lx} + \alpha (1 + x) \beta_{L(1 + x)}\right)\\
    &= 9LC\left(\frac{\mathsf{Est}(T, \delta)}{L}\right)^{c/2}\left(x^{-c/2} - \alpha  - \alpha x x^{-c/2} + \alpha (1 + x) (1 + x)^{-c/2}\right)\\
    &= 9LC\left(\frac{\mathsf{Est}(T, \delta)}{L}\right)^{c/2}\left(x^{-c/2} - \alpha  - \alpha x^{1-c/2} + \alpha (1 + x)^{1-c/2}\right).
\end{align}

Applying Holder's inequality as in Lemma~\ref{lemma:holder}, since $c \leq 1$ we have 
\begin{align}
    1 + x^{1-c/2} = 1^{1 -c/2} + x^{1-c/2} \leq (1 + x)\left(\frac{1 + x}{2}\right)^{-c/2},
\end{align}
and thus, we have
\begin{align}
    B &\geq 9LC\left(\frac{\mathsf{Est}(T, \delta)}{L}\right)^{c/2}\left(x^{-c/2} - \alpha (1 + x)\left(\frac{1 + x}{2}\right)^{1 - c/2} + \alpha (1 + x)^{1-c/2}\right)\\
    &= 9LC\left(\frac{\mathsf{Est}(T, \delta)}{L}\right)^{c/2}\left(x^{-c/2} + \alpha \left(-2^{c/2} + 1\right)(1 + x)^{1-c/2})\right)\\
    &= 9LC\left(\frac{\mathsf{Est}(T, \delta)}{L}\right)^{c/2}x^{-c/2}\left(1 + \alpha \left(-2^{c/2} + 1\right)(1 + x)\left(\frac{x}{1 + x}\right)^{c/2}\right)
\end{align}
Now we want to show that this is positive for $x \leq 1.1$. Observe that for $c \geq 0$, the expression  
\begin{align}
    1 + \alpha \left(-2^{c/2} + 1\right)(1 + x)\left(\frac{x}{1 + x}\right)^{c/2}
\end{align} is non-increasing in $x$ for $x \geq 0$. Thus it suffices to show that when we plug in $x = 1.1$, that expression is positive. Plugging in $x = 1.1$, this evaluates to 
\begin{align}
    1 + \alpha \left(-2^{c/2} + 1\right)2.1\left(\frac{1.1}{2.1}\right)^{c/2} \geq 1 + 3.3\left(-2^{c/2} + 1\right)\left(\frac{1.1}{2.1}\right)^{c/2}
\end{align}
One can check that the above expression is decreasing in $c$, and thus since $c \leq 1$, the expression is at least $1 + 3.3\left(-2^{1/2} + 1\right)\left(\frac{1.1}{2.1}\right)^{1/2} > 0$.
\end{proof}
It follows from this claim that if $x = L_{k + 1}/L_k < 1.1$, it will increase the objective to combine the two epochs. This yields the claim.

\end{proof}

It follows from Claim~\ref{claim:combine} that the maximum of the objective in \ref{eq:obj} is achieved when $I \leq \log_{1.1}(T) \leq 10.5\log(T)$. 

Now recalling that $\beta_s = \Delta_{\sqrt{\frac{\mathsf{Est}(T, \delta) + 1}{s}}}$, we have 

\begin{align}
    \sum_{i = 1}^{I-1} 14 L_i\beta_{L_i} + 23T\beta_T &= \sum_{i = 1}^{I - 1}\sum_{t = T_{i -1}}^{T_i} \Delta_{\sqrt{\frac{\mathsf{Est}(T, \delta) + 1}{L_i}}} + 23T\beta_T\\
    &\leq \sup_{x \in \mathbb{R}^T, \|x\|^2 \leq I (\mathsf{Est}(T, \delta) + 1)}\sum_{t = 1}^T 14 \Delta_{x_t}  + 23T\beta_T\\
    &\leq 14T\Delta_{\sqrt{\frac{I\mathsf{Est}(T, \delta) + 1}{T}}} + 23T\beta_T\\
    &\leq 37T\Delta_{\sqrt{\frac{11\log(T)\mathsf{Est}(T, \delta)}{T}}}\\
    &\leq 500\log(T)\Delta_{\sqrt{\frac{\mathsf{Est}(T, \delta)}{T}}}.
\end{align}
The second inequality and the final inequality follows from Assumption~\ref{assm:growth}. Thus we have that the objective in Equation~\eqref{eq:obj} and thus also the objective in Equation~\eqref{eq:expr} is upper bounded by $500\log(T)\Delta_{\sqrt{\frac{\mathsf{Est}(T, \delta)}{T}}}$. 

Combining with Equation~\eqref{eq:original} yields the theorem.

\end{proof}

\section*{Acknowledgements} We acknowledge the support from the ARO through award W911NF-21-1-0328 and from the DOE through award DE-SC0022199. Thanks to Kefan Dong for useful feedback and conversations.

\bibliographystyle{plainnat}
\bibliography{arxiv2}

\end{document}